\newcommand{\x}{\mathbf{x}}
\newcommand{\p}{\mathbf{p}}
\newcommand{\n}{\mathbf{n}}
\newcommand{\bv}{\mathbf{v}}
\newcommand{\bc}{\mathbf{c}}
\newcommand{\mF}{\mathcal{F}}
\newcommand{\tEet}{\widetilde{E}_{2,\varepsilon}}
\newcommand{\de}{\delta_{\varepsilon}}
\DeclareMathOperator*{\argmin}{arg\,min}
\newcommand{\bflambda}{\bm{\lambda}}
\newtheorem{remark}{Remark}[section]
\newtheorem{thm}{Theorem}[section]
\newtheorem{prop}{Proposition}[section]
\begin{document}

\title{Curvature Regularized Surface Reconstruction from  Point Cloud}
\author{
Yuchen He\thanks{School of Mathematics, Georgia Institute of Technology, 686 Cherry Street, Atlanta, GA 30332-0160, USA. Email: {\bf yhe306@gatech.edu}}
\and
Sung Ha Kang\thanks{School of Mathematics, Georgia Institute of Technology, 686 Cherry Street, Atlanta, GA 30332-0160, USA. Email: {\bf kang@math.gatech.edu}}
\and
Hao Liu\thanks{The corresponding author. School of Mathematics, Georgia Institute of Technology, 686 Cherry Street, Atlanta, GA 30332-0160, USA. Email: {\bf hao.liu@math.gatech.edu}}
}

\maketitle

\begin{abstract}
We propose a variational functional with a curvature constraint to reconstruct implicit surfaces from point cloud data.  In the point cloud data, only  locations are assumed to be given, without any normal direction nor any curvature estimation.  The minimizing functional balances two terms,  the distance function from the point cloud to the surface and the mean curvature of the surface itself.   We explore both $L_1$ and $L_2$ norm for the curvature constraint.
With the added curvature constraint, the computation becomes particularly challenging.   
We propose two efficient algorithms. The first algorithm is a novel Operator Splitting Method (OSM).  It replaces the original high-order PDEs by a decoupled PDE system, which is solved by a semi-implicit method.  We also discuss an approach based on an Augmented Lagrangian Method (ALM).  The proposed model shows robustness against noise and recovers concave features and corners better compared to models without curvature constraint.  Numerical experiments in two and three dimensional data sets, noisy, and sparse data are presented to validate the model. Experiments show that the operator splitting semi-implicit method is flexible and robust.
\end{abstract}

\section{Introduction}

In industrial and scientific fields~\cite{gomes20143d,khan2018single}, surface reconstruction from point cloud data is a critical step in informative data visualization and successful high-level data processing.  Effective methods to reconstruct a continuous surface from finitely many points can reduce the burden in data transmission  and facilitate shape manipulations. One of the main goals of surface reconstruction is to render a meaningful and reliable surface which captures the geometrical features of the point cloud.
In this paper,  we use the celebrated level set function \cite{OS88} to represent the surface.  For  $d\in\mathbb{N}$, a $d$-dimensional implicit surface is represented by the set
\[ \Gamma= \{x\in\mathbb{R}^{d+1}\mid \phi(x)=0\},\]
for a level set function $\phi:\mathbb{R}^{d+1}\to\mathbb{R}$.  Implicit surfaces enjoy the flexibility in topological changes, and via $\phi$, one can easily derive and express geometric features of $\Gamma$, such as normals, mean curvature, and Gaussian curvature.

In \cite{ZOF01,zhao2000implicit}, the authors proposed the minimal surface model by interpreting the reconstructed surface as an elastic membrane attached to the given point cloud.   It finds the zero-level set surface $\Gamma$ that minimizes the following energy:
\begin{eqnarray}
  E_s(\Gamma)=\left(\int_{\Gamma} d^s(\x)d\sigma\right)^{\frac{1}{s}}=\left(\int_\Omega d^s(\x)|\nabla\phi(\x)|\delta(\phi(\x))\,d\x\right)^{\frac{1}{s}}\;.
  \label{eq.energy0}
\end{eqnarray}
Here, $d\sigma$ is the area element, $s>0$ is an exponent coefficient, $d(\x)=\inf_{\mathbf{y}\in\mathcal{D}}\{|\x-\mathbf{y}|\}$ measures the point-to-point-cloud distance, and $\mathcal{D}$ is the set of point cloud data. The energy~(\ref{eq.energy0}) minimizes the surface area weighted by the distance from surface to point cloud.

There are a number of related works using implicit surface reconstruction: a data-driven logarithmic prior for noisy data was considered in \cite{SK04}, surface tension was used to enrich the Euler-Lagrange equations in \cite{HM16}, and principal component analysis was used to reconstruct curves which are embedded in sub-manifolds in \cite{liu2017level}. Fast algorithms solving (\ref{eq.energy0}) was proposed in \cite{he2019fast}.  In \cite{LPZ13}, convexified image segmentation model with a fast algorithm was proposed for implicit surface reconstruction for point clouds. An efficient algorithm incorporating heat kernel convolution and thresholding was introdcued in \cite{wang2020efficient}. In \cite{EZL*12}, an efficient algorithm for level set method which  preserves distance function was proposed.
Open surface reconstruction using graph-cuts was proposed in \cite{wan2012reconstructing}, where reconstruction of open surface based on domain decomposition was also proposed.
 In \cite{LWQ09}, the authors proposed a variational model consisting of the distance, the normal direction, and the smoothness terms.  In \cite{lai2013ridge}, a ridge and corner preserving model based on vectorial TV regularization for surface restoration was introduced. In  \cite{LPZ13}, the authors defined the surface via a collection of anisotropic Gaussians centered at each entry of the input point cloud, and used TVG-L1 model for minimization.  A similar strategy addressing an $\ell_0$ gradient regularization model can be found in \cite{LLY*18}.
%

We propose a variational functional with a curvature constraint to improve surface reconstruction.   One such curvature constraint is the squared mean curvature, $\kappa^2$, such as Euler's elastica minimization model \cite{shen2003euler}.  In addition to image inpainting, it has been applied to denoising~\cite{Deng2019},
segmentation problem \cite{zhu2013image}, and others.
For any closed surface $\Gamma$ in $\mathbb{R}^3$, the bending energy,
\[ \int_\Gamma \kappa^2\,d\sigma, \] where $d\sigma$ denotes the surface area element, is a conformal invariant~\cite{blaschke1929topologische}, and it has a universal lower bound~\cite{willmore1968mean}:
$ \int_\Gamma \kappa^2\,d\sigma\geq 4\pi. $
%
Another curvature constraint we consider is the absolute mean curvature $|\kappa|$, i.e., $\int_\Gamma |\kappa|\,d\sigma$, which preserves sharp edges and corners in various cases, e.g.,  denoising \cite{Deng2019, zhu2012image},  and segmentation \cite{bae2017augmented}.
As a related work, in \cite{SWT*12}, graph cuts algorithm was explored for a functional with the absolute mean curvature term.
In~\cite{droske2010higher},  a variation using a function which is sensitive to large curvature was considered.
Other works used weighted mean curvature~\cite{gong2019weighted}, principle curvature~\cite{qiao2013principle}, Gaussian curvature~\cite{gong2013local}, Menger curvature~\cite{goldluecke2011introducing}, and other high-order geometrical information, e.g., conformal factor~\cite{LWCC10} and elastic ratio~\cite{schoenemann2011elastic}.

Optimizing a curvature regularized functional is a non-convex and non-linear problem. Computation of such functional is  particularly challenging.  There are a number of different approaches to design a fast and efficient algorithm, e.g., multigrid method~\cite{brito2008multigrid}, graph-cut algorithm~\cite{SWT*12},  homotopy method~\cite{yang2012homotopy} and convex relaxation~\cite{bredies2015convex,schoenemann2012linear}, just to name a few. A semi-implicit scheme  introduced in \cite{smereka2003semi}  simulates the curvature and surface diffusion motion of the interface.
One of the major class of methods is based on \textit{splitting}~\cite{glowinski1989augmented}. The common spirit of these methods is to cast the complicated primal problem into a series of more tamable subproblems, then to find the minimizer using alternative direction method.   There are various strategies to obtain such decompositions from the optimization problem.  One can derive the associated Euler-Lagrange equations, then apply operator splitting methods on the differential equations, e.g., Lie-Trotter method~\cite{trotter1959product}.   A new operator splitting algorithm was proposed for Euler's elastica model for image smoothing in \cite{Deng2019}.  One can also introduce auxiliary variables and transform the primal problem into a constrained one, then obtain a series of subproblems by alternatively optimizing one variable at a time while keeping the others fixed; e.g., augmented Lagrangian method (ALM)~\cite{bae2017augmented,tai2011fast,tai2009augmented,yashtini2016fast}. We refer the readers to \cite{glowinski2016some,glowinski2016admm,burger2016first,glowinski2019fast} for a detailed discussion on the splitting method in image processing.

In this paper,  we propose a variational functional with a curvature constraint to reconstruct implicit surfaces from point cloud data, and explore fast algorithms to solve the associated non-convex, non-linear optimization problem.  The minimizing functional balances two terms, the Euclidean distance from the point cloud to the surface and the mean curvature of the surface.
We show that the curvature term improves corner reconstruction and recovers non-convex features of the underlying shape of the point cloud data.   To avoid dealing with the high-order PDEs resulting from the gradient descent approach, we introduce a semi-implicit method to solve an easier, but equivalent, problem derived by the operator splitting method (OSM).  We also explore an ALM  method recently proposed by Bae et al.~\cite{bae2017augmented}, which reduces the number of parameters compared to other curvature regularized models.  Our approaches work effectively for 2D/3D cases, as well as for noisy and sparse point cloud.  The contributions of this paper are as follows:
 \begin{enumerate}
 \item{We propose a new minimal surface model with a curvature constraint, and develop efficient algorithms.  It is based on operator splitting and solved by a semi-implicit approach.  We also explore an augmented Lagrangian type algorithm and discuss the effects of the parameters involved.  }
\item{We explore and numerically compare results using  $L_1$ and $L_2$ norms of the mean curvature as the regularization term.  We compare the performance of the OSM approach with that of the ALM  approach.}
 \end{enumerate}

This paper is organized as follows. In Section~\ref{sec:model}, we introduce our curvature regularized minimal surface model and present fast algorithms for the proposed model.  In Section~\ref{sec:algo:split}, we introduce a new operator splitting scheme.  This approach transforms the original high-order PDE into an easier differential system, which is solved by a semi-implicit method.  In Section~\ref{sec:algo:ADMM}, we describe an ALM method for the model, followed by the numerical details in Section~\ref{sec:numD}.  Various numerical experiments are presented in Section~\ref{sec:Exp}, and analytical discussion is presented in  Section~\ref{sec:anal}.  We conclude this paper  in Section~\ref{sec:con}.

\section{Curvature Regularized Surface Reconstruction Model}\label{sec:model}

Let $\mathcal{D}$ be the set of given point cloud data, $d(\x)=\inf_{\mathbf{y}\in\mathcal{D}}\{|\x-\mathbf{y}|\}$ measures the point-to-point-cloud distance, and $d\sigma$ is the area element.
To reconstruct a surface from a given point cloud data $\mathcal{D}$, we propose the following curvature-constrained minimal surface energy:
\begin{eqnarray}
  E_s(\Gamma)=\left(\int_{\Gamma} |d(\x)|^{s} d\sigma\right)^{\frac{1}{s}}+\eta \left(\int_{\Gamma} |\kappa(\x)|^{s}d\sigma\right)^{\frac{1}{s}}\;.
  \label{eq.energy}
\end{eqnarray}
Here, $\kappa$ is the mean curvature of $\Gamma$, and the exponent coefficient $s>0$ is a constant integer.  We explore the cases when $s=1$ and $s=2$.  The first term, the surface integral of the distance from point cloud to the surface, signifies the fidelity of reconstruction.   It moves the surface $\Gamma$ towards the point cloud.
The second term, which is the integral of the surface mean curvature along the reconstructed surface, is the regularization.  This induces regularized geometric features for $\Gamma$ independent to the point cloud location.  Geometric features can include sharp corners, smooth corners, or straight segments, depending on the choice of $s$.  The parameter $\eta>0$ controls the influence of the curvature regularization.   When $\eta=0$, the model (\ref{eq.energy}) degenerates to the minimal surface model proposed in \cite{ZOF01}.
\begin{remark} The $1/s$ power in (\ref{eq.energy}) comes from the original model (\ref{eq.energy0}).  If one takes the distance function as the potential function of the point cloud, then the energy is an $L_s$ norm of the potential on $\Gamma$.  Based on this, we add the regularization term related to the mean curvature of $\Gamma$, and the $1/s$ power is used to keep the two terms in the same format.  One may remove this power to get a simpler energy and  still get the same minimizer when $\eta$ is chosen appropriately.
\end{remark}

We employ the implicit surface representation~\cite{osher2004level} to rewrite the energy~\eqref{eq.energy}, and the level set function $\phi$ is defined such that $\Gamma$ is its zero level set:
\[
  \phi(\x) \mbox{ is } \begin{cases}
    >0,& \mbox{ if } \x \mbox{ is outside of }\Gamma,\\
    =0,& \mbox{ if } \x \mbox{ is on }\Gamma,\\
    <0, & \mbox{ if } \x \mbox{ is inside } \Gamma.
  \end{cases}
\]
Using $\phi$,  the functional $E_s(\Gamma)$ restricted to the surface $\Gamma$ can be expressed as
\begin{align}
  E_s(\phi)=\left(\int_{\Omega} |d(\x)|^s \delta(\phi)|\nabla\phi|d\x\right)^{\frac{1}{s}}+\eta \left(\int_{\Omega} |\kappa(\x)|^s\delta(\phi)|\nabla\phi|d\x\right)^{\frac{1}{s}},
  \label{eq.energy.level}
\end{align}
where $\delta(\phi)=H'(\phi)$ is the Dirac Delta function with $H$ being the Heaviside step function:
$H (\phi)=1$ if $\phi>0$, and 0 otherwise.   We use the smooth approximation \cite{bae2017augmented} for practical computation
\begin{equation}
	H_\varepsilon(\phi) = \frac{1}{2}+\frac{1}{\pi}\arctan\left(\frac{\phi}{\varepsilon}\right) \;\; \text{ and } \;\;
	\delta_{\varepsilon}(\phi)= H^\prime_\varepsilon(\phi) = \frac{\varepsilon}{\pi(\varepsilon^2+\phi^2)} \;,
\label{eq.H.eps}
\end{equation}
with $\varepsilon>0$,  which is a constant controlling the smoothness.  For any point $\x$ on $\Gamma$, its mean curvature can be computed as,
$\kappa(\x)=\nabla\cdot \left(\frac{\nabla \phi(\x)}{|\nabla\phi(\x)|}\right)$.
Putting these together,  we focus on the smoothed energy
\begin{eqnarray}
  E_{s,\varepsilon}(\phi)=\left(\int_{\Omega} |d(\x)|^s \frac{\varepsilon}{\pi(\varepsilon^2+\phi^2)}|\nabla\phi|d\x\right)^{\frac{1}{s}}+\eta \left(\int_{\Omega} \left|\nabla\cdot \left(\frac{\nabla \phi}{|\nabla\phi|}\right)\right|^s\frac{\varepsilon}{\pi(\varepsilon^2+\phi^2)}|\nabla\phi|d\x\right)^{\frac{1}{s}},
  \label{eq.energy.2}
\end{eqnarray}
and the reconstructed surface is defined as the minimizer of the energy (\ref{eq.energy.2}), i.e.,
\[
  \Gamma=\{\x| \psi(\x)=0\} \;\;\; \mbox{ for } \;\; \psi=\argmin_{\phi} E_{s,\varepsilon}(\phi).
\]
Here   $\psi$ represents the optimal level set function.

\subsection{Operator Splitting Method}\label{sec:algo:split}

One of our main challenges is that $E_{s}$ in (\ref{eq.energy.level}) is highly nonlinear in terms of $\phi$, and the corresponding Euler-Lagrange equation is a high-order nonlinear PDE. See (\ref{eq.vard}) and (\ref{eq.vark}) in Section \ref{sec:anal}. To circumvent this difficulty, we propose a new operator splitting strategy, which leads to an equivalent differential equation system that is much easier to solve.

We follow the direction of gradient flow; however,  we first decouple the data fidelity term and the curvature regularization term, then minimize the simplified functional via its gradient flow.
For $s=2$, using (\ref{eq.energy.2}), we rewrite the energy~(\ref{eq.energy}) in the following equivalent form:
\begin{equation}
  \begin{cases}
    \tEet(\phi)=\displaystyle{\left(\int_{\Omega} d^2(\x) \delta_{\varepsilon}(\phi)|\nabla\phi|d\x\right)^{\frac{1}{2}}+\eta \left(\int_{\Omega} q^2(\x)\delta_{\varepsilon}(\phi)|\nabla\phi|d\x\right)^{\frac{1}{2}}},\\
    \displaystyle{ q=\nabla\cdot\frac{\nabla\phi}{|\nabla\phi|} },
  \end{cases}
  \label{eq.splitEner}
\end{equation}
with the notation $\delta_{\varepsilon}(\phi)$ as in (\ref{eq.H.eps}).
We then compute the variation of $\tEet(\phi)$ with respect to $\phi$ \cite{ZOF01}. For $\forall v\in H^2$, $H^2$ denoting the Sobolev space, we have
\begin{eqnarray*}
 \Bigg \langle \frac{\partial \tEet(\phi)}{\partial\phi},v \Bigg \rangle
 &= & -\int_{\Omega}\frac{1}{2}\de(\phi)\left[ \int_{\Omega} d^2(\x)\de(\phi)|\nabla\phi|d\x\right]^{-1/2} \nabla\cdot \left[ d^2(\x)\frac{\nabla\phi}{|\nabla \phi|}\right]vd\x \nonumber\\
 & - & \eta \int_{\Omega}\frac{1}{2}\de(\phi)\left[ \int_{\Omega} q^2(\x)\de(\phi)|\nabla\phi|d\x\right]^{-1/2} \nabla\cdot \left[ q^2(\x)\frac{\nabla\phi}{|\nabla \phi|}\right]vd\x.
\end{eqnarray*}
If $\psi$ is a minimizer of $\tEet$, it satisfies the optimality condition
\begin{equation}
\displaystyle{    \Bigg \langle \frac{\partial \tEet(\psi)}{\partial\phi},v  \Bigg \rangle=0, \;\;\;
    q-\nabla\cdot\frac{\nabla\psi}{|\nabla\psi|}=0,}  \;\;\;  \forall v\in H^2.
  \label{eq.euler}
\end{equation}
To solve for $\psi$, we associate (\ref{eq.euler}) with the initial value problem
\begin{equation}
  \begin{cases}
   \displaystyle{ \frac{\partial \phi}{\partial t}=f(d,\phi) \nabla\cdot \left[ d^2(\x)\frac{\nabla\phi}{|\nabla \phi|}\right] +\eta f(q,\phi) \nabla\cdot \left[ q^2(\x)\frac{\nabla\phi}{|\nabla \phi|}\right],}\\
    \displaystyle{ \frac{\partial q}{\partial t}+\gamma\left(q-\nabla\cdot\frac{\nabla\phi}{|\nabla\phi|}\right)=0,}
  \end{cases}
  \label{eq.ivp}
\end{equation}
with
$$
f(d,\phi)=\frac{1}{2}\de(\phi)\left[ \int_{\Omega} d^2(\x)\de(\phi)|\nabla\phi|d\x\right]^{-1/2}.
$$
The steady state of (\ref{eq.ivp}) is a minimizer of $\tEet$ in (\ref{eq.splitEner}). On the right hand side of the first equation in (\ref{eq.ivp}), the two terms are of the same form, only differing by $\eta$, $d$ and $q$. The first term is the driving velocity to minimize the squared distance from the surface to the given data. The second term is the driving velocity to minimize the squared curvature along the reconstructed surface. The parameter $\eta$ controls the trade-off between these two terms.


We adopt the Lie type of operator splitting and  refer the readers to \cite{glowinski2017splitting} for a complete discussion of different splitting schemes. Given $\{\phi^k,q^k\}$ at the $k$-th step, we update $\{\phi^{k+1},q^{k+1}\}$ in two fractional steps. In particular, for $k>0$,  we update the variables through $\{\phi^k,q^k\}\rightarrow \{\phi^{k+1/2},q^{k+1/2}\}\rightarrow\{\phi^{k+1},q^{k+1}\}$ as follows:

\underline{\emph{Fractional step 1}}: Solve
\begin{equation}
\begin{cases}
 \displaystyle{  \frac{\partial \phi}{\partial t}=f(d,\phi) \nabla\cdot \left[ d^2(\x)\frac{\nabla\phi}{|\nabla \phi|}\right] +\eta f(q,\phi) \nabla\cdot \left[ q^2(\x)\frac{\nabla\phi}{|\nabla \phi|}\right] \mbox{ on } \Omega\times[t^k,t^{k+1}],}\\
\displaystyle{   \frac{\partial q}{\partial t}=0 \mbox{ on } \Omega\times[t^k,t^{k+1}],}\\
  \phi(t^k)=\phi^k,q(t^k)=q^k
  \label{eq.split.1}
  \end{cases}
\end{equation}
and set $\phi^{k+1/2}=\phi(t^{k+1}),q^{k+1/2}=q(t^{k+1})$. \\

\underline{\emph{Fractional step 2}}: Solve
\begin{equation}
\begin{cases}
\displaystyle{   \frac{\partial \phi}{\partial t}=0 \mbox{ on }\Omega\times[t^k,t^{k+1}]\;,}\\
 \displaystyle{  \frac{\partial q}{\partial t}+\gamma\left(q-\nabla\cdot\frac{\nabla\phi^{k+1/2}}{|\nabla\phi^{k+1/2}|}\right)=0 \mbox{ on } \Omega\times[t^k,t^{k+1}]\;,}\\
  \phi(t^k)=\phi^{k+1/2}, q(t^k)=q^{k+1/2}
  \end{cases}
  \label{eq.split.2}
\end{equation}
and set $\phi^{k+1}=\phi(t^{k+1}),q^{k+1}=q(t^{k+1})$.
\vspace{0.5cm}

We have two subproblems (\ref{eq.split.1}) and (\ref{eq.split.2}) to address. There is no difficulty to solve (\ref{eq.split.2}), since we have the closed form solution
\[
  q=e^{\gamma\Delta t}q^{k+1/2}+(1-e^{\gamma\Delta t})\nabla\cdot\frac{\nabla\phi^{k+1/2}}{|\nabla\phi^{k+1/2}|}\;.
\]
To solve (\ref{eq.split.1}) for $\phi^{k+1/2}$, the simplest way is to use the explicit scheme as the following:
\[
  \frac{\phi^{k+1/2}-\phi^k}{\Delta t}=f(d,\phi^k) \nabla\cdot \left[ d^2(\x)\frac{\nabla\phi^k}{|\nabla \phi^k|}\right] +\eta f(q^k,\phi^k) \nabla\cdot \left[ (q^k)^2(\x)\frac{\nabla\phi^k}{|\nabla \phi^k|}\right]\;.
\]
However, due to the stability consideration, one needs to choose a very small time step of order $O(h^2)$ where $h$ is the spatial step size.  To relax the time step constraint,  for some $\alpha>0$, we add $-\alpha\Delta\phi$ on both sides of (\ref{eq.split.1}), as in \cite{smereka2003semi}, to get
\begin{equation}
   \frac{\partial \phi}{\partial t}-\alpha\Delta\phi=-\alpha\Delta\phi+ f(d,\phi) \nabla\cdot \left[ d^2(\x)\frac{\nabla\phi}{|\nabla \phi|}\right] +\eta f(q,\phi) \nabla\cdot \left[ q^2(\x)\frac{\nabla\phi}{|\nabla \phi|}\right].
   \label{eq.split.1add}
\end{equation}
We discretize (\ref{eq.split.1add}) in time semi-implicitly:
\[
  \frac{\phi^{k+1/2}-\phi^k}{\Delta t}-\alpha\Delta\phi^{k+1/2}=-\alpha\Delta\phi^k+f(d,\phi^k) \nabla\cdot \left[ d^2(\x)\frac{\nabla\phi^k}{|\nabla \phi^k|}\right] +\eta f(q^k,\phi^k) \nabla\cdot \left[ (q^k)^2(\x)\frac{\nabla\phi^k}{|\nabla \phi^k|}\right].
\]
We fix $\alpha=1$ in this paper. This equation is a Laplacian equation of $\phi^{k+1/2}$ and can be solved efficiently by fast Fourier transformation (FFT).
The updating formula is summarized as
\begin{equation}
  \begin{cases}
   \displaystyle{  \frac{\phi^{k+1}-\phi^k}{\Delta t}-\alpha\Delta\phi^{k+1}=-\alpha\Delta\phi^{k}+f(d,\phi^k) \nabla\cdot \left[ d^2(\x)\frac{\nabla\phi^k}{|\nabla \phi^k|}\right] +\eta f(q^k,\phi^k) \nabla\cdot \left[ (q^k)^2(\x)\frac{\nabla\phi^k}{|\nabla \phi^k|}\right],}\\
    \displaystyle{    q^{k+1}=e^{\gamma\Delta t}q^{k}+(1-e^{\gamma\Delta t})\nabla\cdot\frac{\nabla\phi^{k+1}}{|\nabla\phi^{k+1}|}\;.}
  \end{cases}
  \label{eq.split}
\end{equation}

To solve (\ref{eq.split}), we need the initial condition $(\phi^0,q^0)$. We choose $\phi^0$ to be a signed distance function whose zero level set encloses all data. $q^0$ is assigned as $\nabla\cdot ((\nabla\phi^0)/|\nabla\phi^0|)$.
The algorithm of OSM with $s=2$ is stated in Algorithm \ref{alg:OSM}.
In Algorithm \ref{alg:OSM}, the reinitialization is used to keep $\phi$ to be a signed-distance function near its zero level set. The details of reinitialization are discussed in Section \ref{sec:numD}.
\begin{algorithm}
	\SetKwInOut{KwIni}{Initialization}
	\KwIni{$d$, $\phi^0, q^0$.}

	\While {not converge}{
		Update $\{\phi^{k+1},q^{k+1}\}$ by solving (\ref{eq.split}).\\
        Reinitialize $\phi^{k+1}$.
        }

	\KwOut{$\phi^{k}$.}
	\caption{Operator Splitting Method (OSM) for $s=2$.}\label{alg:OSM}
\end{algorithm}

In the following, we give a brief derivation of OSM for $s=1$. With $s=1$, $E_{1,\varepsilon}$ has the same minimizer as
\[
  \begin{cases}
   \displaystyle{ \widetilde{E}_{1,\varepsilon}(\phi)=\int_{\Omega} d(\x) \delta_{\varepsilon}(\phi)|\nabla\phi|d\x+\eta \int_{\Omega} |q(\x)|\delta_{\varepsilon}(\phi)|\nabla\phi|d\x\;,}\\
    \displaystyle {q=\nabla\cdot\frac{\nabla\phi}{|\nabla\phi|}}\;,
  \end{cases}
\]
whose
corresponding gradient flow initial value problem is
\begin{equation}
  \begin{cases}
   \displaystyle{  \frac{\partial \phi}{\partial t}= \delta_{\varepsilon}(\phi)\nabla\cdot \left[ d(\x)\frac{\nabla\phi}{|\nabla \phi|}\right] +\eta  \delta_{\varepsilon}(\phi)\nabla\cdot \left[ |q(\x)|\frac{\nabla\phi}{|\nabla \phi|}\right],}\\
 \displaystyle{    \frac{\partial q}{\partial t}+\gamma\left(q-\nabla\cdot\frac{\nabla\phi}{|\nabla\phi|}\right)=0\;.}
  \end{cases}
  \label{eq.ivp1}
\end{equation}
With the Lie type splitting in time and introducing the term $-\alpha\Delta\phi$ on both sides in the first equation of (\ref{eq.ivp1}), we get the updating formula
\begin{equation}
  \begin{cases}
  \displaystyle{   \frac{\phi^{k+1}-\phi^k}{\Delta t}-\alpha\Delta\phi^{k+1}=-\alpha\Delta\phi^k+ \delta_{\varepsilon}(\phi)\nabla\cdot \left[ d(\x)\frac{\nabla\phi^k}{|\nabla \phi^k|}\right] +\eta \delta_{\varepsilon}(\phi) \nabla\cdot \left[ |q^k(\x)|\frac{\nabla\phi^k}{|\nabla \phi^k|}\right],}\\
\displaystyle{     q^{k+1}=e^{\gamma\Delta t}q^{k}+(1-e^{\gamma\Delta t})\nabla\cdot\frac{\nabla\phi^{k+1}}{|\nabla\phi^{k+1}|}\;.}
  \end{cases}
  \label{eq.split1}
\end{equation}

We note that (\ref{eq.splitEner}) can be solved similarly by $\kappa$TV method proposed in \cite{yashtini2016fast} for image inpainting problem.  One advantage of OSM in this paper is its simplicity and having less parameters: once the model parameter, i.e., $\eta$, is fixed, there is only one parameter, the artificial time step, to tune. Numerical experiments show that there is a wide range of the time step we can choose.

\subsection{Augmented Lagrangian Method}\label{sec:algo:ADMM}
We present another efficient algorithm to find the minimizer of (\ref{eq.energy.2}) with $s=1$. The reason for only focusing on $s=1$ in this case is that we can take advantage of the shrinkage operator. We first introduce three new variables: $\p=\nabla \phi$, $\n=\nabla \phi/|\nabla \phi|$ and $q=\nabla \cdot (\nabla\phi/|\nabla \phi|)$.   Finding the minimizer of $E_{1,\varepsilon}$ is equivalent to solving
\begin{align}
  \underset{\phi,\p,\n,q}{\min} \int \varepsilon \frac{(d(\x)+\eta|q|)|\p|}{\pi(\varepsilon^2+\phi^2)} d\x  \;\;\;
  \mbox{with }\;\; \p=\nabla \phi,\;\;  \n=\nabla \phi/|\nabla \phi|, \;\;  q=\nabla \cdot (\nabla\phi/|\nabla \phi|)\;.\nonumber
  \label{eq.admm.0}
\end{align}

This can be addressed via alternating direction method of multipliers by introducing Lagrange multipliers $\bflambda_1,\lambda_2,\bflambda_3$. The associated Augmented Lagrangian functional is
\begin{equation}
\begin{array}{rl}
&\mathcal{L}(\phi,q,\p,\n,\bflambda_1,\lambda_2,\bflambda_3) \\ & \\
& \displaystyle{=\int_\Omega \frac{\varepsilon(d+\eta|q|)|\mathbf{p}|}{\pi(\varepsilon^2+\phi^2)}\,dx
+\frac{r_1}{2}\int_\Omega|\mathbf{p}-\nabla\phi|^2\,dx+\int_\Omega\bflambda_1\cdot(\mathbf{p}-\nabla\phi)\,dx}\\
&\\
& \displaystyle{+\frac{r_2}{2}\int_\Omega(q-\nabla\cdot\n)^2\,dx+\int_\Omega\lambda_2(q-\nabla\cdot\n)\,dx
+\frac{r_3}{2}\int_\Omega||\p|\n-\p|^2\,dx+\int_\Omega\bflambda_3\cdot(|\p|\n-\p)\;}
\end{array}
\label{eq:almModel}
\end{equation}
where $\p,\n,\bflambda_1,\bflambda_3$ are vectors, $\phi,q,\lambda_2$ are scalars, $r_1,r_2,r_3$ are fixed constants. To find the saddle point of $\mathcal{L}$, we update each variable in an alternative manner. In each iteration, for each variable, we minimize the corresponding functional while keeping other variables fixed. After all variables are updated, we update Lagrange multipliers. This procedure is repeated until we achieve a steady state. In each iteration, we have four subproblems to minimize:
\begin{align}
 \mathcal{E}_1(\phi)&=\int_\Omega \frac{\varepsilon(d+\eta|q|)|\mathbf{p}|}{\pi(\varepsilon^2+\phi^2)}dx +\frac{r_1}{2}\int_\Omega|\mathbf{p}-\nabla\phi|^2dx\;,  +\int_\Omega\bflambda_1\cdot(\mathbf{p}-\nabla\phi)dx\label{eq.sub1}\\
  \mathcal{E}_2(q)&=\int_\Omega \frac{\varepsilon(d+\eta|q|)|\mathbf{p}|}{\pi(\varepsilon^2+\phi^2)}\,dx +\frac{r_2}{2}\int_\Omega(q-\nabla\cdot\n)^2\,dx +\int_\Omega\lambda_2(q-\nabla\cdot\n)\,dx\;,\label{eq.sub2}\\
  \mathcal{E}_3(\p)&=\int_\Omega \frac{\varepsilon(d+\eta|q|)|\mathbf{p}|}{\pi(\varepsilon^2+\phi^2)}\,dx +\frac{r_1}{2}\int_\Omega|\mathbf{p}-\nabla\phi|^2\,dx+\int_\Omega\bflambda_1\cdot(\mathbf{p}-\nabla\phi)\,dx\;,\nonumber\\
  &+\frac{r_3}{2}\int_\Omega||\p|\n-\p|^2\,dx+\int_\Omega\bflambda_3\cdot(|\p|\n-\p)\;,\label{eq.sub3}\\
  \mathcal{E}_4(\n)&=\frac{r_2}{2}\int_\Omega(q-\nabla\cdot\n)^2\,dx +\int_\Omega\lambda_2(q-\nabla\cdot\n)\,dx +\frac{r_3}{2}\int_\Omega||\p|\n-\p|^2\,dx\nonumber\\
  &+\int_\Omega\bflambda_3\cdot(|\p|\n-\p)\;.\label{eq.sub4}
\end{align}
After those four variables being updated correspondingly, Lagrange multipliers are updated as
\[ \bflambda_1 \gets \bflambda_1+r_1(\p-\nabla\phi), \;\;\;
\lambda_2 \gets \lambda_2+r_2(q-\nabla\cdot\n),\;\;\;
\bflambda_3 \gets \bflambda_3+r_3(\n|\p|-\p)\;.
\]

These subproblems can be solved efficiently as described in the following.

\noindent\underline{\emph{Subproblem of $\phi$:}} For $\mathcal{E}_1(\phi)$ in (\ref{eq.sub1}), the corresponding Euler-Lagrange equation is:
\[
-r_1\Delta\phi+\beta\phi=\beta\phi+(d+\eta|q|)\frac{2\varepsilon|\p|\phi}{\pi(\varepsilon^2+\phi^2)^2}	-\nabla\cdot(r_1\p+\bflambda_1)
\;,\]
where $\beta>0$ is a frozen coefficient. We discretize the time as follows
\begin{align}
-r_1\Delta\phi^{k+1}+\beta\phi^{k+1}=\beta\phi^k+(d+\eta|q^k|)\frac{2\varepsilon|\p^k|\phi^k}{\pi(\varepsilon^2+(\phi^k)^2)^2}	-\nabla\cdot(r_1\p^k+\bflambda_1^k)\;.
\label{eq.sub1.2}
\end{align}
This is the Laplacian equation of $\phi^{k+1}$, and we efficiently solve it  by FFT.\\

\noindent\underline{\emph{Subproblem of $q$:}}  In (\ref{eq.sub2}), $\mathcal{E}_2(\phi)$ can be written as:
\[
\mathcal{E}_2(q) = \int_\Omega\frac{\eta\varepsilon|\p|}{\pi(\varepsilon^2+\phi^2)}|q|+\frac{r_2}{2}\left( q-\left(\nabla\cdot\n-\frac{\lambda_2}{r_2}\right)\right)^2\,dx+C\;,	
\]
where $C$ is independent of $q$. Then, the minimizer can be found via the shrinkage operator
\begin{align}
\arg\min_q\mathcal{E}_2(q)= \max\left\{0,1-\frac{\eta\varepsilon|\p|}{r_2\pi(\varepsilon^2+\phi^2)|q^*|}\right\}q^*\;,\label{eq.shrink}
\end{align}
with $q^*=\nabla\cdot \n-\lambda_2/r_2$. \\

\noindent\underline{\emph{Subproblem of $\p$:}}  In (\ref{eq.sub3}), $\mathcal{E}_3(\phi)$ can be rewritten as
\begin{align*}
\mathcal{E}_3(\p)&=\int_\Omega\underbrace{\left[(d+\eta|q|)\frac{\varepsilon}{\pi(\varepsilon^2+\phi^2)} +\bflambda_3\cdot\n\right]}_{\omega}|\p|+\underbrace{\frac{r_1+r_3(1+|\n|^2)}{2}}_{\mu} \bigg|\p-\underbrace{\frac{\bflambda_3+r_1\nabla\phi-\bflambda_1}{r_1+r_3(1+|\n|^2)}}_{\mathbf{a}}\bigg|^2\nonumber\\
&-\int_\Omega \underbrace{r_3\n}_{\bm{\nu}}\cdot\p|\p|+\widetilde{C}\;,
\end{align*}
where $\widetilde{C}$ is independent of $\p$. This $\mathcal{E}_3(\p)$ can be simplified as
\[
  \mathcal{E}_3(\p)=\int_\Omega \omega|\p| + \frac{\mu}{2}|\p-\mathbf{a}|^2 -\bm{\nu}\cdot\p|\p|+\widetilde{C}\;.
\]
Following the idea of Theorem 2 in \cite{bae2017augmented}, we can minimize this energy efficiently.
\begin{thm}
\label{thm.p}
Assume that $\mu>2|\bm{\nu}|$. Let $\theta$ be the angle between $\mathbf{a}$ and the minimum vector of $\mathcal{E}_3(\p)$, and $\alpha$ is the angle between $\mathbf{a}$ and $\nu$. Then the following arguments hold:
\begin{itemize}
	\item if $\omega\geq \mu|\mathbf{a}|$, then $\arg\min_{\p}\mathcal{E}_3(\p)=\mathbf{0}$;
	\item if $\omega<\mu|\mathbf{a}|$:
	\begin{enumerate}
		\item if $\mathbf{a}=\bm{\nu}=\mathbf{0}$, then $\arg\min_\p\mathcal{E}_3(\p)=\begin{cases}
\mathbf{0}\;,\;\mbox{when }\omega\geq0,\\
\text{any vector of length}~-\omega/\mu\;,\;\mbox{when }\omega<0;	
\end{cases}
$
	\item if $\mathbf{a}\neq \mathbf{0},\bm{\nu}=\mathbf{0}$, $\arg\min_\p\mathcal{E}_3(\p)=(1-\frac{\omega}{\mu|\mathbf{a}|})\mathbf{a}$;
	\item if $\mathbf{a}=\mathbf{0},\bm{\nu}\neq\mathbf{0}$,  $\arg\min_\p\mathcal{E}_3(\p)=\frac{\omega}{\mu-2|\bm{\nu}|} \frac{\bm{\nu}}{|\bm{\nu}|}$;
	\item if $\mathbf{a}\neq\mathbf{0},\bm{\nu}\neq \mathbf{0}$, the angles $\theta$ and $\alpha$ satisfy the equation:
	\begin{align}
	\mu^2|\mathbf{a}|\sin\theta+\mu|\bm{\nu}||\mathbf{a}|\sin\theta\cos(\theta-\alpha) +\omega|\bm{\nu}|\sin(\theta-\alpha) +\mu|\mathbf{a}||\bm{\nu}|\sin\alpha=0\;,\label{eq.theta}
	\end{align}
	and $\arg\min_\p\mathcal{E}_3(\p)=\frac{[\mu(\mathbf{b}\cdot\mathbf{a})-\omega]\mathbf{b}}{\mu+2\bm{\nu}\cdot\mathbf{b}}$ with $\mathbf{b}$ being a unit vector satisfying:
	\begin{align*}
	\mathbf{b}=\frac{1}{|\mathbf{a}|}\begin{bmatrix}
	\cos\widetilde{\theta}&-\sin\widetilde{\theta}\\
	\sin\widetilde{\theta}&\cos\widetilde{\theta}
\end{bmatrix}\mathbf{a}\;,
	\end{align*}
 and $\widetilde{\theta}=\theta$ if $\det[\bm{\nu}~\mathbf{a}]\geq 0$, $\widetilde{\theta}=-\theta$ if $\det[\bm{\nu}~\mathbf{a}]< 0$. Here $[\bm{\nu}~\mathbf{a}]$ denotes the $2\times 2$ matrix with the vector $\bm{\nu}$ and $\mathbf{a}$ being the first and second column respectively.
	\end{enumerate}
\end{itemize}
\end{thm}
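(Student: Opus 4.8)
The plan is to use that minimizing $\mathcal{E}_3$ is a pointwise problem: after discarding the constant $\widetilde{C}$, at each $\x$ the data $\omega,\mu,\mathbf{a},\bm{\nu}$ are frozen and we must minimize the scalar function $g(\p)=\omega|\p|+\frac{\mu}{2}|\p-\mathbf{a}|^2-\bm{\nu}\cdot\p\,|\p|$ over $\p\in\mathbb{R}^2$, following the idea of Theorem 2 in \cite{bae2017augmented}. First I would note that the hypothesis $\mu>2|\bm{\nu}|$ renders $g$ coercive: along any direction $\mathbf{u}$ the coefficient of $|\p|^2$ is $\tfrac{\mu}{2}-\bm{\nu}\cdot\mathbf{u}\ge\tfrac{\mu}{2}-|\bm{\nu}|>0$, so $g$ is continuous and coercive, hence attains a global minimum; the only point where $g$ fails to be smooth is $\p=\mathbf{0}$, which I treat as a separate candidate, so that away from the origin the minimizer is a critical point.

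The central device is the polar substitution $\p=\rho\,\mathbf{u}$ with $\rho=|\p|\ge0$ and $\mathbf{u}$ a unit vector, under which
\[
g=\Big(\tfrac{\mu}{2}-\bm{\nu}\cdot\mathbf{u}\Big)\rho^2+\big(\omega-\mu\,\mathbf{u}\cdot\mathbf{a}\big)\rho+\tfrac{\mu}{2}|\mathbf{a}|^2
\]
is, for each fixed direction $\mathbf{u}$, an upward parabola in $\rho$. Minimizing over $\rho\ge0$ gives the constrained optimum $\rho^*(\mathbf{u})=\max\{0,\,(\mu\,\mathbf{u}\cdot\mathbf{a}-\omega)/(\mu-2\bm{\nu}\cdot\mathbf{u})\}$. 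This at once delivers the top dichotomy: if $\omega\ge\mu|\mathbf{a}|$ the numerator is nonpositive for every $\mathbf{u}$, so $\rho^*\equiv0$ and the minimizer is $\mathbf{0}$; otherwise the optimal radius is positive for some directions and the problem collapses to choosing the best $\mathbf{u}$.

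The three degenerate branches then follow by inspection, since in each the direction-dependence is carried by a single vector. For $\mathbf{a}=\bm{\nu}=\mathbf{0}$, $g=\omega\rho+\frac{\mu}{2}\rho^2$ is one-dimensional, giving $\mathbf{0}$ when $\omega\ge0$ and any vector of length $-\omega/\mu$ when $\omega<0$. For $\mathbf{a}\neq\mathbf{0},\bm{\nu}=\mathbf{0}$ the quadratic coefficient is direction-free, so the linear term forces $\mathbf{u}=\mathbf{a}/|\mathbf{a}|$ and substitution yields $(1-\omega/(\mu|\mathbf{a}|))\mathbf{a}$. For $\mathbf{a}=\mathbf{0},\bm{\nu}\neq\mathbf{0}$ the linear term vanishes and one aligns $\mathbf{u}$ with $\pm\bm{\nu}$ to minimize the value $-\omega^2/(2(\mu-2\bm{\nu}\cdot\mathbf{u}))$, producing the stated multiple of $\bm{\nu}/|\bm{\nu}|$. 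For the generic case $\mathbf{a}\neq\mathbf{0},\bm{\nu}\neq\mathbf{0}$ I would insert $\rho^*(\mathbf{u})$ back and impose angular stationarity: writing $\mathbf{u}=(\cos\theta,\sin\theta)$ with $\theta$ measured from $\mathbf{a}$ and $\alpha$ the angle from $\mathbf{a}$ to $\bm{\nu}$, the condition $\partial_\theta g=0$ reads $(\bm{\nu}\cdot\mathbf{u}^\perp)\rho=\mu(\mathbf{a}\cdot\mathbf{u}^\perp)$ with $\mathbf{u}^\perp=\partial_\theta\mathbf{u}$; substituting $\rho=\rho^*(\mathbf{u})$, clearing the denominator, and applying the product-to-sum identities for $\sin\theta\cos(\theta-\alpha)$ and $\sin(\theta-\alpha)\cos\theta$ reduces the stationarity to the single trigonometric equation~(\ref{eq.theta}). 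The minimizing direction $\mathbf{b}$ is then $\mathbf{a}$ rotated by $\widetilde{\theta}$, with the sign of the rotation set by $\det[\bm{\nu}\ \mathbf{a}]$ so that $\mathbf{b}$ tilts to the correct side of $\mathbf{a}$, and $\p^*=\rho^*(\mathbf{b})\mathbf{b}$ is the displayed expression.

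The hardest part is this last case. Equation~(\ref{eq.theta}) is only a necessary stationarity condition and is transcendental, so it may have several roots in $[0,\pi]$; the delicate points are to fix the branch $\widetilde{\theta}=\pm\theta$ correctly from the orientation $\det[\bm{\nu}\ \mathbf{a}]$, to verify $\rho^*(\mathbf{b})\ge0$ so that the selected critical point is admissible, and, since coercivity only guarantees that the global minimum is attained among these candidates, to compare the finitely many admissible values of $g$ rather than to read off a unique root. Some care with the orientation and sign conventions for $\bm{\nu}$ is also needed to land exactly on the displayed denominator $\mu+2\bm{\nu}\cdot\mathbf{b}$ and on the precise form of~(\ref{eq.theta}); I would finally confirm that the non-smoothness of $|\p|$ at the origin is wholly absorbed into the separate $\p=\mathbf{0}$ branch, so that the critical-point analysis for $\rho>0$ remains valid.
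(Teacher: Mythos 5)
The paper never proves Theorem 2.1 at all: it simply invokes Theorem 2 of \cite{bae2017augmented} (``Following the idea of Theorem 2 in...''), so your attempt has to stand on its own. Your skeleton is the right one and is essentially the standard argument behind such results: reduce to a pointwise problem in $\p\in\mathbb{R}^2$, pass to polar coordinates $\p=\rho\mathbf{u}$, minimize the upward parabola in $\rho$ first (coercivity from $\mu>2|\bm{\nu}|$, the origin kept as a separate candidate), then optimize the direction. This correctly disposes of the branch $\omega\geq\mu|\mathbf{a}|$ and of cases 1 and 2.

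The genuine gap is in cases 3 and 4, where you assert rather than compute that your formulas ``produce the stated'' answers — and under your own sign convention they do not. Your radius formula $\rho^*(\mathbf{u})=\max\{0,(\mu\,\mathbf{u}\cdot\mathbf{a}-\omega)/(\mu-2\bm{\nu}\cdot\mathbf{u})\}$ is correct for the energy as you (and the paper) write it, $g=\omega|\p|+\frac{\mu}{2}|\p-\mathbf{a}|^2-\bm{\nu}\cdot\p|\p|$, but your angular stationarity condition $(\bm{\nu}\cdot\mathbf{u}^\perp)\rho=\mu(\mathbf{a}\cdot\mathbf{u}^\perp)$ belongs to the opposite convention $+\bm{\nu}\cdot\p|\p|$; with the minus sign it reads $(\bm{\nu}\cdot\mathbf{u}^\perp)\rho=-\mu(\mathbf{a}\cdot\mathbf{u}^\perp)$. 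Carried through consistently with the minus sign, case 3 gives the minimizer $\frac{-\omega}{\mu-2|\bm{\nu}|}\frac{\bm{\nu}}{|\bm{\nu}|}$ (pointing along $+\bm{\nu}$, since $\omega<0$ on that branch), \emph{not} the stated $\frac{\omega}{\mu-2|\bm{\nu}|}\frac{\bm{\nu}}{|\bm{\nu}|}$; and in case 4 the stationarity--substitution computation yields
\begin{equation*}
\mu^2|\mathbf{a}|\sin\theta-\mu|\bm{\nu}||\mathbf{a}|\sin\theta\cos(\theta-\alpha)-\omega|\bm{\nu}|\sin(\theta-\alpha)-\mu|\mathbf{a}||\bm{\nu}|\sin\alpha=0
\end{equation*}
together with $\p^*=\rho^*(\mathbf{b})\mathbf{b}$ whose denominator is $\mu-2\bm{\nu}\cdot\mathbf{b}$. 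These are exactly the theorem's displayed formulas with $\bm{\nu}$ replaced by $-\bm{\nu}$: equation (\ref{eq.theta}), the case-3 formula, and the denominator $\mu+2\bm{\nu}\cdot\mathbf{b}$ all correspond to the $+\bm{\nu}\cdot\p|\p|$ convention used in \cite{bae2017augmented} (the paper's theorem statement is in fact inconsistent with the paper's own definition of $\mathcal{E}_3$ by precisely this sign). So this is not, as your closing paragraph suggests, a matter of ``care with orientation and sign conventions'': no choice of orientation of $\theta$, $\alpha$, or $\widetilde{\theta}$ turns $\mu-2\bm{\nu}\cdot\mathbf{b}$ into $\mu+2\bm{\nu}\cdot\mathbf{b}$, since that discrepancy involves no angles at all. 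A correct write-up must either prove the theorem for the $+$ sign and note that applying it to the paper's $\mathcal{E}_3$ requires the substitution $\bm{\nu}\mapsto-\bm{\nu}$, or prove the $\bm{\nu}\mapsto-\bm{\nu}$ versions of the displayed formulas for $\mathcal{E}_3$ as defined. Your remaining caveats — multiple roots of the transcendental equation, admissibility $\rho^*(\mathbf{b})\geq0$, and comparing the finitely many candidates including $\p=\mathbf{0}$ — are sensible and would complete the argument once the sign issue is resolved.
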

Note that the condition in Theorem \ref{thm.p} is always satisfied since $\mu=\frac{r_1+r_3(1+|\n|^2)}{2},\nu=r_3\n$ and $ r_3(1+|\n|^2)\geq 2r_3|\n|$ for any $\n$.  From (\ref{eq.theta}), $\theta$ is solved by Newton's method. \\

\noindent\underline{\emph{Subproblem of $\n$.}} For $\mathcal{E}_4(\phi)$ in (\ref{eq.sub4}), the Euler-Lagrange equation  is:
\begin{equation}
-r_2\nabla(\nabla\cdot\n)+D\n=(D-r_3|\p|^2)\n-\nabla(r_2q+\lambda_2)-(\bflambda_3-r_3\p)|\p|\;,
\label{eq.sub4.1}
\end{equation}
where $D=\max_{x\in\Omega}(r_3|\p|^2+\beta_2)$ and $\beta_2$ is a small positive number. We discretize in time as
\begin{align}
-r_2\nabla(\nabla\cdot\n^{n+1})+D\n^{n+1}=(D-r_3|\p^n|^2)\n^n-\nabla(r_2q^n+\lambda_2^n)-(\bflambda_3^n-r_3\p^n)|\p^n|\;\label{eq.sub4.2}	
\end{align}
which can be solved efficiently by FFT.

For the initial condition, we use the same $\phi^0$ and $q^0$ as that in OSM. For other variables, we use $\p^0=\nabla\phi^0, \n^0=\p^0/|\p^0|, \bflambda_1^0=\bflambda_3^0=\mathbf{0}, \lambda_2^0=0$.

The outline of augmented Lagrangian is summarized in Algorithm \ref{alg:ALM}.

\begin{algorithm}
	\SetKwInOut{KwIni}{Initialization}
	\KwIni{$d$, $\phi^0,q^0,\p^0,\n^0,\bflambda_1^0,\lambda_2^0,\bflambda_3^0$.}
	\While {not converge}{
        \textbf{Update variables}\\
		Update $\phi^{k+1}=\arg\min_{\phi}\mathcal{L}(\phi,q^k,\p^k,\n^k,\bflambda_1^k,\lambda_2^k,\bflambda_3^k)$ by solving (\ref{eq.sub1.2}).\\
        Update $q^{k+1}=\arg\min_{q}\mathcal{L}(\phi^k,q,\p^k,\n^k,\bflambda_1^k,\lambda_2^k,\bflambda_3^k)$ by solving (\ref{eq.shrink}).\\
		Update $\p^{k+1}=\arg\min_{\p}\mathcal{L}(\phi^k,q^k,\p,\n^k,\bflambda_1^k,\lambda_2^k,\bflambda_3^k)$ according to Theorem \ref{thm.p}.\\
		Update $\n^{k+1}=\arg\min_{\n}\mathcal{L}(\phi^k,q^k,\p^k,\n,\bflambda_1^k,\lambda_2^k,\bflambda_3^k)$ by solving (\ref{eq.sub4.2}). \\
        Reinitialize $\phi^{k+1}$.\\
\textbf{Update Lagrange multipliers}:
   \begin{align*}
&\bflambda_1^{k+1} = \bflambda_1^k+r_1(\p^{k+1}-\nabla\phi^{k+1}),\\
&\lambda_2^{k+1} = \lambda_2^k+r_2(q^{k+1}-\nabla\cdot\n^{k+1}),\\
&\bflambda_3^{k+1} = \bflambda_3^k+r_3(\n^{k+1}|\p^{k+1}|-\p^{k+1}).
\end{align*}}
	\KwOut{$\phi^{k}$.}
	\caption{Augmented Lagrangian Method (ALM) for $s=1$.}
	\label{alg:ALM}
\end{algorithm}

\subsection{Numerical Implementation Details}\label{sec:numD}

For a rectangular domain $\Omega=[0,M]\times[0,N]\in\mathds{R}^2$ with $M,N$ being positive integers, we discretize it by a Cartesian grid with $\Delta x=\Delta y=1$. For any function $u$ (resp. $\bv=(v^1,v^2)^T$) defined on $\Omega$, we use $u_{i,j}$ (resp. $\bv_{i,j}=(v^1_{i,j},v^2_{i,j})^T$) to denote $u(i\Delta x,j\Delta y)$ (resp. $\bv(i\Delta x,j\Delta y)^T$) for $0\leq i\leq M, 0\leq j\leq N$. Denote the standard forward and backward difference as
\begin{eqnarray*}
&&\partial_1^-u_{i,j}=\begin{cases}u_{i,j}-u_{i-1,j},~1<i\leq M;\\
u_{1,j}-u_{M,j},~i=1.	
\end{cases} \;\;\;
 \partial_1^+u_{i,j}=\begin{cases}
u_{i+1,j}-u_{i,j},~1\leq i<M-1;\\
u_{1,j}-u_{M,j},~i=M.	
\end{cases}
\\
&&\partial_2^-u_{i,j}=\begin{cases}u_{i,j}-u_{i,j-1},~1<j\leq N\;;\\
u_{i,1}-u_{i,N},~j=1.	
\end{cases} \;\;\;
 \partial_2^+u_{i,j}=\begin{cases}u_{i,j+1}-u_{i,j},1\leq j<N-1;\\
	u_{i,1}-u_{i,N},~j=N.
\end{cases}
\end{eqnarray*}
The gradient, divergence and the Laplacian operators are approximated as follows:
\begin{align*}
&\nabla u_{i,j}=((\partial_1^-u_{i,j}+\partial_1^+u_{i,j})/2, (\partial_2^-u_{i,j}+\partial_2^+u_{i,j})/2)\;, \\
&\nabla\cdot\mathbf{v}_{i,j} = (\partial_1^+v_{i,j}^1 + \partial_1^-v_{i,j}^1)/2 + (\partial_2^+v_{i,j}^2 + \partial_2^-v_{i,j}^2)/2\;,\\
&\Delta u_{i,j}= \partial_1^+u_{i,j}-\partial_1^-u_{i,j}+ \partial_2^+u_{i,j}-\partial_2^-u_{i,j}\;.
\end{align*}
Denote the discrete Fourier transform and its inverse by $\mF$ and $\mF^{-1}$, respectively. For a function $u$, we have
\begin{eqnarray*}
	&\mF (u)(i\pm1,j)=e^{\pm2\pi\sqrt{-1}(i-1)/M}\mF (u)(i,j), &\mF (u)(i,j\pm 1)=e^{\pm2\pi\sqrt{-1}(j-1)/N}\mF (u)(i,j)\;,
\end{eqnarray*}
which gives rise to
\begin{align*}
  &\mF (\partial_1^-u)(i,j)=(1-e^{-2\pi\sqrt{-1}(i-1)/M})\mF (u)(i,j)\;,
\end{align*}
and $\mF (\partial_1^+u)(i,j),\mF (\partial_2^-u)(i,j)$ and $\mF (\partial_2^+u)(i,j)$ can be computed similarly.
Both OSM and ALM use FFT to enhance the computational efficiency.
The first equation of (\ref{eq.split1}) and the first equation of (\ref{eq.sub1.2}) belong to the same class:
\begin{equation}
  -a\Delta u+bu=c\;,
  \label{eq.laplacian}
\end{equation}
for a function $u$ with constants $a,b>0$ and constant $c$. Using the Fourier transform, we have
\begin{equation*}
\mF (\Delta u)(i,j)=\left[2\cos(\pi\sqrt{-1}(i-1)/M)+2\cos(\pi\sqrt{-1}(j-1)/N)-4\right]\mF u(i,j)\;.
\end{equation*}
Then (\ref{eq.laplacian}) can be solved by
\[
u=\mF^{-1}\left( \frac{\mF(c)}{b-a\left(2\cos(\pi\sqrt{-1}(i-1)/M)+2\cos(\pi\sqrt{-1}(j-1)/N)-4\right)} \right)\;.
\]
The equation (\ref{eq.sub4.1}) is in the form of
\begin{equation}
  -a\nabla(\nabla\cdot \bv)+b\bv=\bc\;,
  \label{eq.disn.1}
\end{equation}
for some vector valued function $\bv=(v^1,v^2)^T$ where $a,b$ are constant positive scalars and $\bc=(c^1,c^2)^T$ is a vector valued constant. After distretization, (\ref{eq.disn.1}) can be written as
\begin{equation}
\begin{cases}
-a\nabla(\partial_1^+\partial_1^-v^1+\partial_1^+\partial_2^-v^2)+bv^1=c^1,\\
-a\nabla(\partial_2^+\partial_1^-v^1+\partial_2^+\partial_2^-v^2)+bv^2=c^2.	
\end{cases}
\label{eq.disn.2}
\end{equation}
Applying the discrete Fourier transform on both sides of (\ref{eq.disn.2}), we get
\begin{align}
A
\begin{bmatrix}
\mathcal{F}(v^1)(i,j)\\
\mathcal{F}(v^2)(i,j)	
\end{bmatrix}
=\begin{bmatrix}
\mathcal{F}(c^1)(i,j)\\
\mathcal{F}(c^2)(i,j)	
\end{bmatrix}\;,
\label{eq.disn.3}
\end{align}
with
$$
A=\begin{bmatrix}
	b-a(e^{\sqrt{-1}(i-1)/M}-1)(1-e^{-\sqrt{-1}(i-1)/M})&-a(e^{\sqrt{-1}(i-1)/M}-1)(1-e^{-\sqrt{-1}(j-1)/N})\\
-a(e^{\sqrt{-1}(j-1)/N}-1)(1-e^{-\sqrt{-1}(i-1)/M})&b-a(e^{\sqrt{-1}(j-1)/N}-1)(1-e^{-\sqrt{-1}(j-1)/N})	
\end{bmatrix}.
$$
Hence, $\bv$ can be computed by first solving (\ref{eq.disn.3}) for $(\mF(v^1),\mF(v^2))^T$ and then apply inverse Fourier transform.

For any $\x\in \Omega$, $d(\x)$ is the distance from $\x$ to  the collection of the point cloud $\mathcal{D}$, and it can be computed by solving the Eikonal equation
\begin{equation}
  \begin{cases}
    |\nabla d|=1\;,\\
    d(\x)=0\;, \;\;\;  \forall \x\in \mathcal{D}.
  \end{cases}
  \label{eq.eikonal}
\end{equation}
The simplest monotonic scheme to discretize (\ref{eq.eikonal}) is the Lax-Friedrich scheme which leads to the updating formula \cite{kao2004lax}:
\begin{eqnarray*}
  &&d_{i,j}^{n+1}=\frac{1}{2}\Bigg(1-|\nabla d^n_{i,j}|+ \frac{d^n_{i+1,j}+d^n_{i-1,j}}{2} +  \frac{d^n_{i,j+1}+d^n_{i,j-1}}{2}\Bigg)\;.
\end{eqnarray*}
This is updated with a fast sweeping method.

To make our algorithm robust, when updating $\phi$, we reinitialize the level set to be a signed distance function via solving
\[
  \phi_{\tau}+\mbox{sign}(\phi)(1-|\nabla \phi|)=0.
 \]
In practice, after each iteration, we only solve this PDE for a few iterations.   For three dimensional space, we use a simple extension of the two dimensional case.

\begin{remark}
In three-dimensional surface reconstruction problems, the point cloud can be large and the computer memory is limited.  One can consider a narrow tube which encloses the point cloud, and assume the reconstructed surface lies inside this tube during the evolution.  Adopting the local level set method such as \cite{peng1999pde}, only the values of the level set function on grid points inside the tube need to be stored.  ALM and OSM can be applied under the local level set method framework except for solving the Laplace equations. Under this framework, one can derive a corresponding linear system for each Laplace equation which can be solved efficiently by the conjugate gradient method.
\end{remark}
\section{Numerical Results and Comparisons}\label{sec:Exp}

We present numerical results of the proposed model (\ref{eq.energy}). Without specification, when OSM is used, we refer to Algorithm \ref{alg:OSM} for model (\ref{eq.splitEner}) with $\gamma=10,\alpha=1$ and $\Delta t=50$; when ALM is used, we refer to Algorithm \ref{alg:ALM} for model (\ref{eq:almModel}) with $\beta=0.1$.
For a fixed $\eta$, OSM only has one (the time step) parameter, whereas ALM has three parameters ($r_1,r_2,r_3$). We use domain $[0,100]^2$ for two dimensional problems and $[0,50]^3$ for three dimensional problems. In all examples, $\varepsilon=1$ is used.

In this section, we first consider the effect of parameters for ALM.  Second, we compare the performance of ALM and OSM.  We find that using OSM with $s=2$ gives the best results.  We conclude this section by several examples to further explore the performance of OSM when $s=2$.

\subsection{Choice of  Parameters for  ALM  Method}
In the case of ALM,  the choice and combinations of the parameters are delicate.    When $r_1$ or $r_2$ is increased, the reconstruction becomes closer to the point cloud.  In Figure~\ref{fig.r1}, we fix $r_2=10$, $r_3=3$, and $\eta=2$, and let $r_1$ vary.  With increased $r_1$, ALM renders the curve closer to the point cloud.  In Figure~\ref{fig.r2}, we fix $r_1=10$, $r_3=3$, and $\eta=2$, and let $r_2$ vary; larger $r_2$ induces better reconstruction. For this example,  $r_3$ has  little influence, yet, with a large $r_3$, the results may become unstable or divergent.
\begin{figure}
\centering
\includegraphics[scale=0.3]{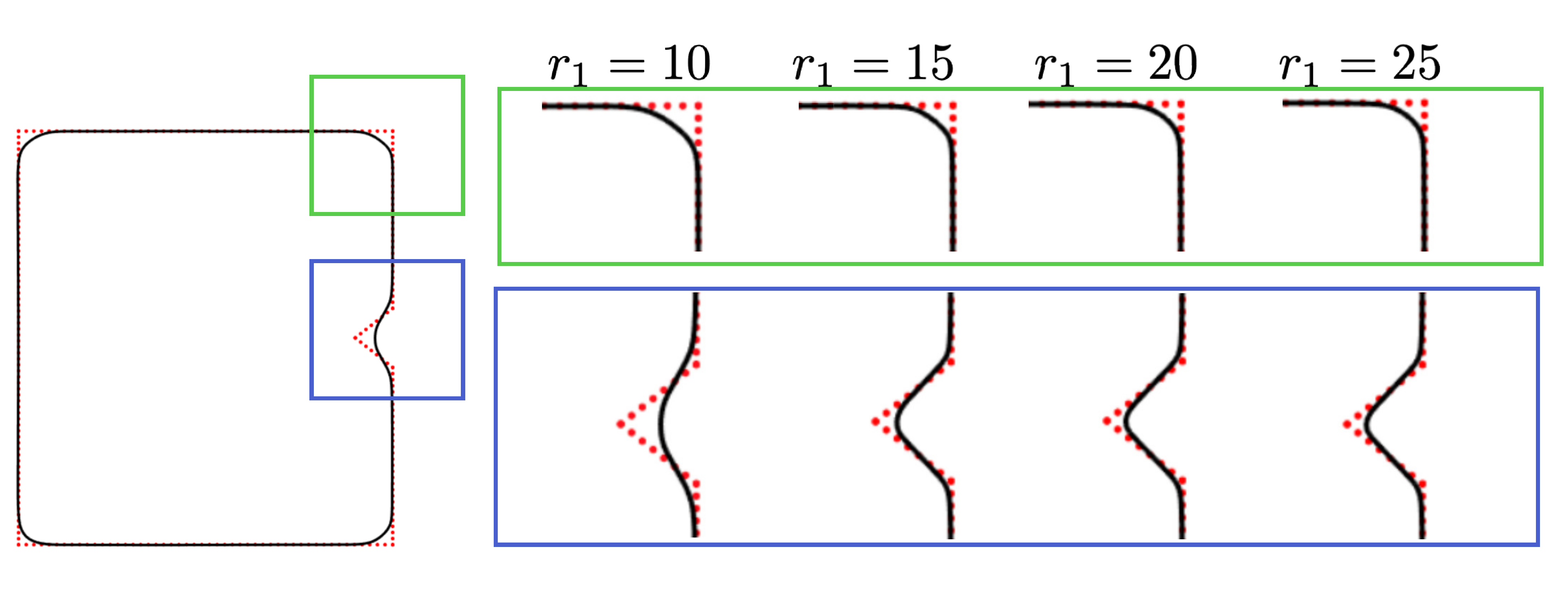}
\caption{Effect of $r_1$ in ALM. For fixed $r_2=10$, $r_3=3$, and $\eta=2$, increasing $r_1$ induces better reconstruction on the concave part.}\label{fig.r1}
\end{figure}
\begin{figure}
\centering
\includegraphics[scale=0.3]{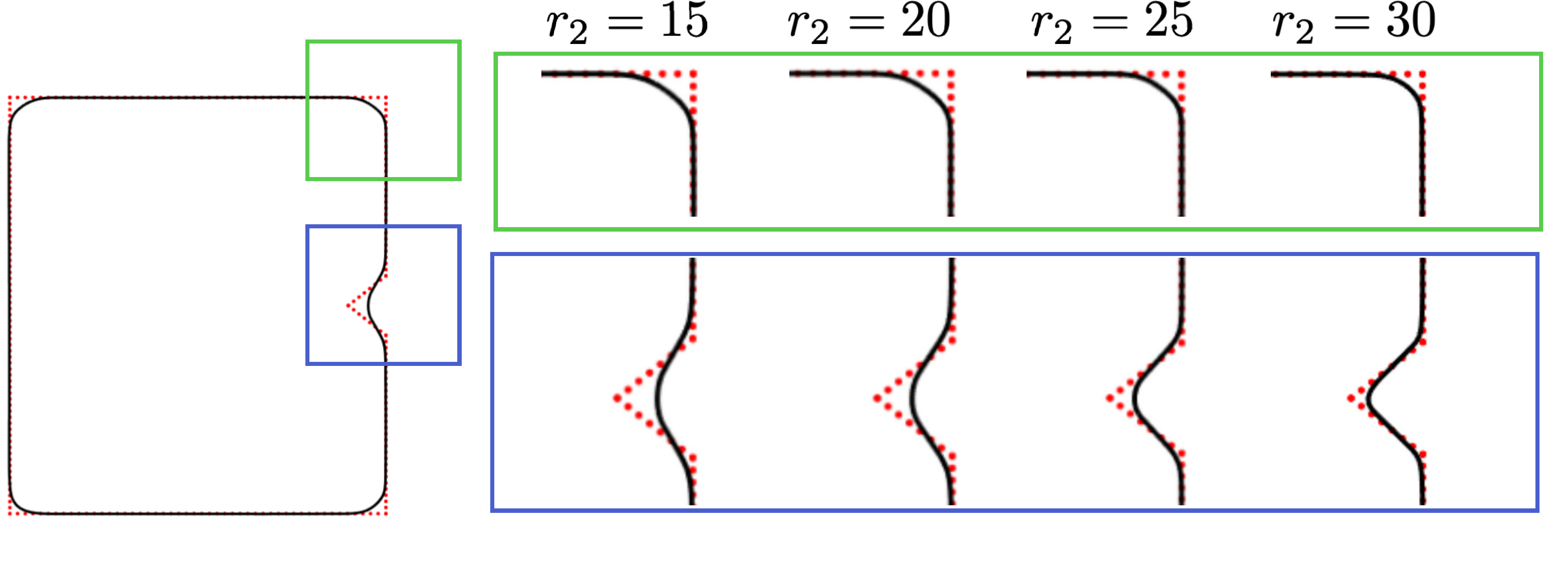}
\caption{Effect of $r_2$ in ALM. For fixed $r_1=10$, $r_3=3$, and $\eta=2$, increasing $r_2$ induces better reconstruction on the concave part.}\label{fig.r2}
\end{figure}

In Figure~\ref{fig.rho}, we fix $r_1=15$, $r_2=10$, and $r_3=3$, and increase $\eta$.  With more influence on the curvature,  as $\eta$ is increased from $0$ to $1$, the indent on the rectangle is better reconstructed. When we increase $\eta$ from $1$ to $5$, we see that, the indent is preserved, yet the tip of the wedge does not extend inward as much as in the case where $\eta=1$. This is because a large value of $\eta$ encourages both small mean curvature and short curve length.  We find that increasing $\eta$ also helps to avoid oscillation during the iteration.  In Figure~\ref{fig.rho}, we plot the energy curves corresponding to the these cases. Before the $100$-th iteration, these curves are indistinguishable; however, after the $100$-th iteration, larger values of $\eta$ suppress the oscillation of the energy curve, which gives a more stable convergence.
\begin{figure}
\centering
\includegraphics[width=0.8\textwidth]{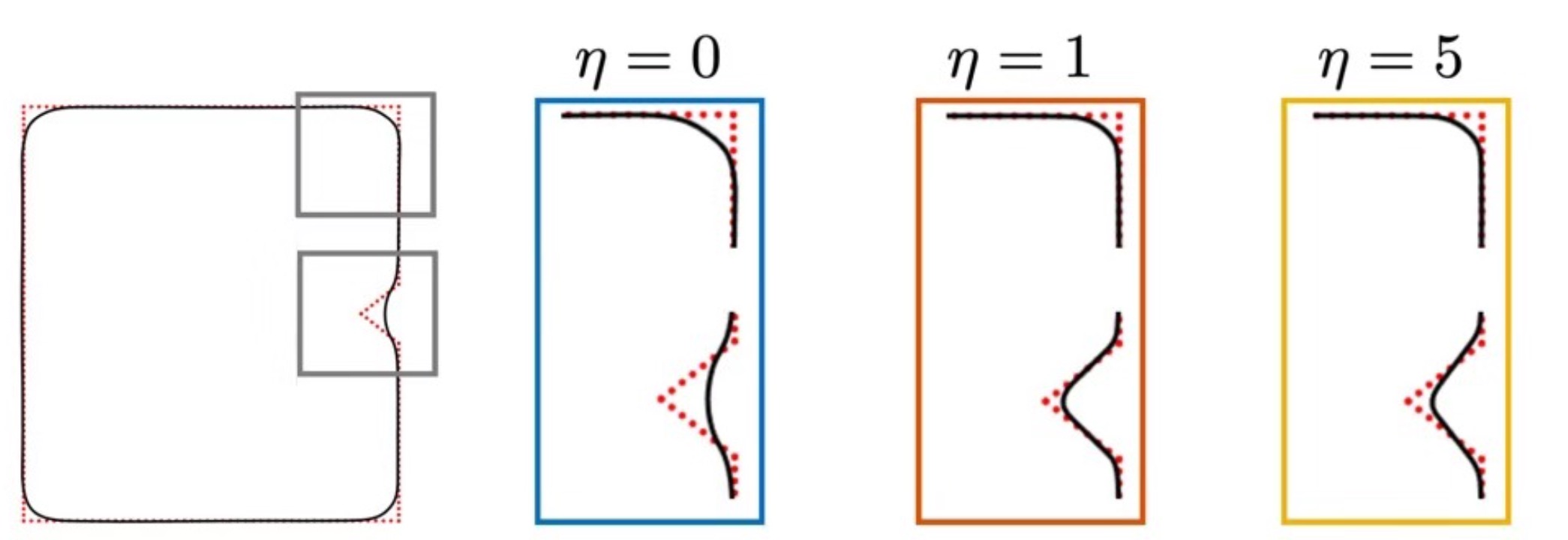}	\\
\includegraphics[width=\textwidth]{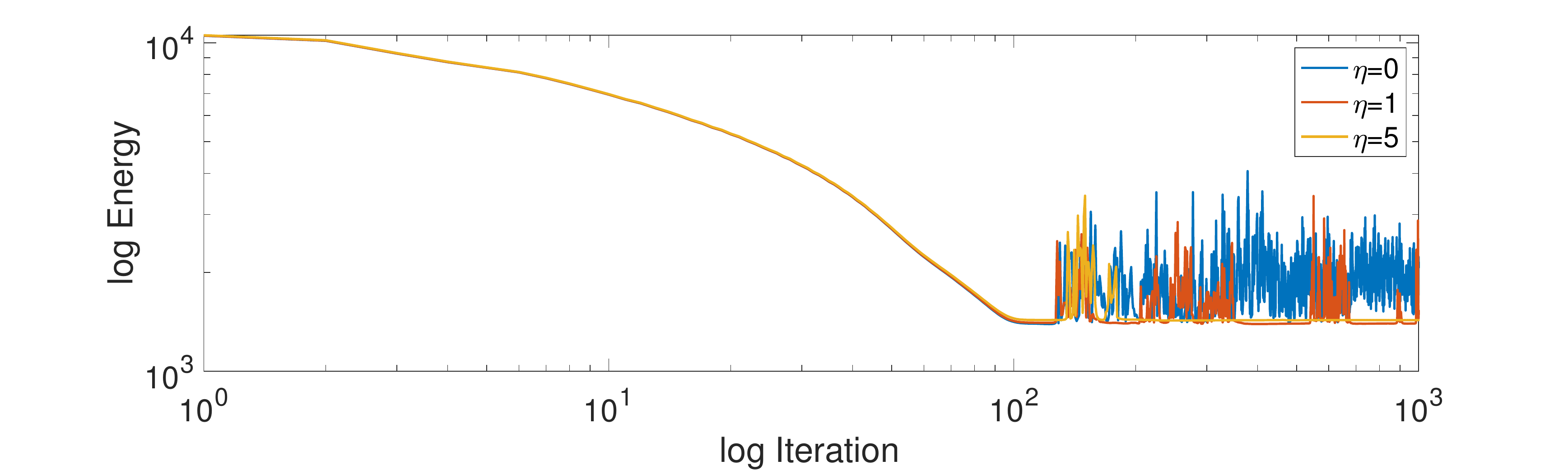}	
\caption{Effect of $\eta$ in ALM. Here $r_1=15$, $r_2=10$, and $r_3=3$ are fixed. Increasing $\eta$ induces reconstruction of the concave wedge. Although in cases, the energy curves are identical before the $100$-th iteration,  larger $\eta$ suppresses the oscillation of the energy curve: yellow line ($\eta=5$) is  more stable compared to red ($\eta=1$) or blue ($\eta =0$). }~\label{fig.rho}
\end{figure}

Figure~\ref{fig.noise_ALM} shows the robustness against noise.   The point cloud is sampled from a circle in $\Omega=[0,200]^2$, and Gaussian noise with standard deviation 2 is added to the data. Using ALM with $r_1=15,r_2=10, r_3=3$, we test $\eta= 0, 1$ and 10.  Figure~\ref{fig.noise_ALM} (a) shows similar performances, while the zoomed-in results in Figure~\ref{fig.noise_ALM} (b) show that the larger the $\eta$  the smoother the result becomes.
\begin{figure}
\centering
\begin{tabular}{cc}
(a) & (b) \\
\includegraphics[scale=0.4]{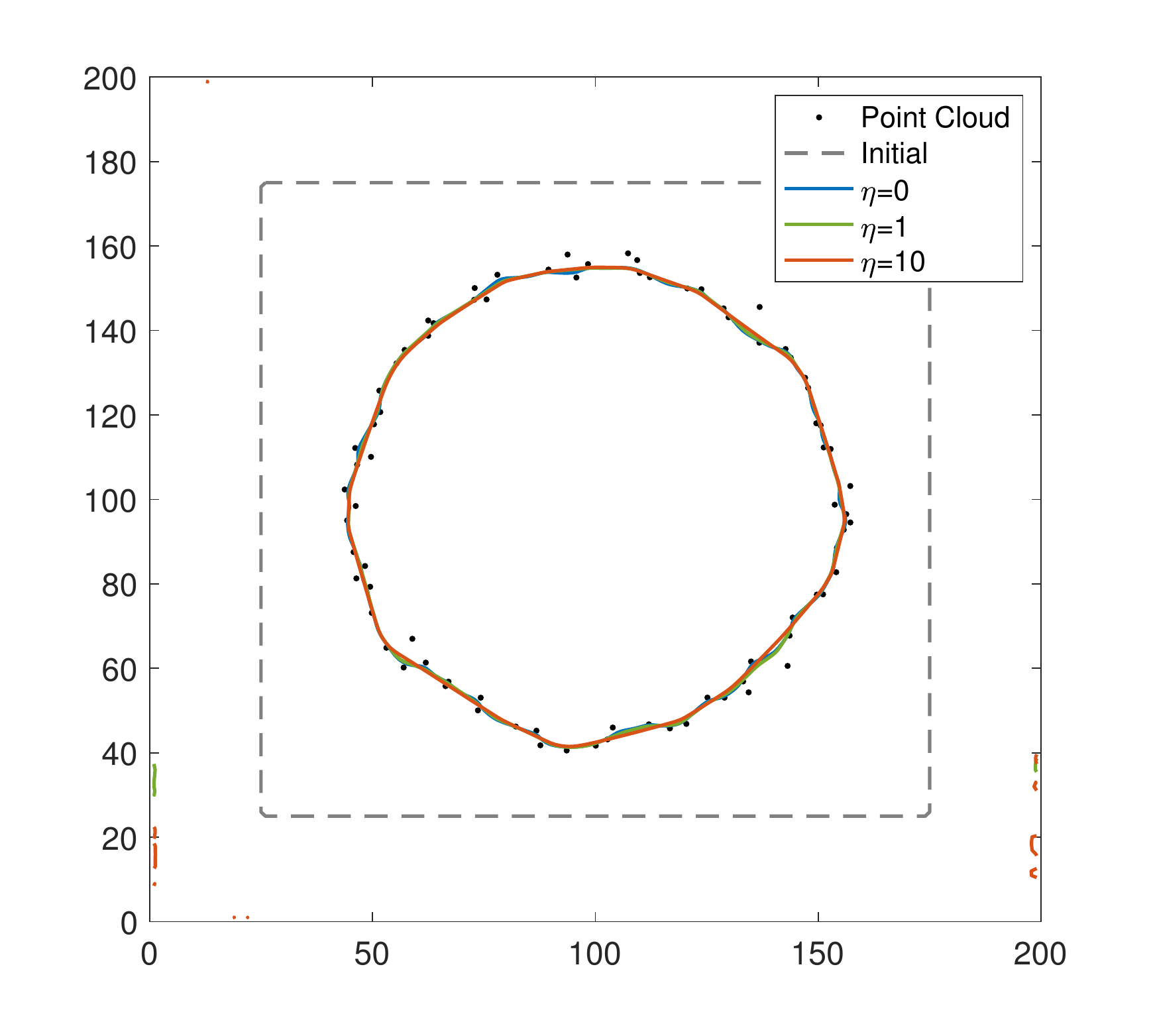} &
\includegraphics[scale=0.4]{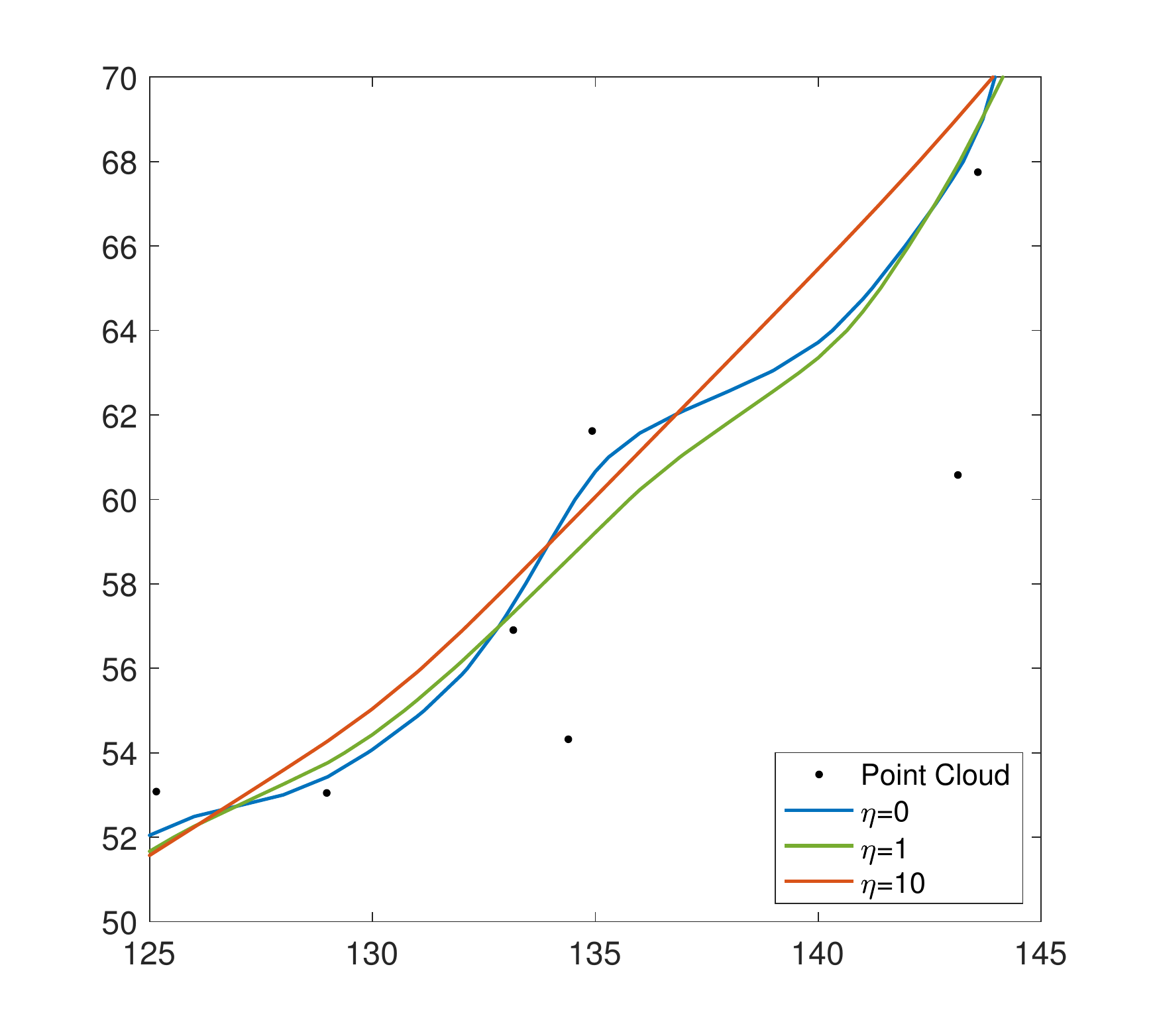}
\end{tabular}
\caption{(a) Results by ALM with noisy data and $r_1=15,r_2=10,r_3=3$. (b) shows a zoom-in of the right-bottom  of (a).  The noise is additive Gaussian with standard deviation 2. As $\eta$ increases, the curve becomes less oscillatory.}\label{fig.noise_ALM}
\end{figure}

\subsection{Comparison between OSM and ALM}

Figure \ref{fig.energycomp} (a) shows the energy convergence comparison between OSM (with $s=2$) and ALM ($s=1$). In ALM, $r_1=15,r_2=10,r_3=3$ is used.   Convergence to the steady state is faster for OSM. The reconstructed curves are shown in Figure \ref{fig.energycomp} (b) and (c).   ALM  with $s=1$ prefers to shorten the length, since it allows sharp corners.  There is a balance between the distance term and the regularization term in the functional.  OSM performs better in preserving corners, while the results extrude out a little bit at all corners.   In this case of $s=2$, the reconstruction is more circular, since sharp corners are not allowed.
\begin{figure}
\centering
\centering
  \begin{tabular}{ccc}
  (a) & (b) & (c)\\
  \includegraphics[width=0.32\textwidth]{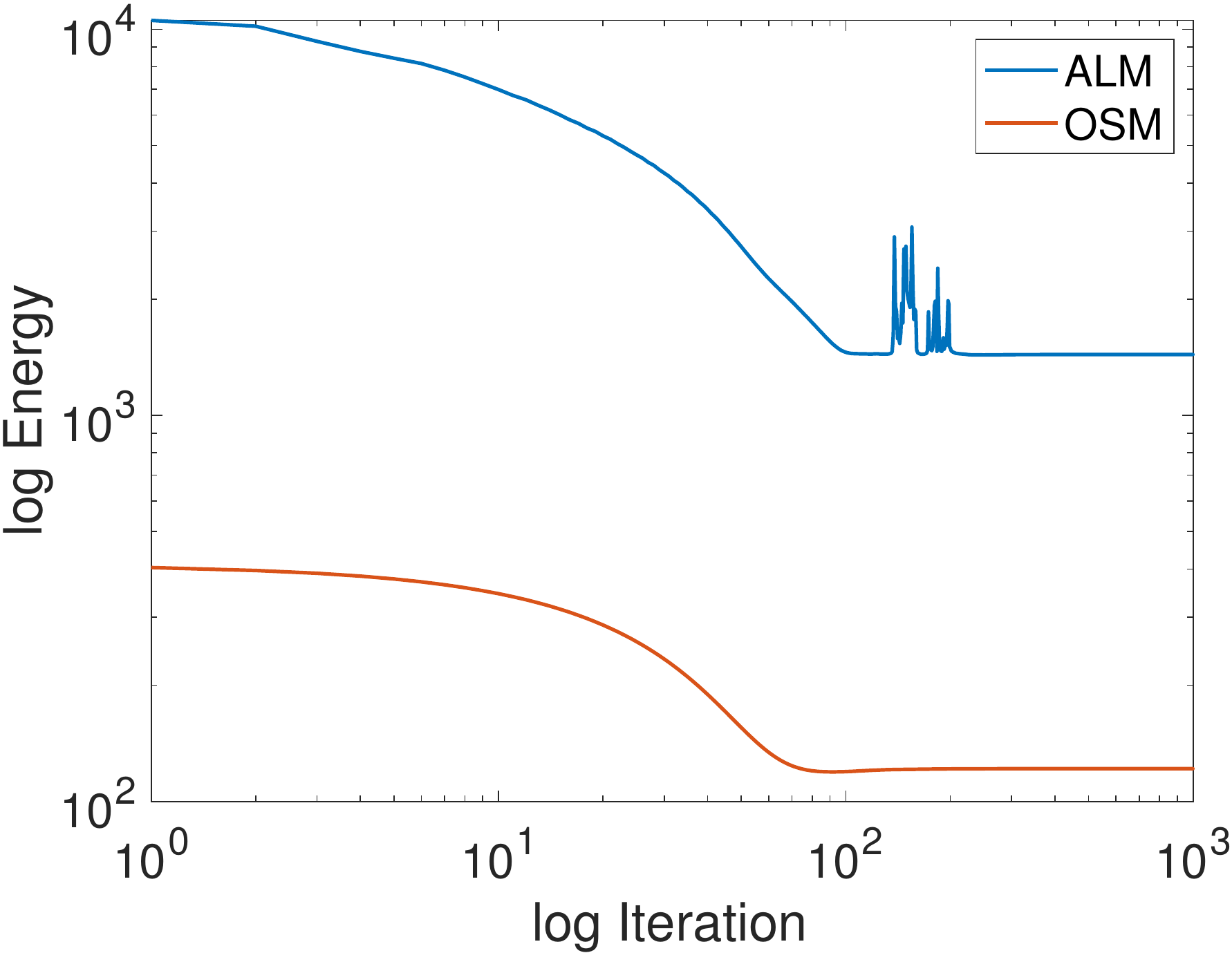}&
  \includegraphics[width=0.2\textwidth]{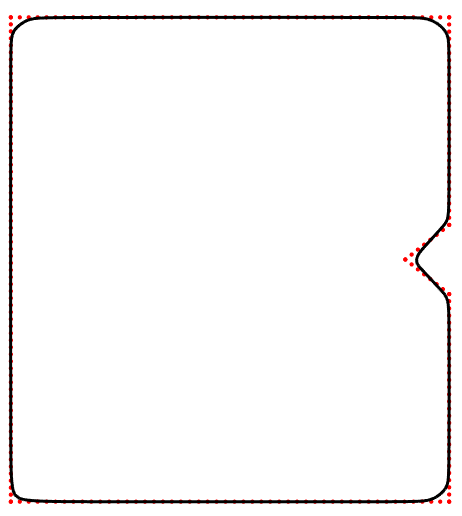}&
  \includegraphics[width=0.2\textwidth]{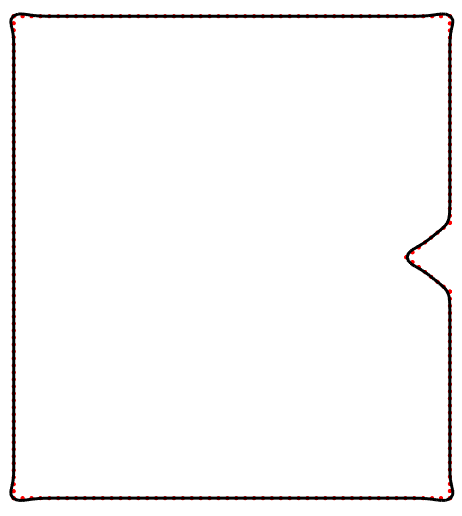}
  \end{tabular}

\caption{With $\eta=2$, comparison between OSM  with $s=2$ and ALM. In ALM, $r_1=15,r_2=10,r_3=3$ is used. Convergence to the steady state is faster for OSM.   The reconstructed curves are shown in (b) for ALM and in (c) for OSM: ALM may shorten the curve, while OSM can extrude a corner to make a circular reconstruction.  }\label{fig.energycomp}
\end{figure}

Using OSM, we fix $s=2$ for the distance term in (\ref{eq.energy}) and explore the difference between using (I) no curvature term $\eta=0$, (II) $L_1$ norm of the mean curvature, and (III) $L_2$ norm of the mean curvature.  Figure \ref{fig.model} (a) shows the comparison between  $\eta=0$ (green curve), $s=1$ (red curve), and $s=2$ (blue curve).  The blue curve (OSM with $s=2$) in (a) is presented separately in Figure \ref{fig.model} (b).
OSM using $s=2$ gives the best result capturing the structure of the underlying surface more accurately.
\begin{figure}
  \centering
  \begin{tabular}{cc}
  (a) & (b) \\
  \includegraphics[width=0.35\textwidth]{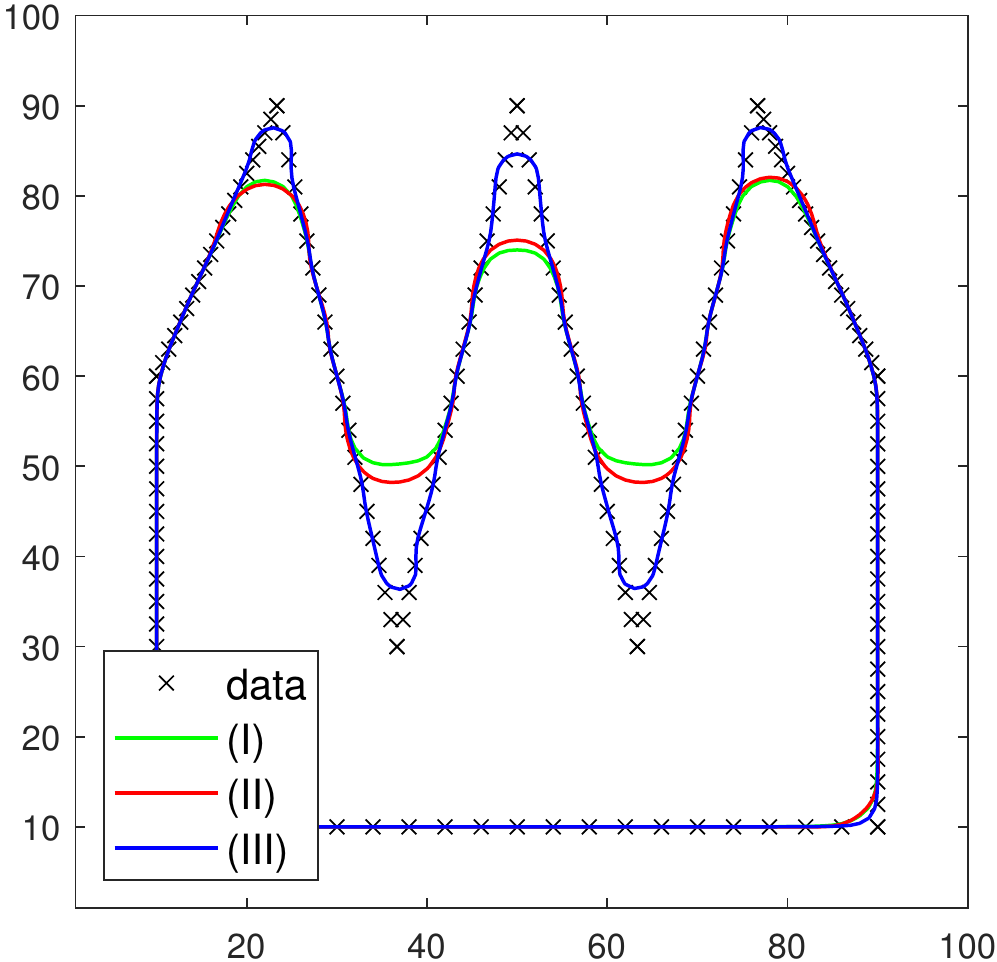}&
  \includegraphics[width=0.35\textwidth]{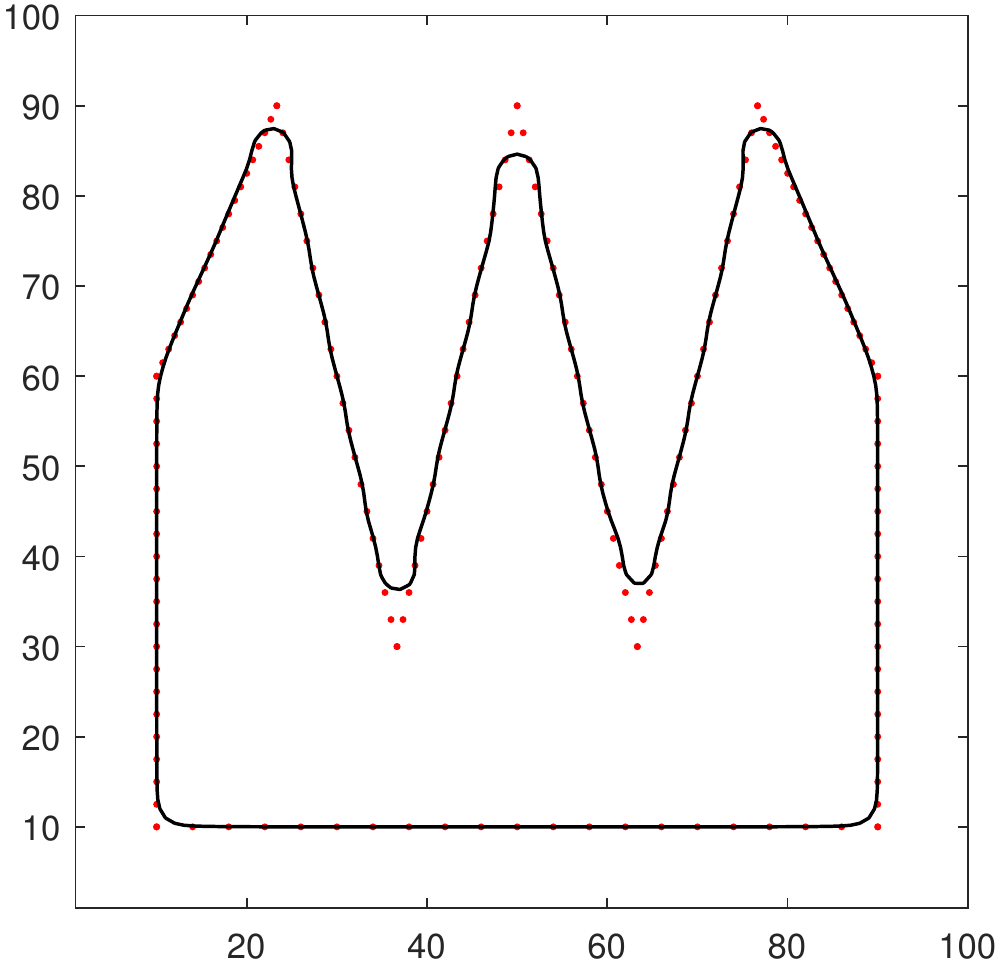}
  \end{tabular}
  \caption{ (a) By OSM with $s=2$ for the distance term in (\ref{eq.energy}), the comparison between (I) $\eta=0$ (green curve), (II) $s=1$ (red curve), and (III) $s=2$ (blue curve) for the curvature term. (b)  OSM with $\eta=2.5$ for $s=2$ in the model (\ref{eq.splitEner}). This is the blue curve in (a).  OSM using $s=2$ gives the best result in terms of capturing the structure of the underlying surface more accurately.}\label{fig.model}
\end{figure}
The rest of the numerical experiments use OSM with $s=2$,  which gives more stable results with less number of parameters and faster convergence with smaller minimized energies.

\subsection{Effect of curvature constraint: OSM with $s=2$}
\label{sec.ex2d}

Figure \ref{fig.ex2d} shows the comparison between the algorithm from \cite{EZL*12} (the first row),  $\eta = 0$ (the second row), and
OSM with $\eta>0$ (the third row) for different surfaces.  In the algorithm from \cite{EZL*12}, $r_1=r_2=8,r_3=r_4=3$ are used and the reinitialization is used to post-process the surface. For the boomerang shape in column (d),  the surface constructed by \cite{EZL*12} first shrinks to a point and then disappears. From this comparison, OSM with the curvature regularization provides the best results. The performance of the algorithm from \cite{EZL*12} is similar to that of OSM without curvature regularization.

With the curvature term,  the shape of the underlying surface is better captured in the third row.
The interior triangle shape of Figure \ref{fig.ex2d}~(a), the first column, is better captured with the curvature term.  Without the curvature term,  the reconstructed curve does not  move further inside, since the prominent part has attained the balance between the curve length and its distance to the two sides of the triangle.  Notice that the prominent part in the second row has non-zero curvature.  With a positive $\eta$, this balance is broken and the prominent part will further move towards the upper vertex of the triangle.   Also, for the cases with sharp corners in Figure \ref{fig.ex2d}~(c)-(d), our model with the curvature term improves the results and recover the underlying shape better.
\begin{figure}
  \begin{tabular}{c c c c}
    (a) & (b) & (c) & (d) \\
     \includegraphics[width=0.22\textwidth]{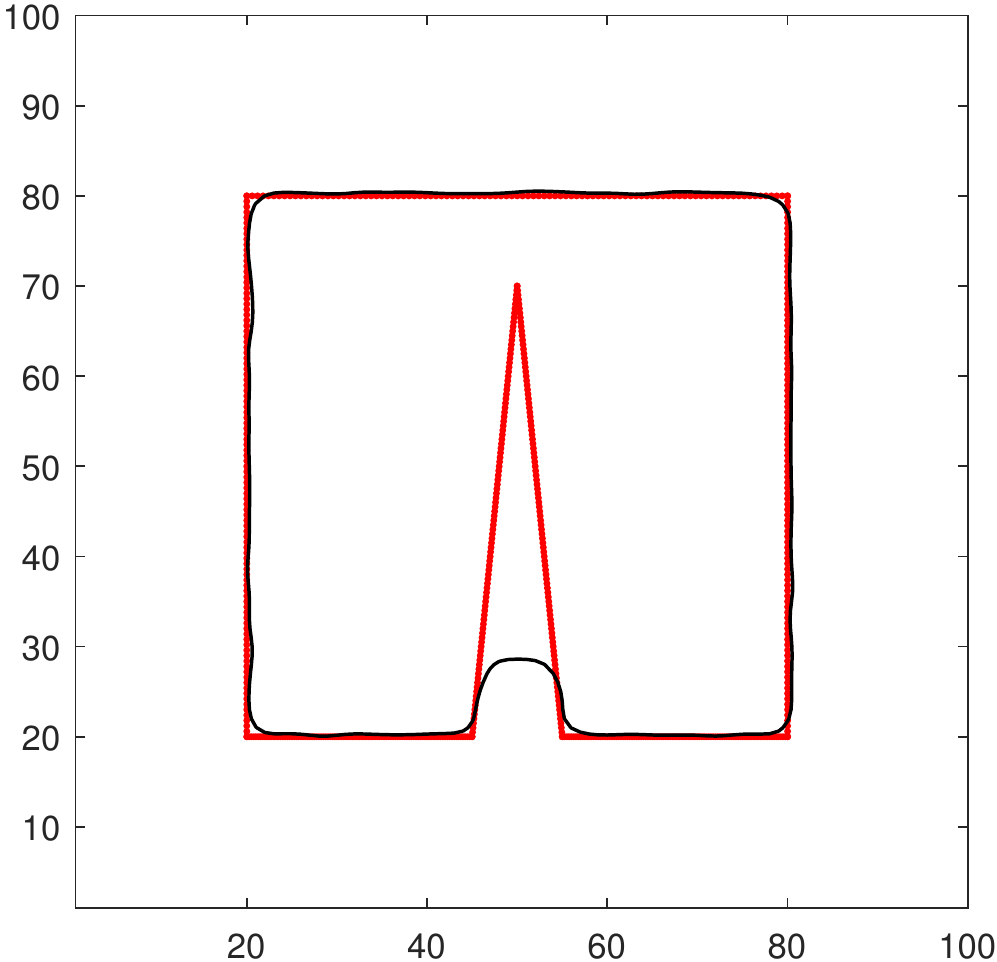}&
    \includegraphics[width=0.22\textwidth]{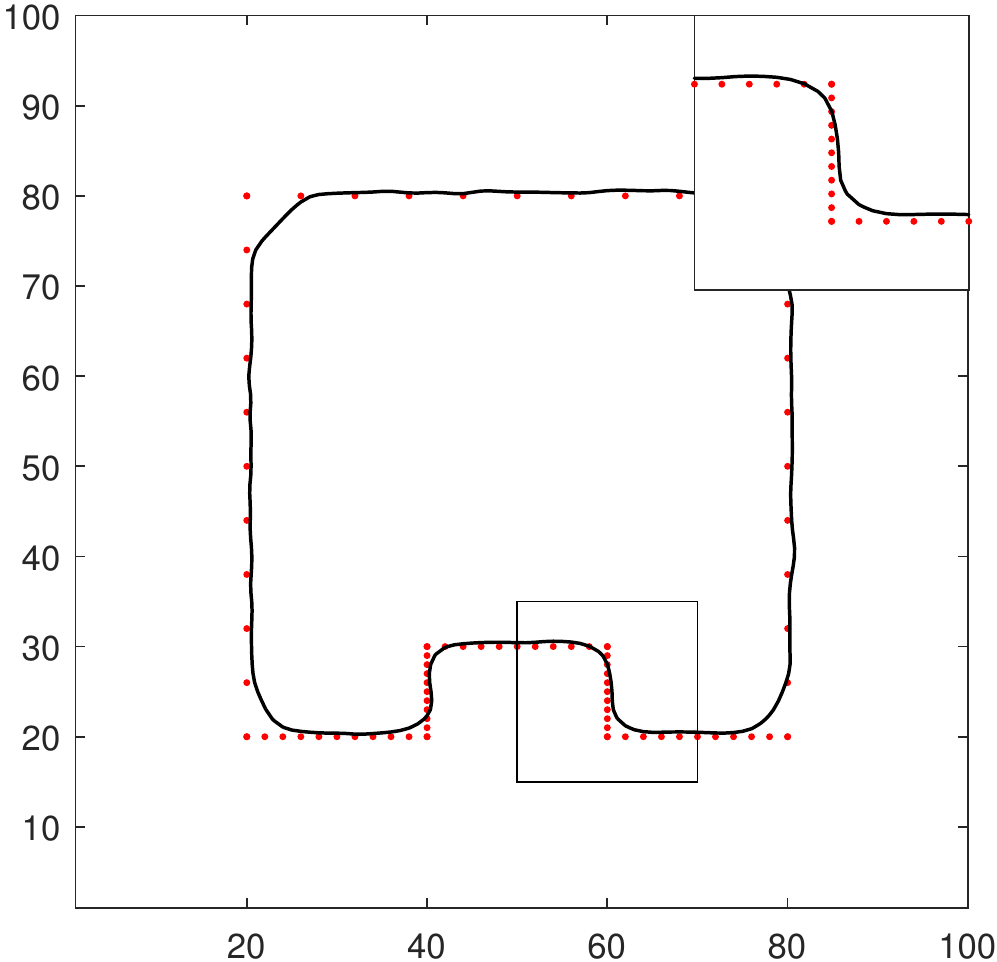}&
    \includegraphics[width=0.22\textwidth]{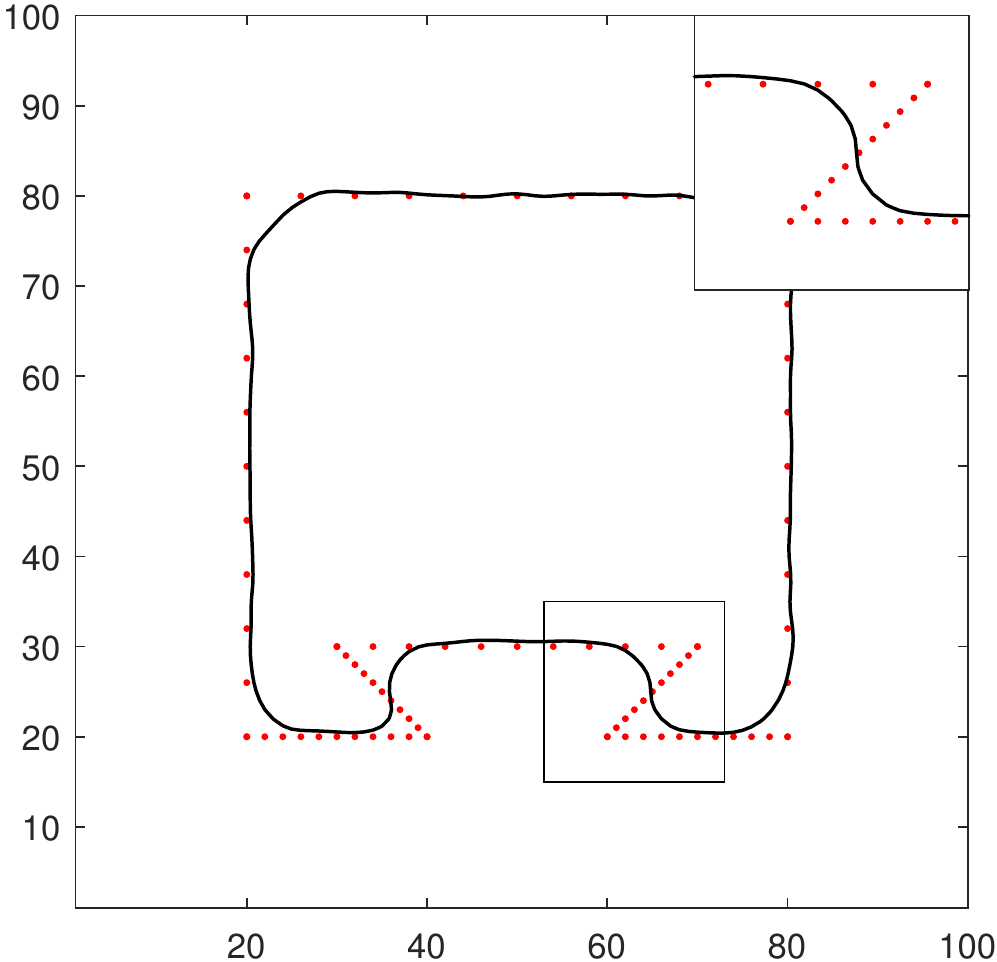}& \\
    \includegraphics[width=0.222\textwidth]{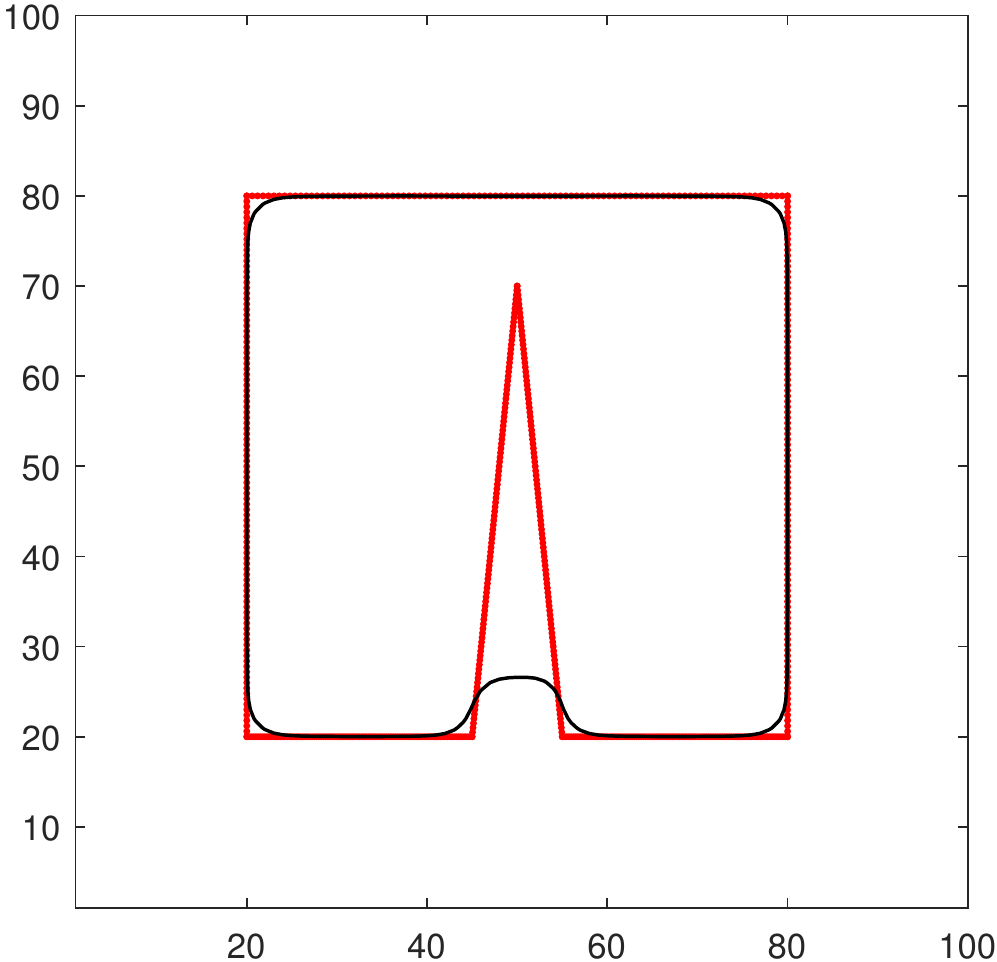}&
    \includegraphics[width=0.22\textwidth]{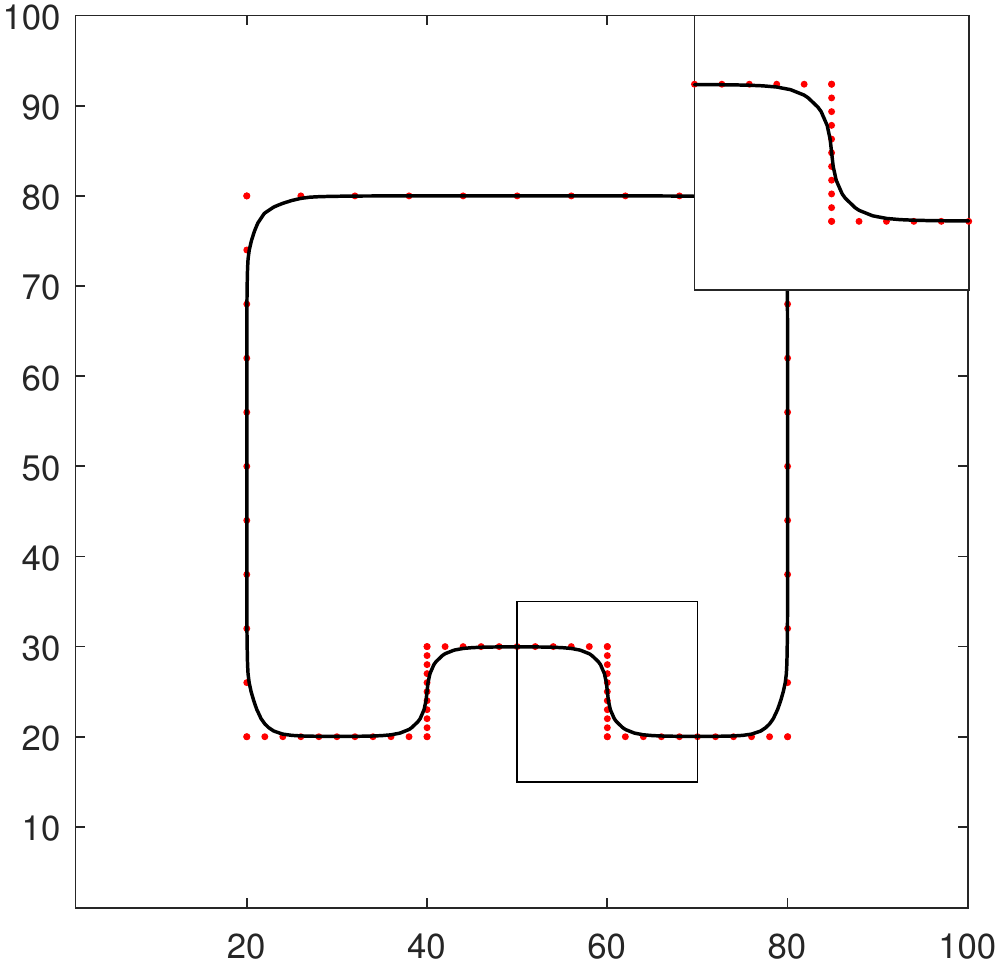}&
    \includegraphics[width=0.22\textwidth]{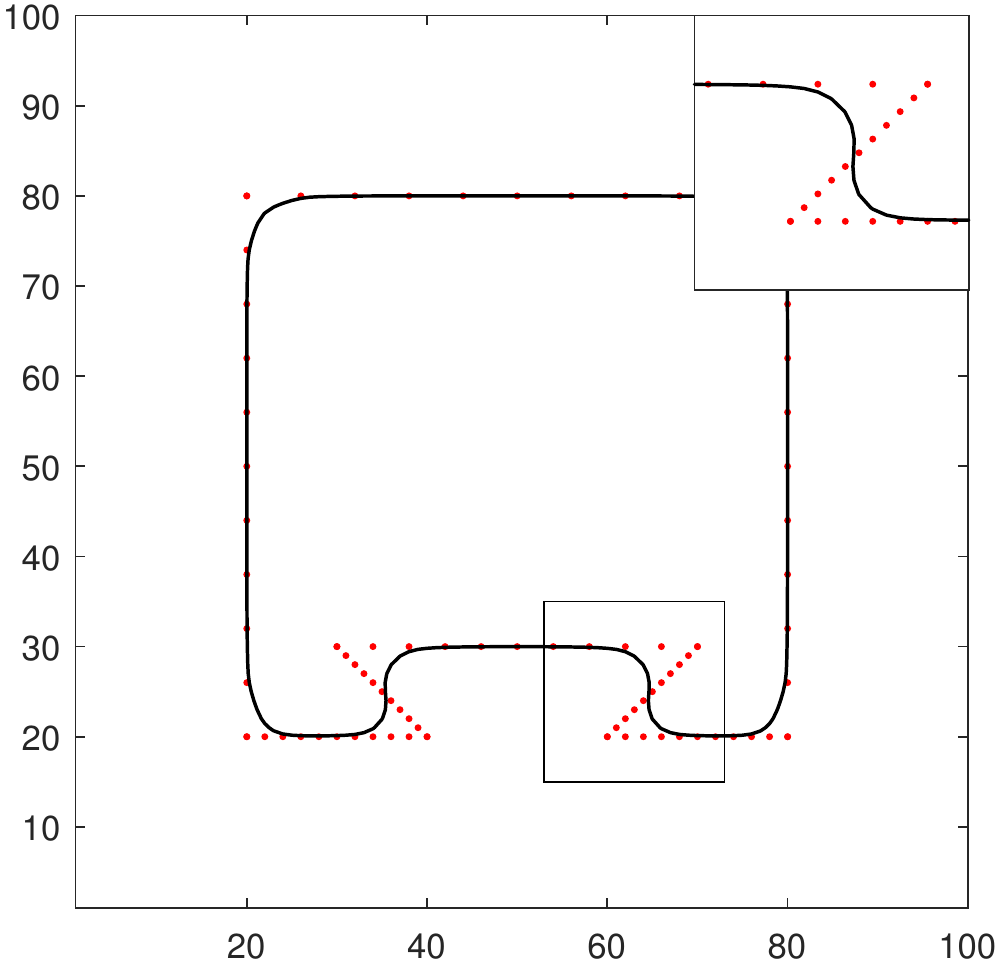}&
    \includegraphics[width=0.22\textwidth]{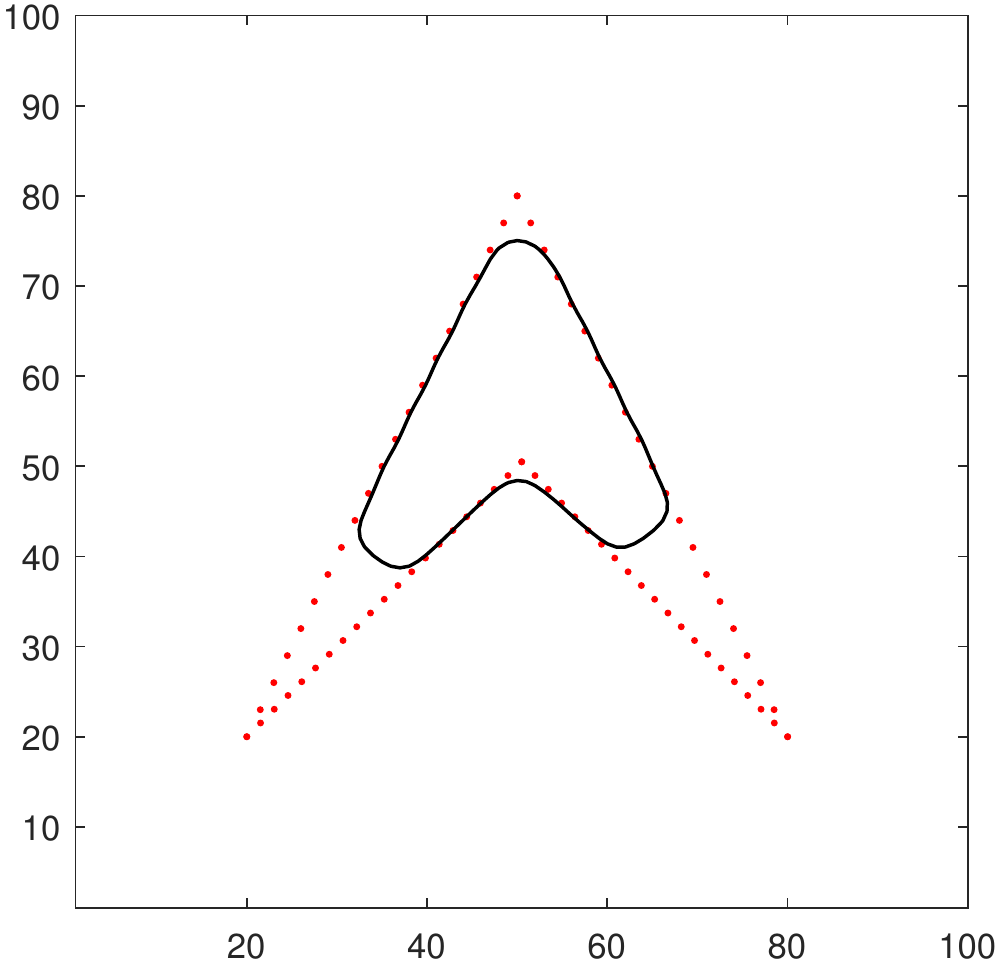}\\
    \includegraphics[width=0.22\textwidth]{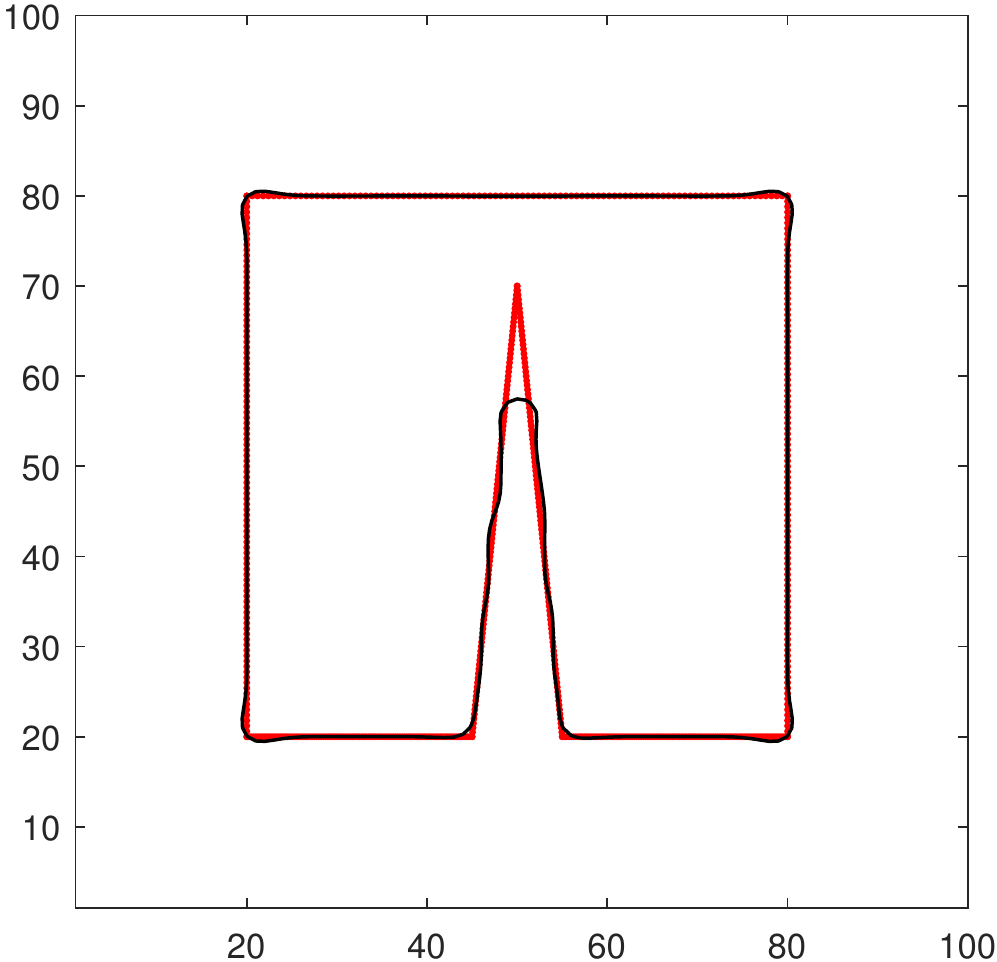}&
    \includegraphics[width=0.22\textwidth]{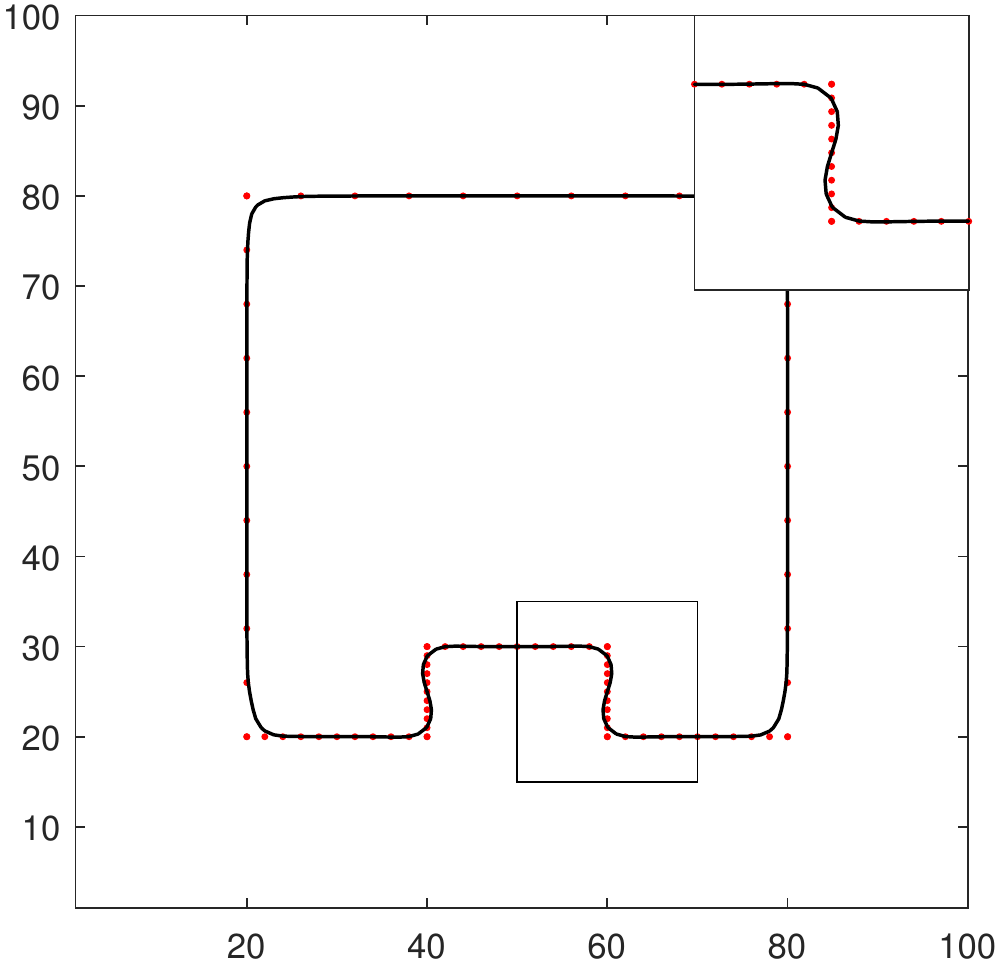}&
    \includegraphics[width=0.22\textwidth]{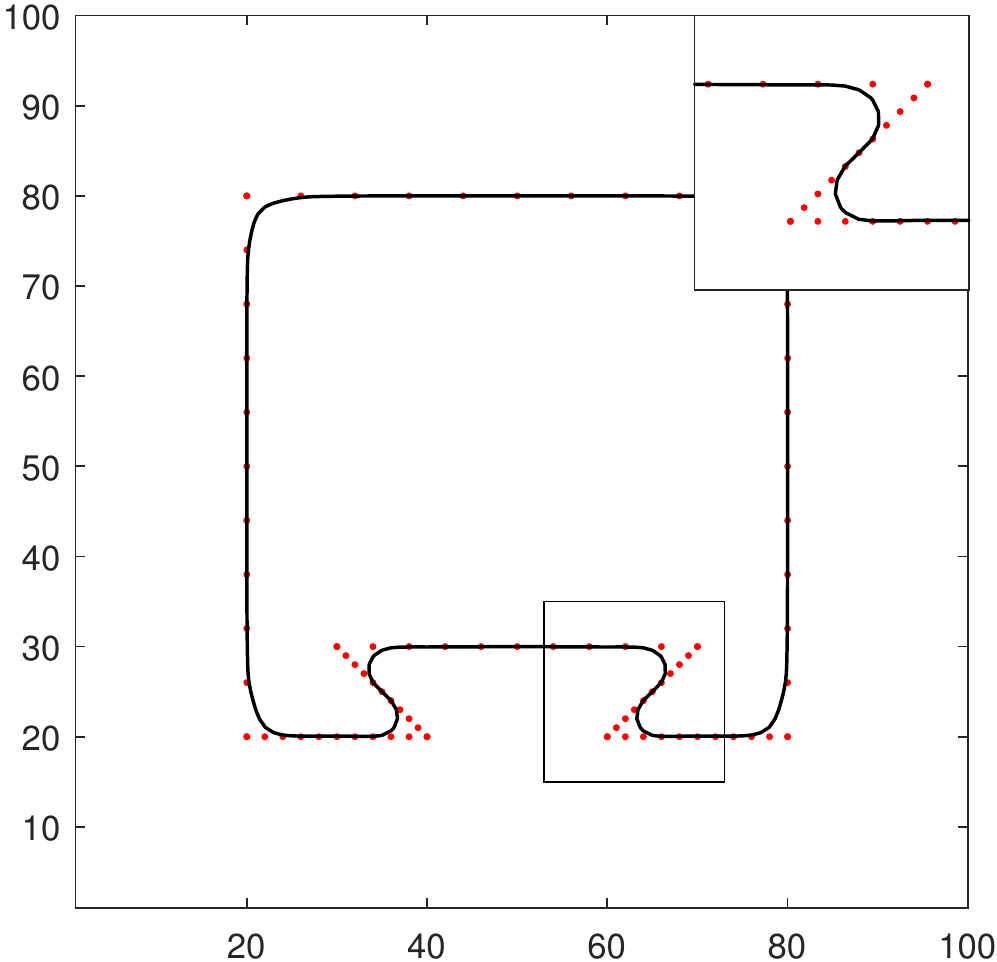}&
    \includegraphics[width=0.22\textwidth]{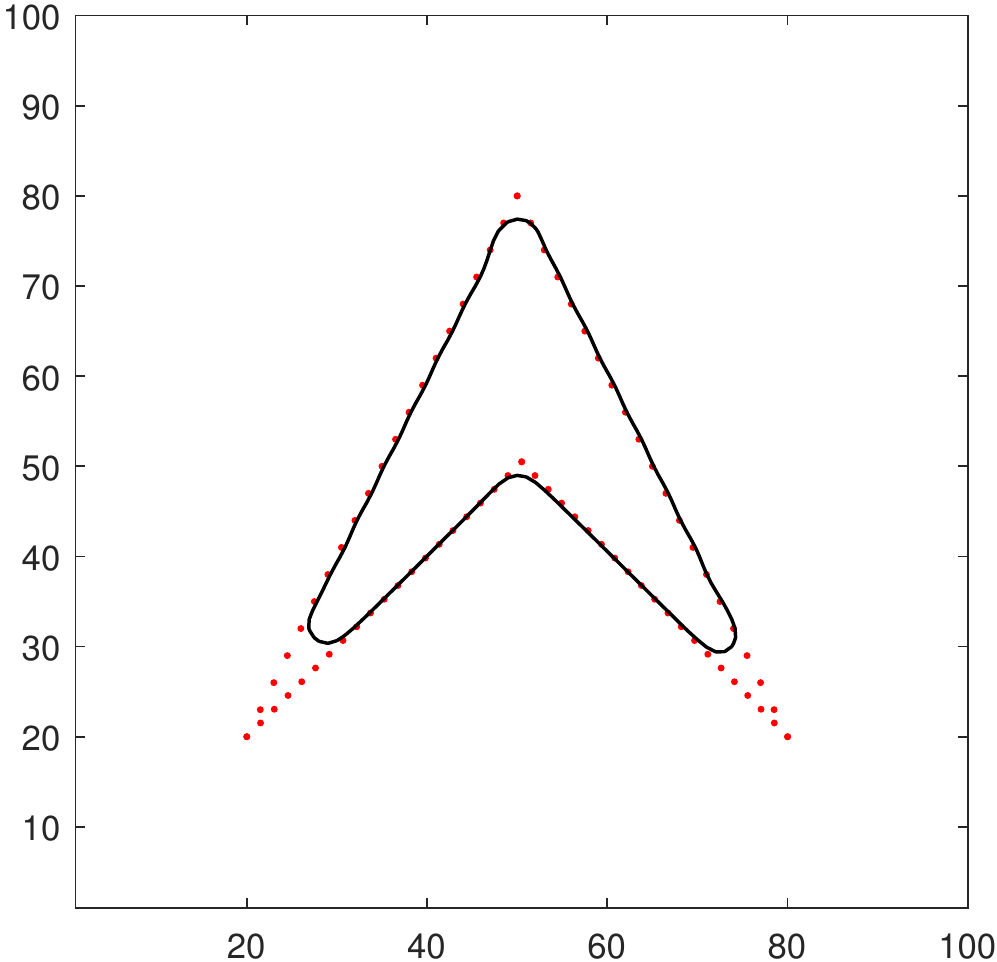}
  \end{tabular}
\caption{Comparison between the algorithm from \cite{EZL*12} and OSM with or without curvature constraints:  The first row results are by the algorithm proposed in \cite{EZL*12} with $r_1=r_2=8,r_3=r_4=3$. The second row results are by OSM without any curvature term, $\eta =0$.  The third row results are by OSM with curvature constraint ($s=2$): (a)  $\eta=3$, (b) $\eta=2$, (c) $\eta=1$, and (d) $\eta=2$.  The shape of the underlying surface are more accurately captured using our proposed model with the curvature constraint. }\label{fig.ex2d}
\end{figure}

As $\eta$ increases, different effect can be shown, see  Figure \ref{fig.model.eta}.
As one increases $\eta$, the two sharp corners are recovered better. However, if $\eta$ is too large, like $3$ and $4$ in this example, the corners get more circular.  As $\eta$, the weight of curvature term, gets larger, the reconstructed curve becomes even more circular to avoid large curvature.
\begin{figure}
  \centering
      \begin{tabular}{ccc}
  (a) & (b) & (c) \\
  \includegraphics[width=0.23\textwidth]{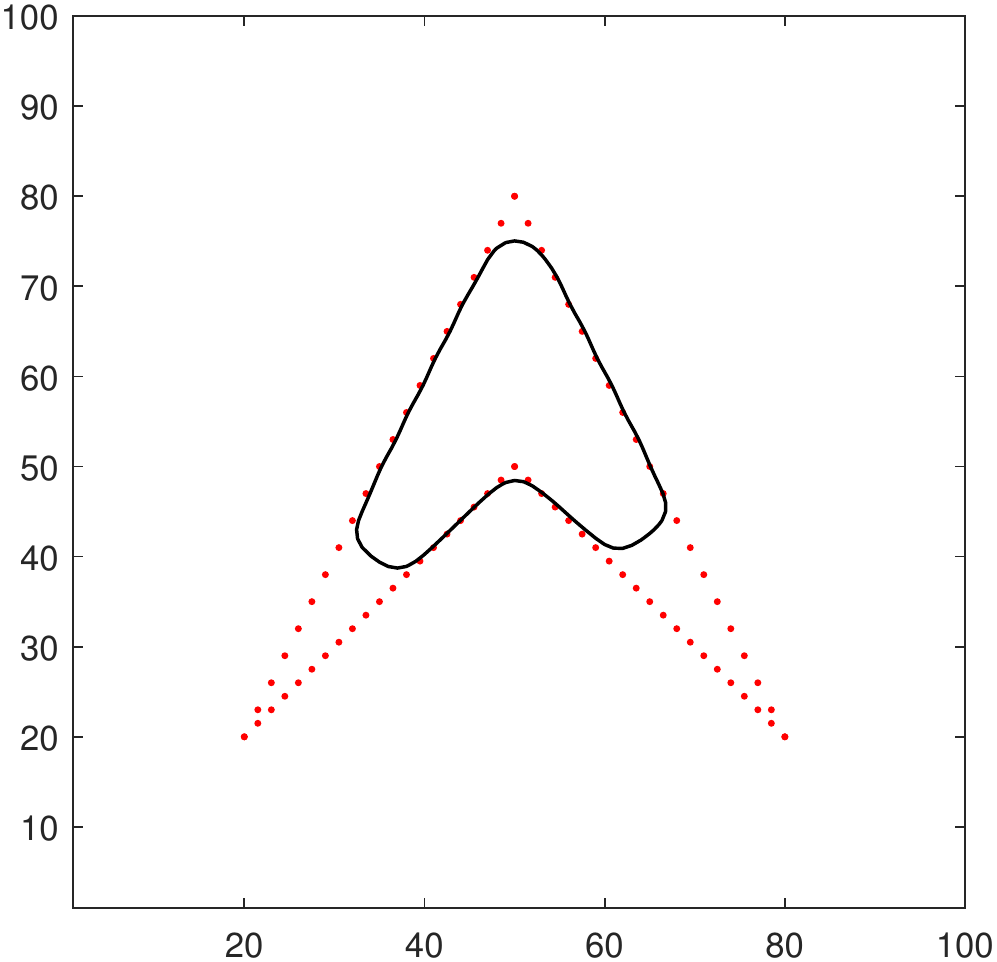} &
  \includegraphics[width=0.23\textwidth]{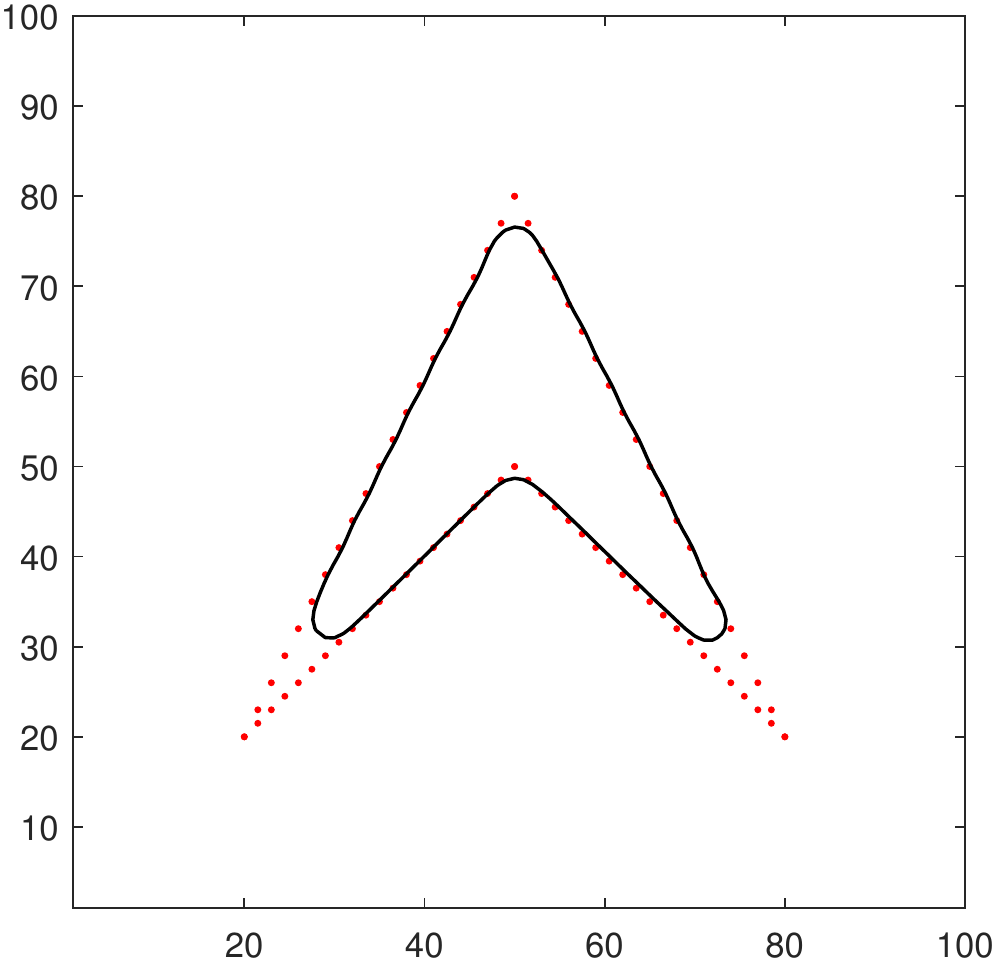} &
  \includegraphics[width=0.23\textwidth]{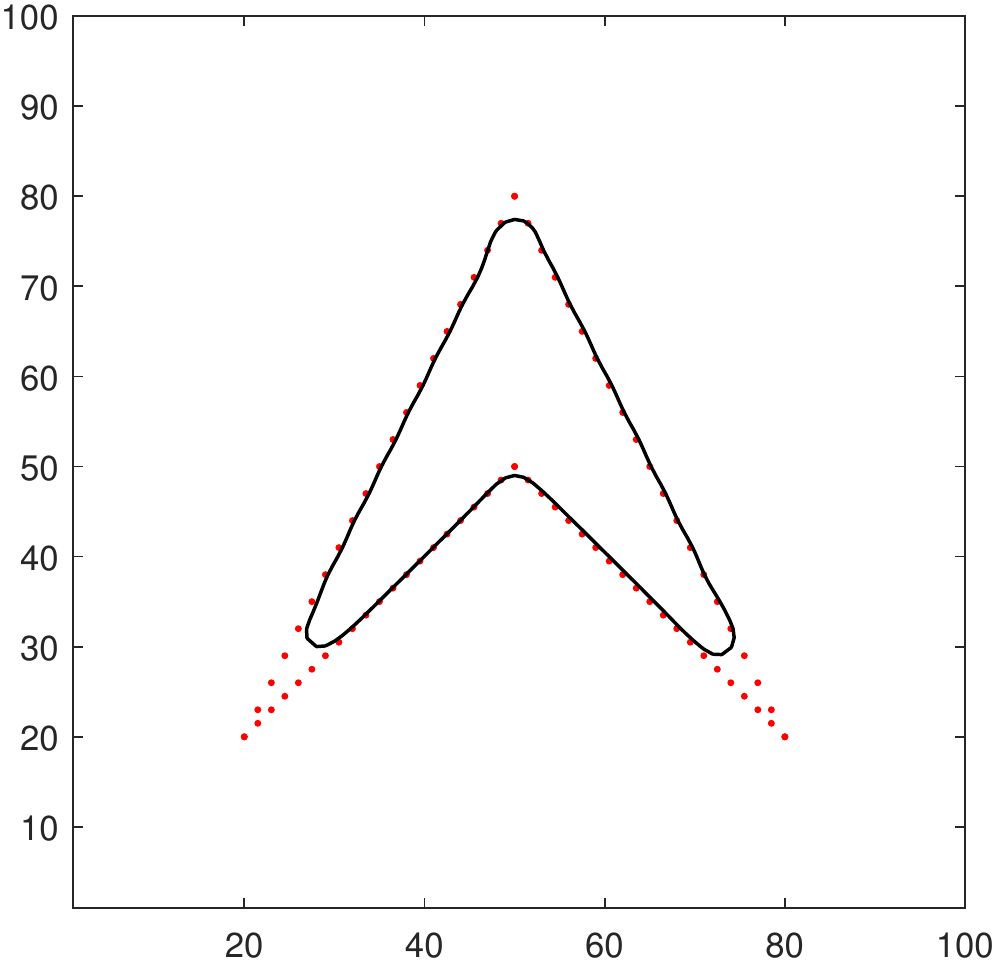}\\
 & (d) & (e) \\
 &
  \includegraphics[width=0.23\textwidth]{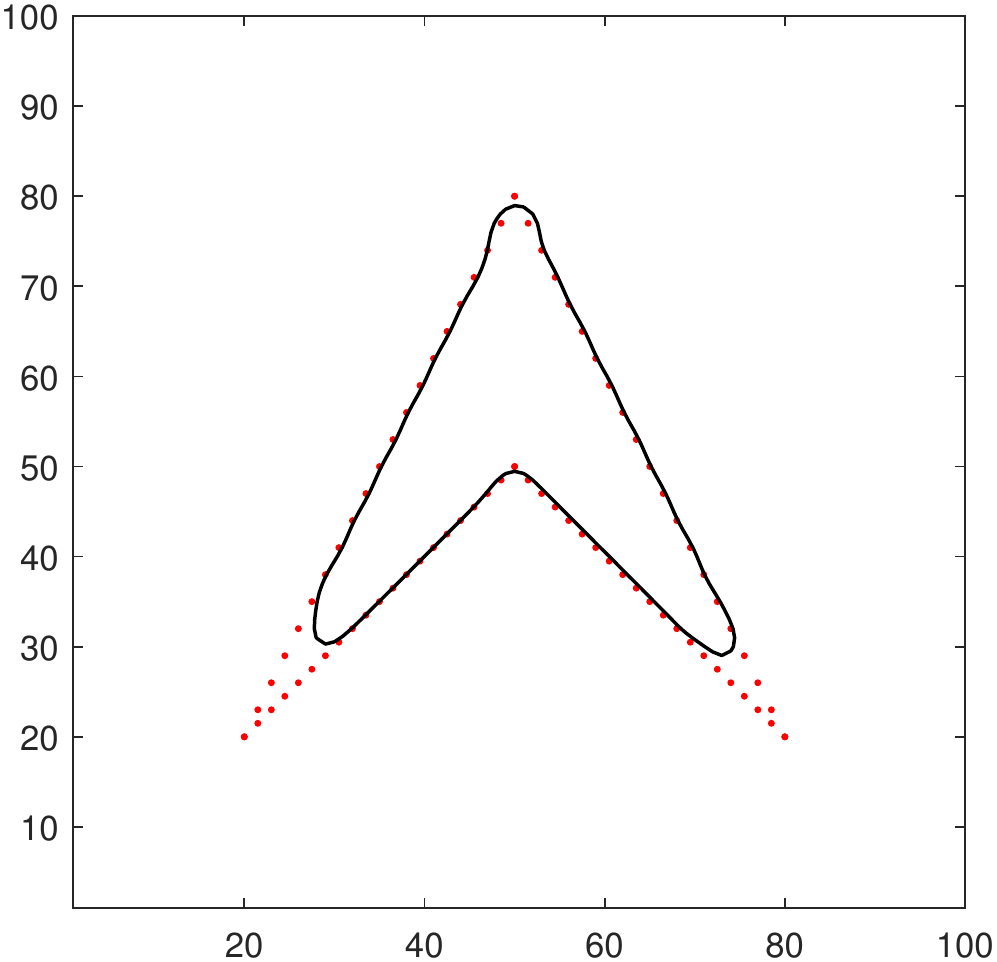}&
  \includegraphics[width=0.23\textwidth]{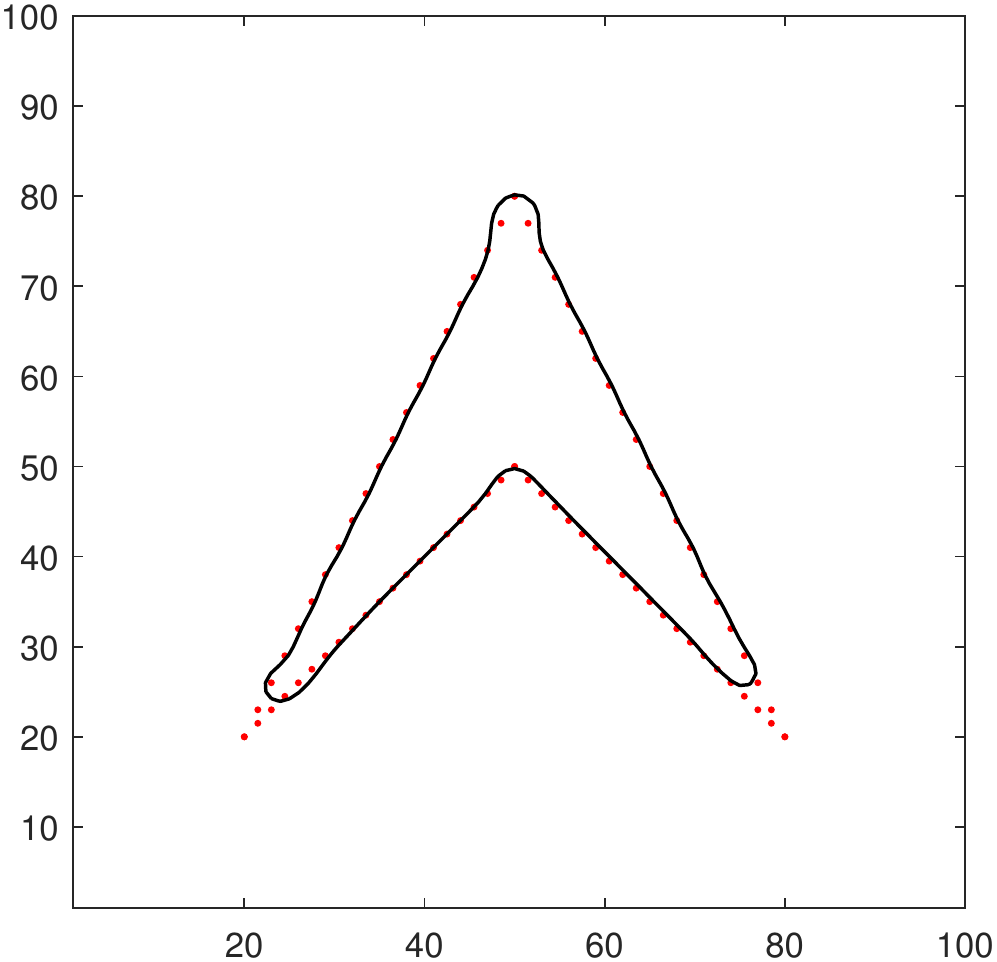}
   \end{tabular}
  \caption{Effect of $\eta$ in OSM. (a) $\eta=0$. (b) $\eta=1$. (c) $\eta=2$. (d) $\eta=3$. (e) $\eta=4$.  As $\eta$ increases, the two sharp corners are recovered better.  As $\eta$ gets larger,  the corners get more circular  to avoid large curvature.  }\label{fig.model.eta}
\end{figure}


Figure \ref{fig.model.sparse} shows results when the given point cloud is sparse. The point cloud for (a) and (b) is a boomerang shape and  that for (c) and (d) is a sparse square with an indent at the bottom.  With sparse boomerang data, just using the distance term ($\eta=0$) can recover very limited part of the given point cloud; see Figure \ref{fig.model.sparse} (a).  With  $\eta=4$, We recover the general shape of boomerang in Figure \ref{fig.model.sparse} (b).  For the sparse square shape, only eight corners and one point on each side are given in Figure \ref{fig.model.sparse} (c)-(d).  While for $\eta=0$, the bottom part of the recovered shape is smooth.  With $\eta=2$, the rectangle shape is clearly recovered   showing corners in the bottom area in Figure \ref{fig.model.sparse} (d).
\begin{figure}
  \centering
  \begin{tabular}{cccc}
  (a) & (b) & (c) & (d) \\
\includegraphics[width=0.22\textwidth]{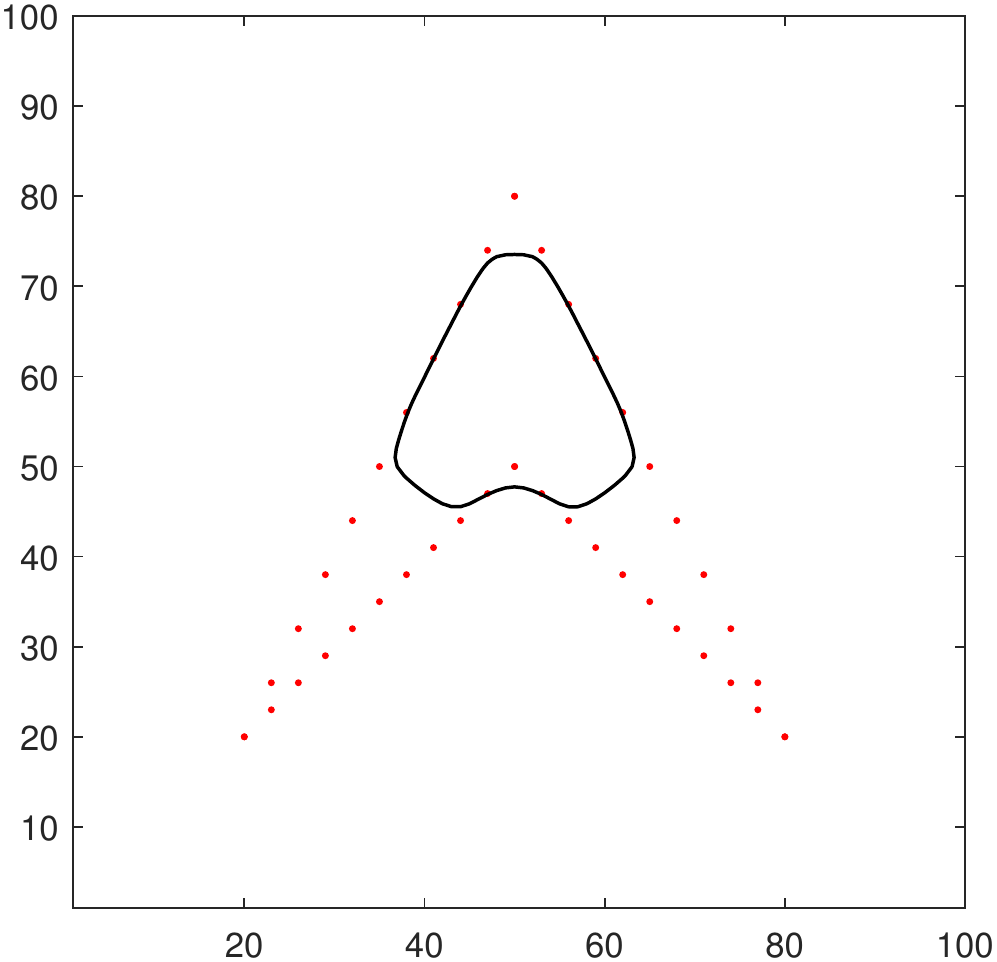} &
\includegraphics[width=0.22\textwidth]{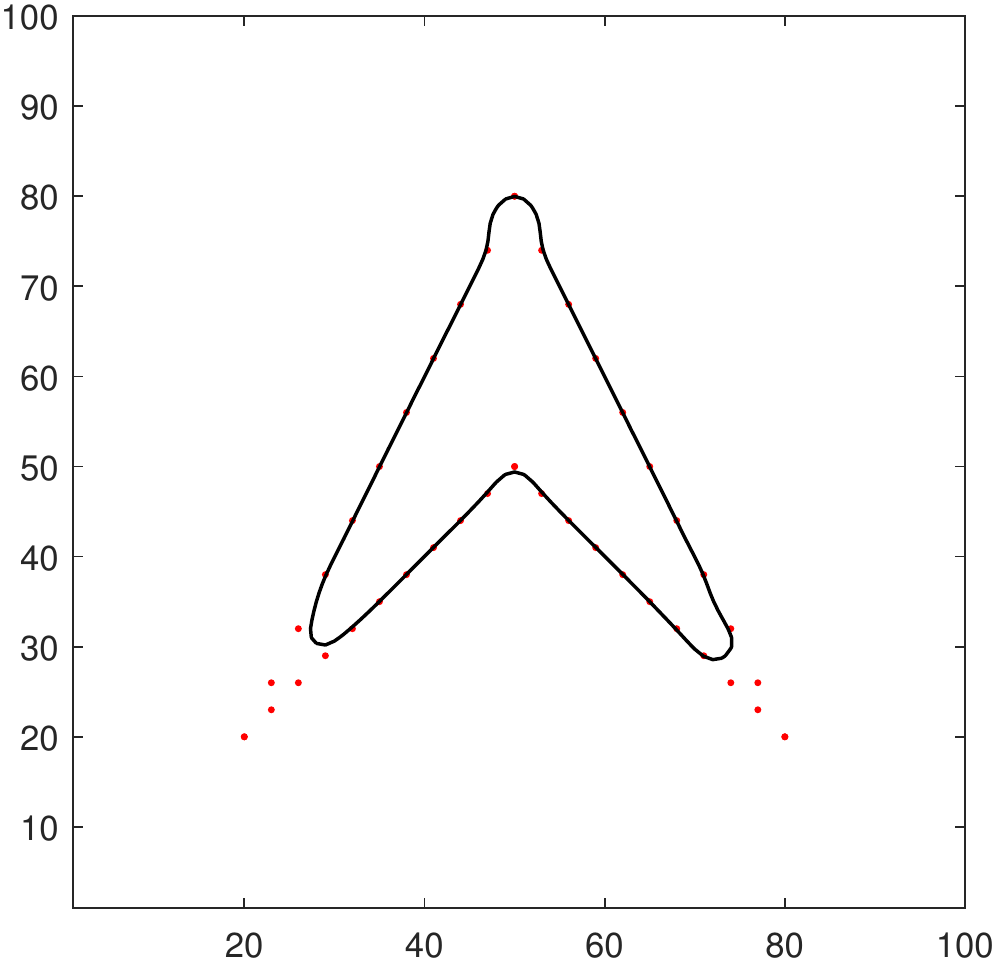} &
\includegraphics[width=0.22\textwidth]{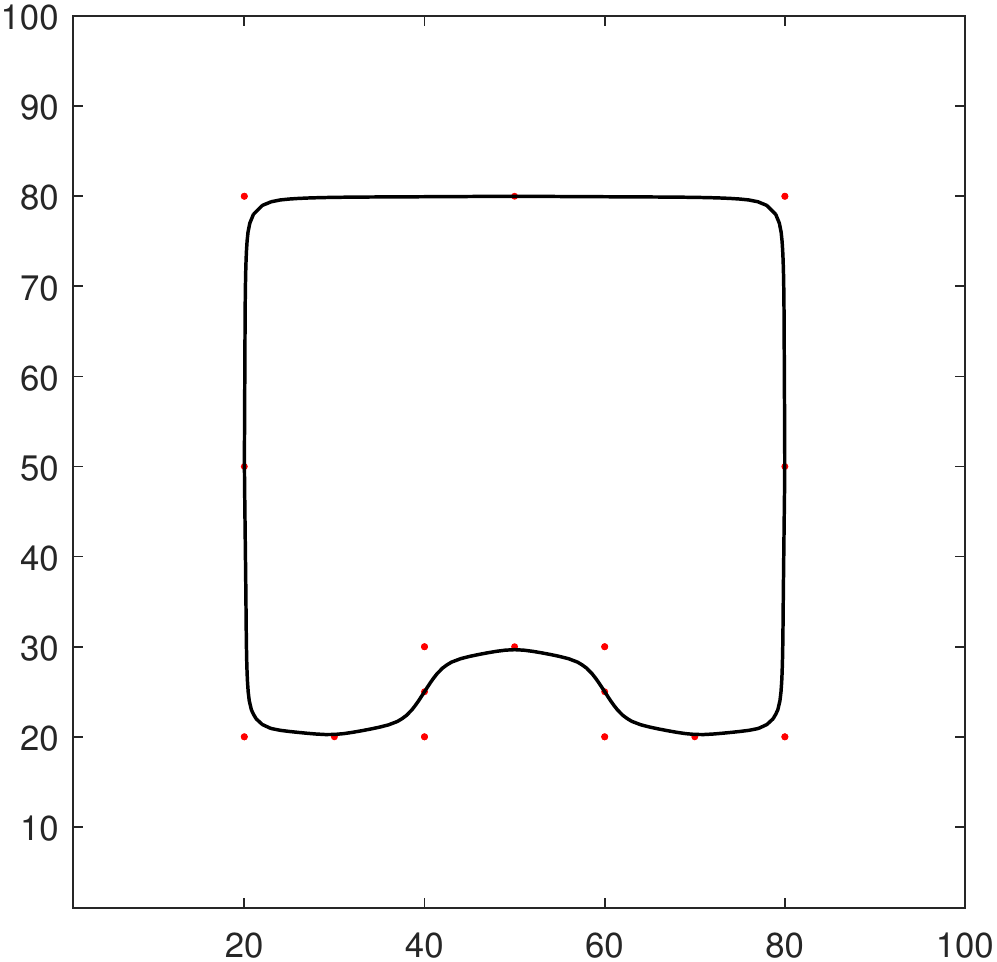} &
\includegraphics[width=0.22\textwidth]{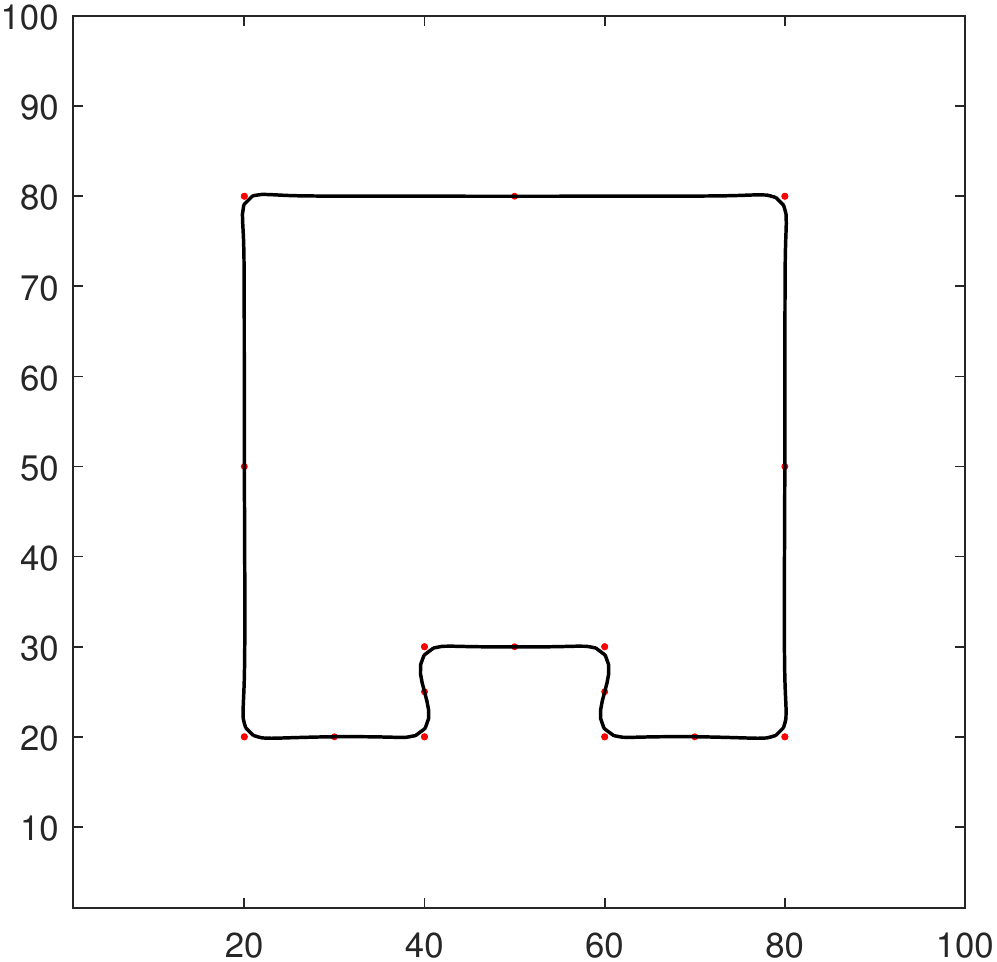}
\end{tabular}
  \caption{By OSM, sparse data results with or without curvature constraints.   (a) $\eta=0$, and (b) $\eta=4$ for a point cloud sampled from a Boomerang shape.   (c) $\eta=0$ and (d) $\eta=2$, for a sparse square shape where only eight corners and one point on each side are given.   For both examples, with curvature constraint, the recovery is more accurate and sharper. }\label{fig.model.sparse}
\end{figure}
Figure \ref{fig.model.extreme} shows the case where even less number of points are given. See Figure \ref{fig.model.extreme} (a).   Only two points around each corner are given.  Figure \ref{fig.model.extreme} (b) and (c) show results with $\eta=1$  and $\eta=1.5$, respectively.  Even with extremely sparse data, curvature constraint model can reconstruct the corners well. We observe that when the given point cloud is non-uniform, or data are missing in some region, our algorithm yields results with straight edges between distant points and smooth corners.  These are due to the fact that the curvature along a straight edge is zero, and smooth corners have smaller curvature than sharp corners for $s=2$ in the discrete setting.
\begin{figure}
  \centering
  \begin{tabular}{cccc}
  (a) & (b) & (c) \\
\includegraphics[width=0.25\textwidth]{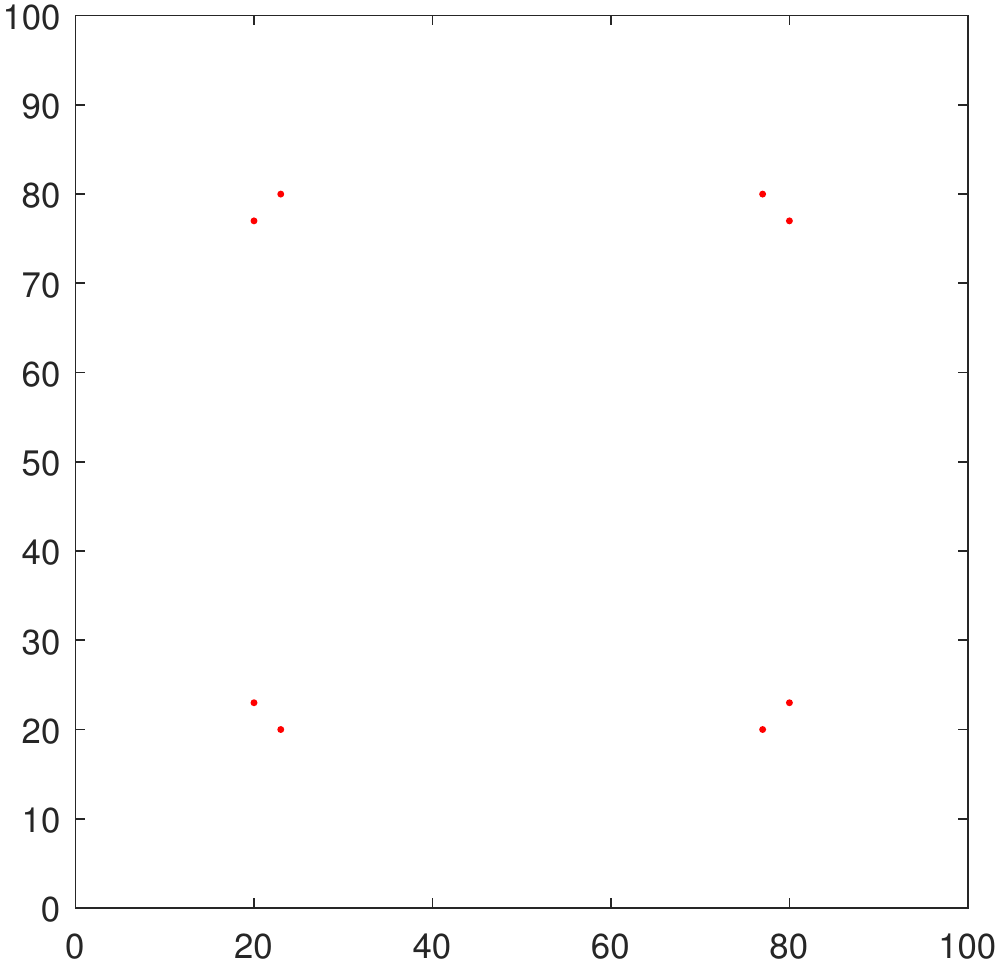} &
\includegraphics[width=0.25\textwidth]{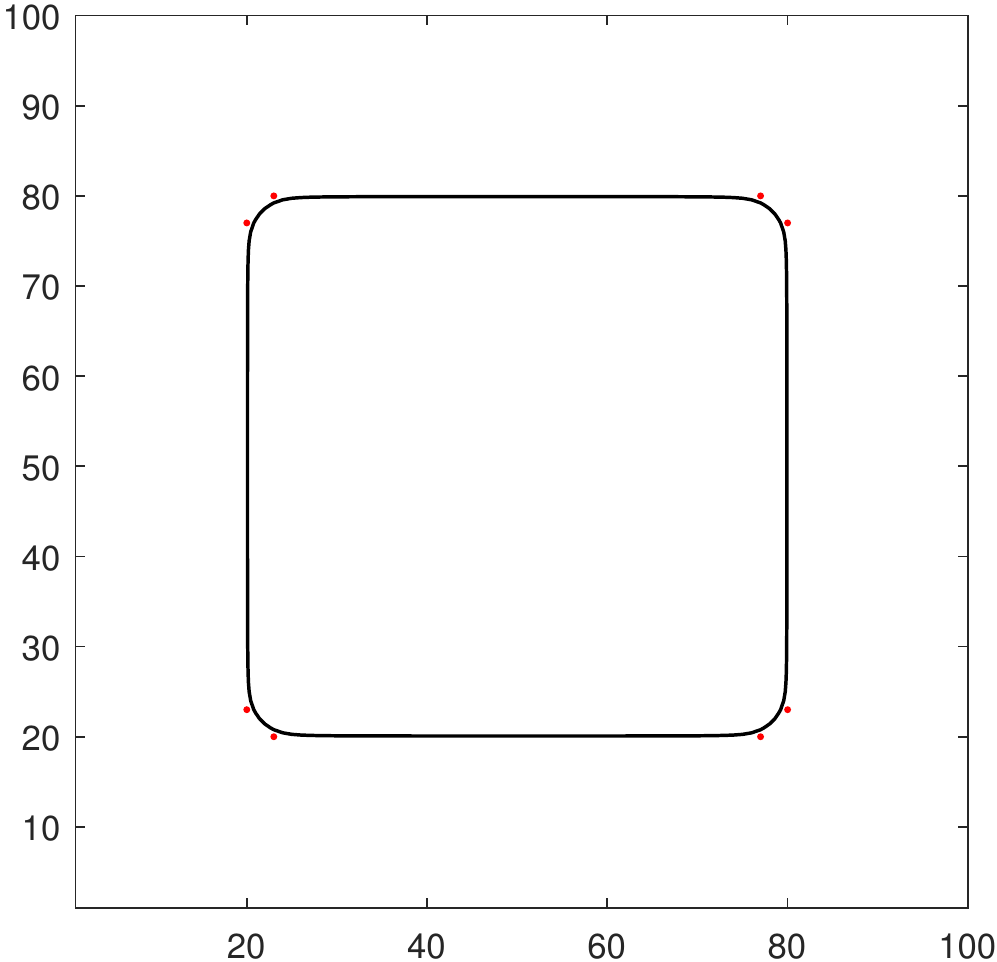} &
\includegraphics[width=0.25\textwidth]{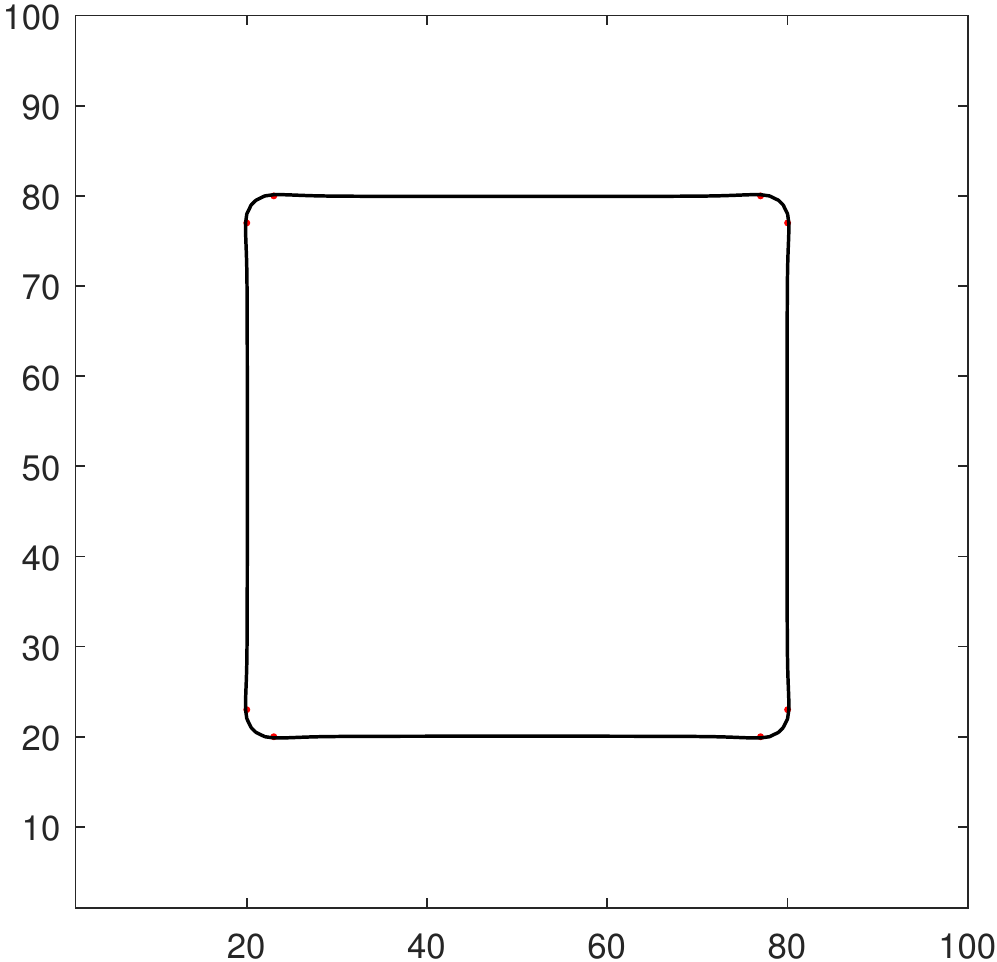}
\end{tabular}
  \caption{By OSM, extremely sparse data: (a) Given data.  (b) The  recovered result with $\eta=1$  and (c) with $\eta=1.5$.  Even with extremely sparse data, curvature constraint model can reconstruct the square corners well.}\label{fig.model.extreme}
\end{figure}

The next experiment is for the noisy boomerang data, where Gaussian noise with standard deviation 1 is added to the locations of the point cloud. The results with $\eta=0,1,2$ are shown in Figure \ref{fig.model.noise}.  As $\eta$ gets larger, the two lower corners get recovered better.  Even with noisy data, OSM shows a strong competence  of recovering the sharp corners.
\begin{figure}
  \centering
    \begin{tabular}{cccc}
  (a) & (b) & (c) \\
  \includegraphics[width=0.3\textwidth]{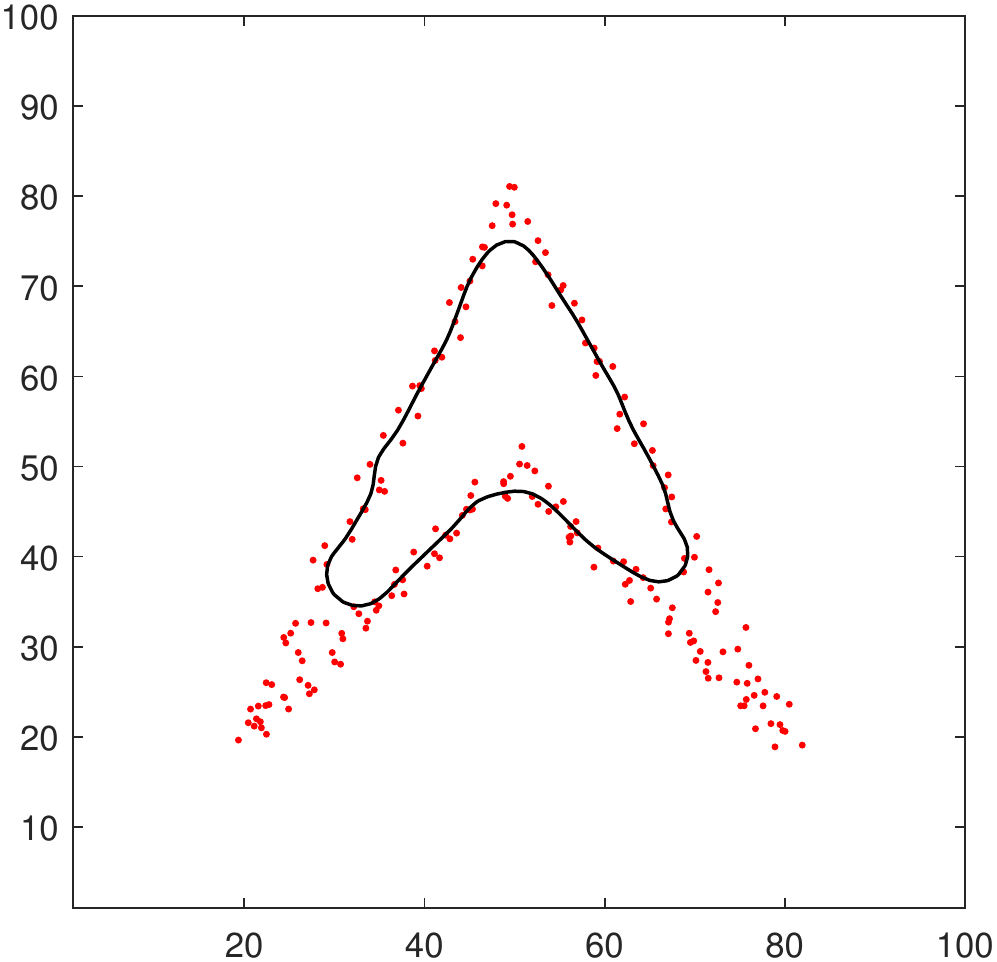} &
  \includegraphics[width=0.3\textwidth]{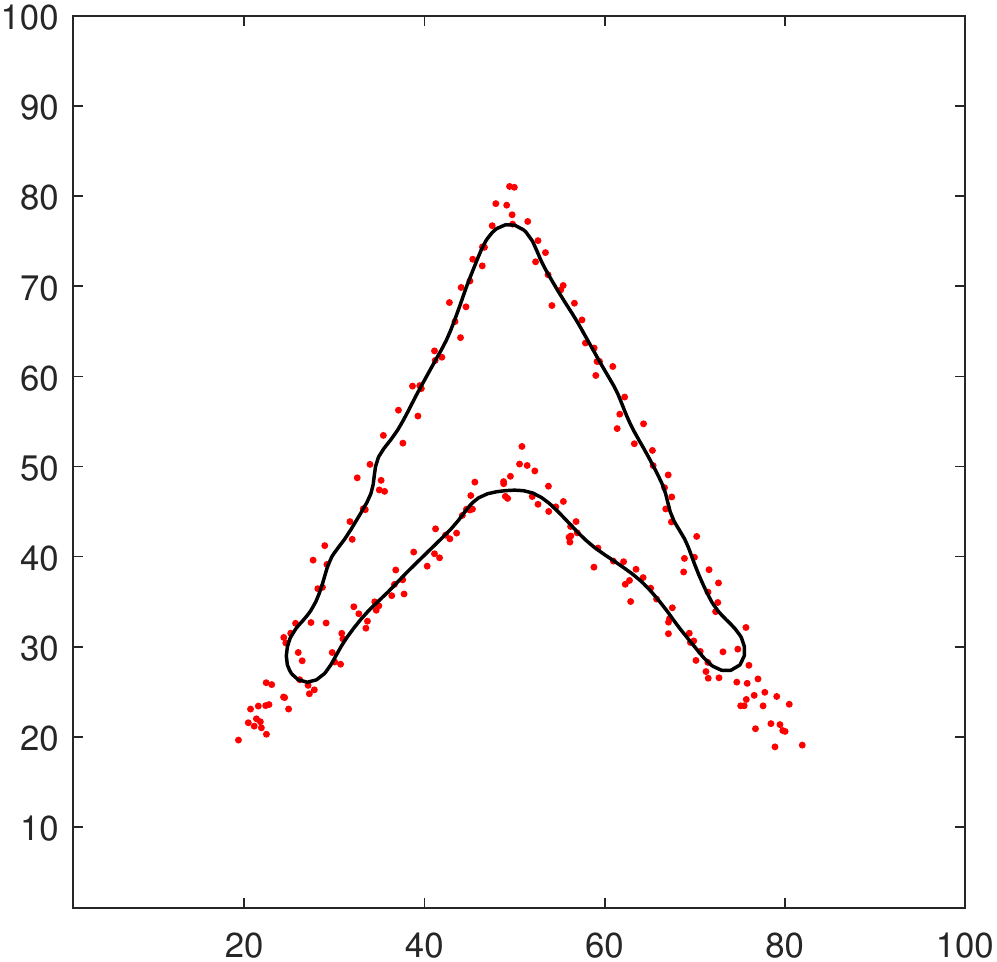}&
  \includegraphics[width=0.3\textwidth]{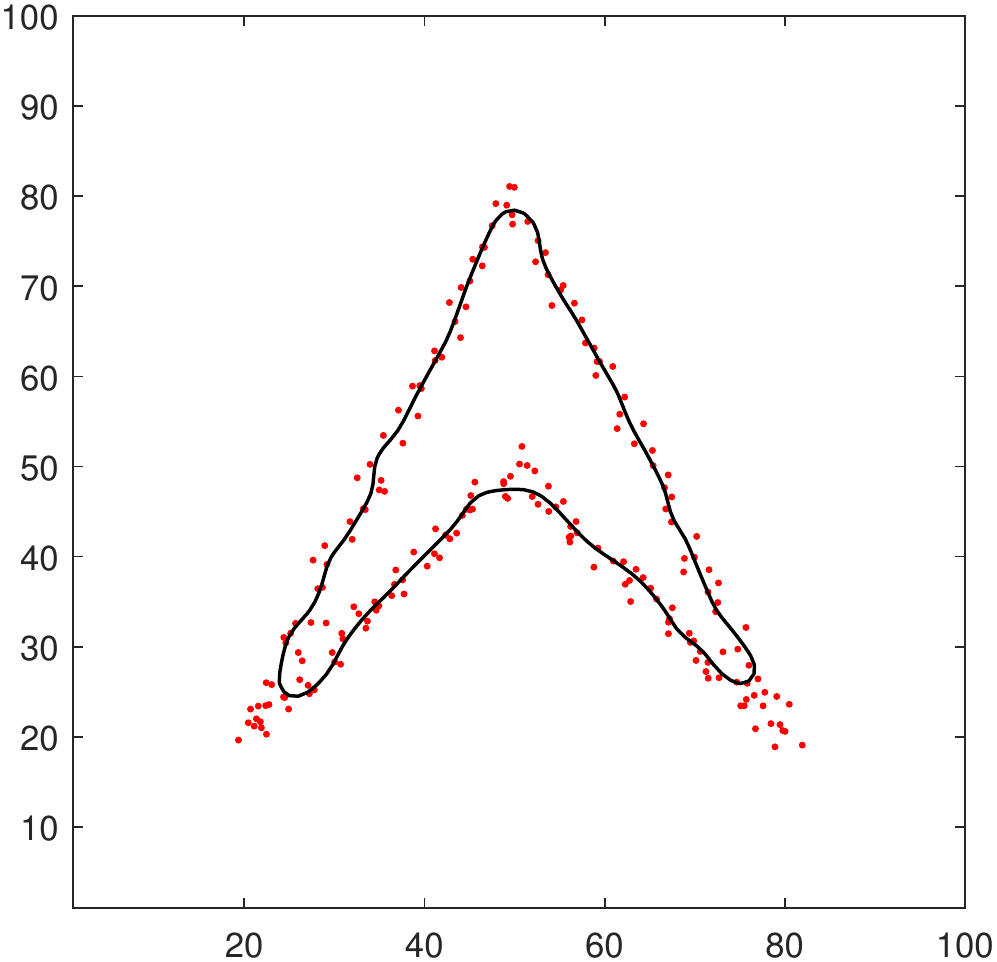}
   \end{tabular}
  \caption{By OSM, reconstruction with noisy data: (a) $\eta=0$, (b) $\eta=1$, (c) $\eta=2$. The noise is Gaussian with standard deviation 1. As $\eta$ gets larger, the two lower corners are better recovered.}\label{fig.model.noise}
\end{figure}

\subsection{Three Dimensional Examples}
We conclude this section with experiments of reconstruction of surfaces in three dimensional space. We use OSM with $s=2$ to reconstruct the pyramid, the yoyo, and the ice cream cone, whose point clouds are shown in Figure \ref{fig.data3d}. The data in these examples are concentrated within a cube $[0,50]^3$. The pyramid has a relatively simple geometry structure: it is convex and its surface only consists of five planes. We can use a large time step $\Delta t=500$. For the yoyo and the ice-cream cone we use $\Delta t=100$, since the underlying surfaces have more details, e.g.,  the neck of the yoyo and the upper concave part of the ice cream cone.
\begin{figure}
  \centering
      \begin{tabular}{cccc}
  (a) & (b) & (c) \\
  \includegraphics[height=0.25\textwidth]{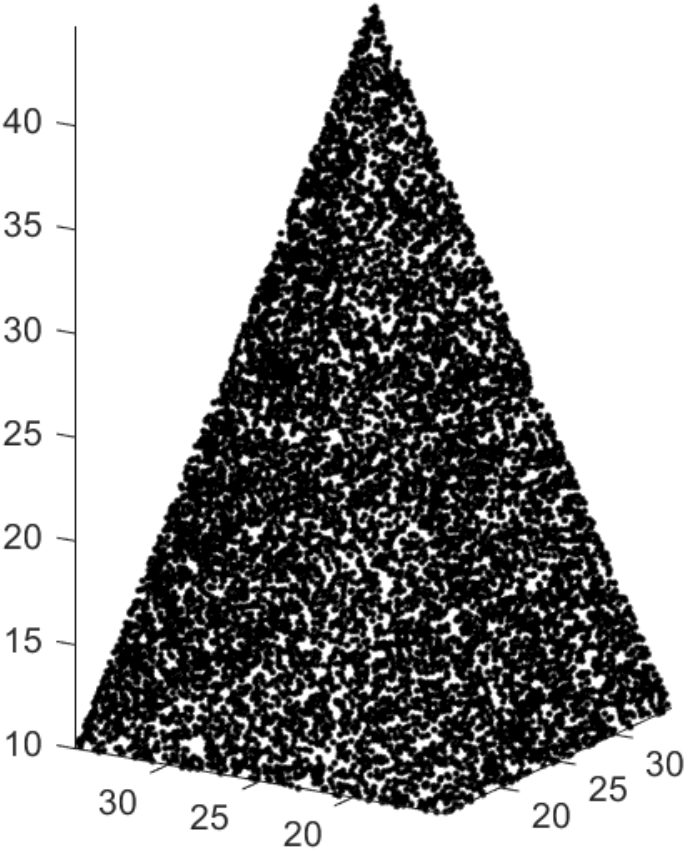} &
  \includegraphics[height=0.25\textwidth]{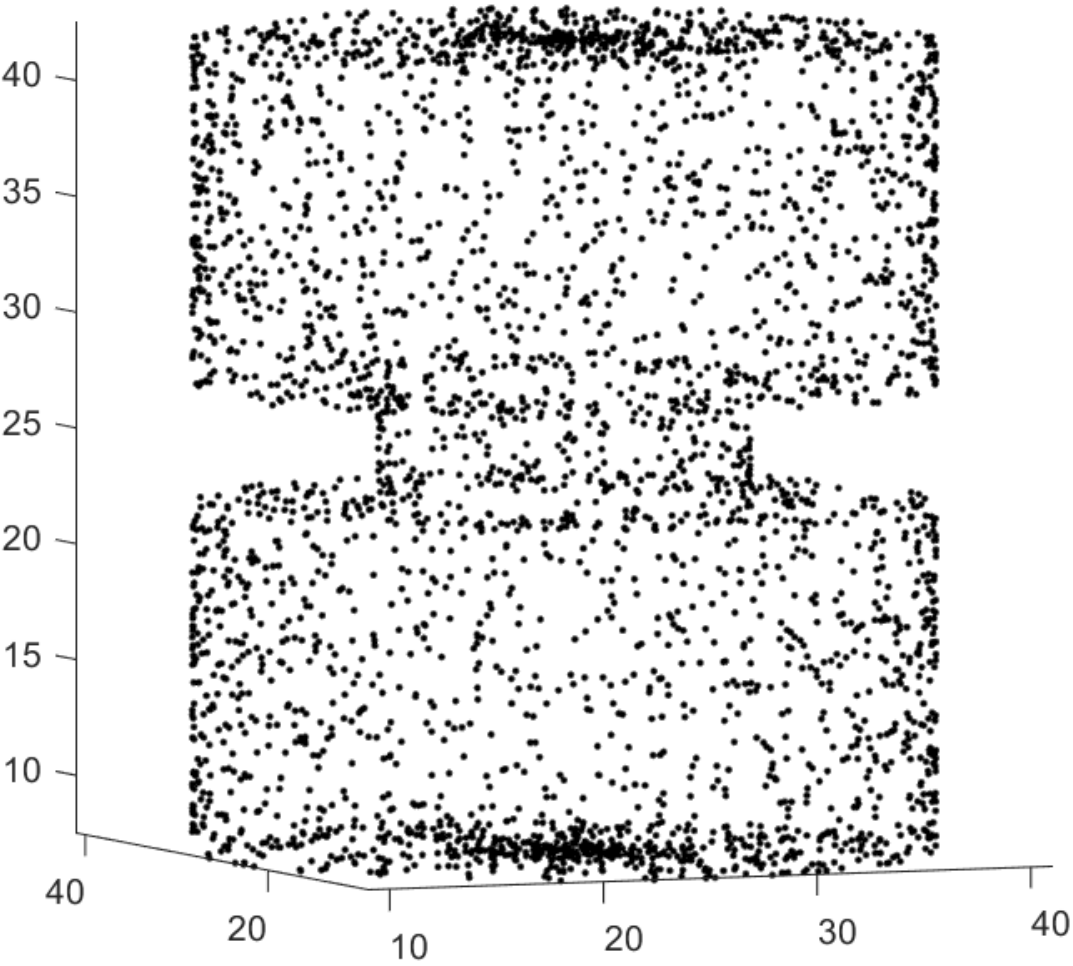}&
  \includegraphics[height=0.25\textwidth]{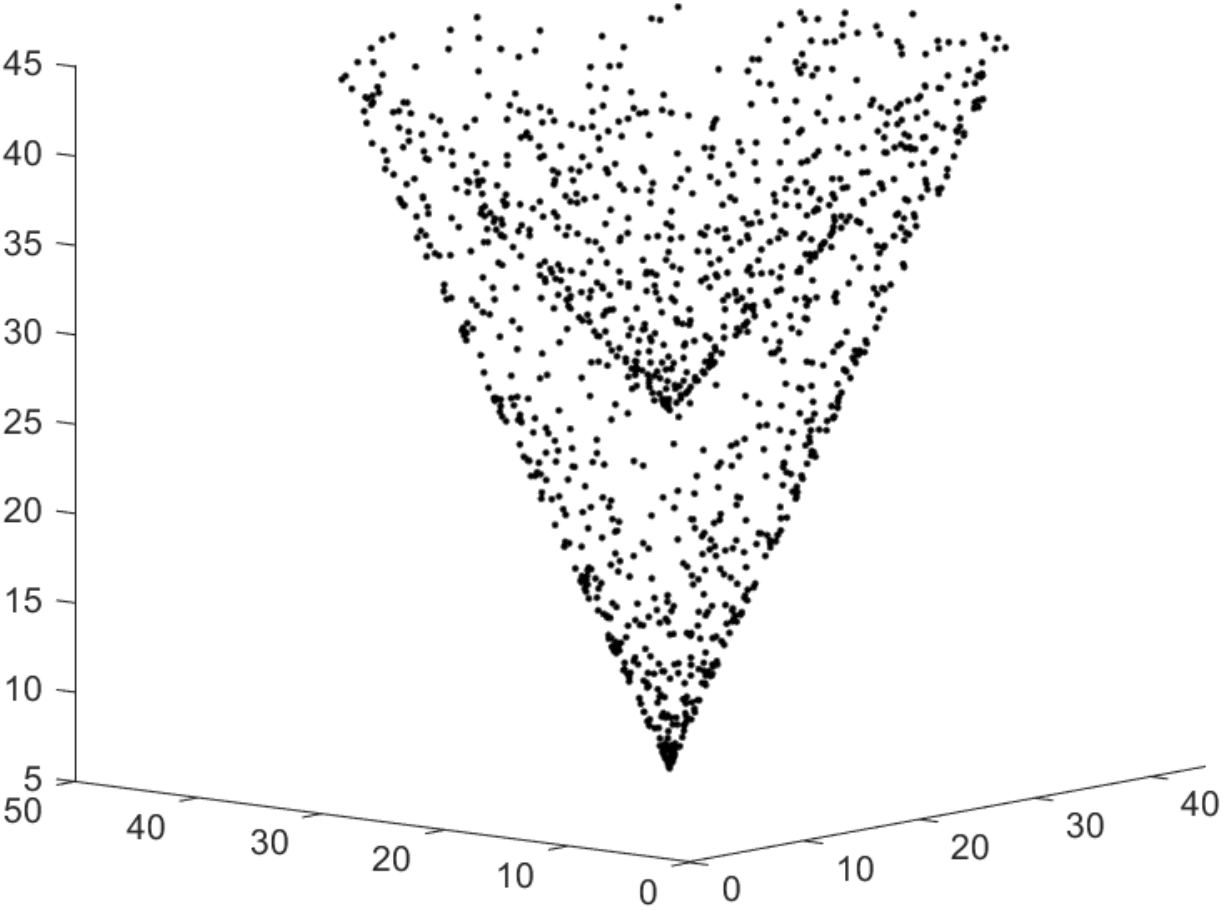}
   \end{tabular}
  \caption{Examples of three dimensional point cloud data. (a) A pyramid. (b) A yoyo. (c) An ice cream cone.}\label{fig.data3d}
\end{figure}

For the pyramid, the reconstructed surfaces with $\eta=0,5,10$ and the comparison of cross sections along $y=25$ (a middle section) are shown in Figure \ref{fig.pyramid}. In this case, we see limited improvements of capturing the vertices when the curvature constraint is included.
\begin{figure}
  \centering
      \begin{tabular}{cccc}
  (a) & (b) & (c) \\
  \includegraphics[width=0.28\textwidth]{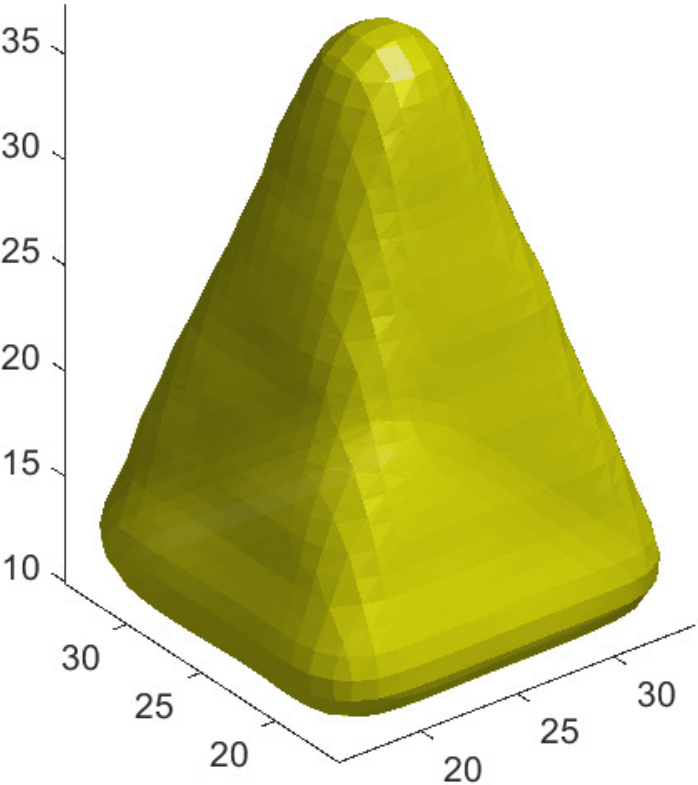}&
  \includegraphics[width=0.28\textwidth]{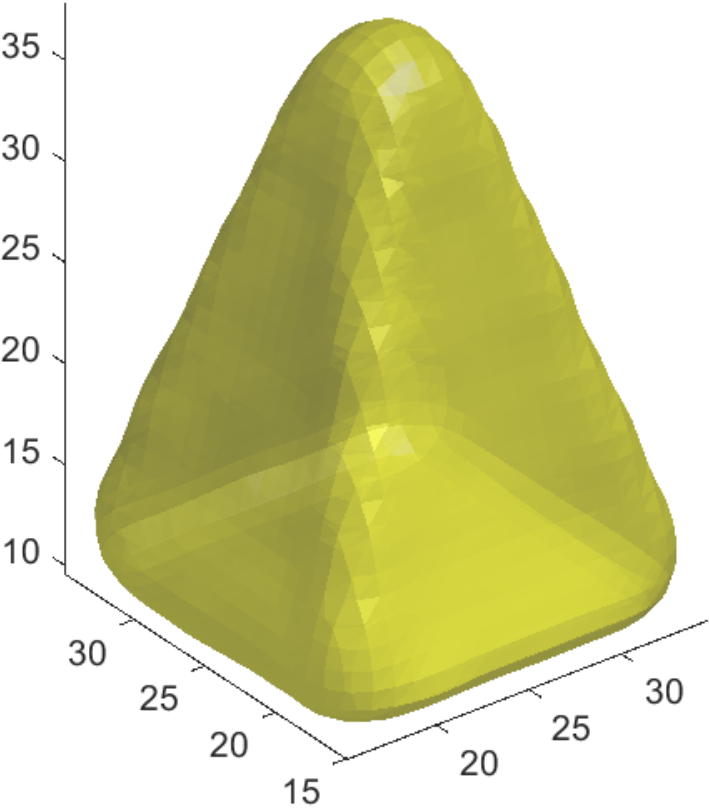}&
  \includegraphics[width=0.28\textwidth]{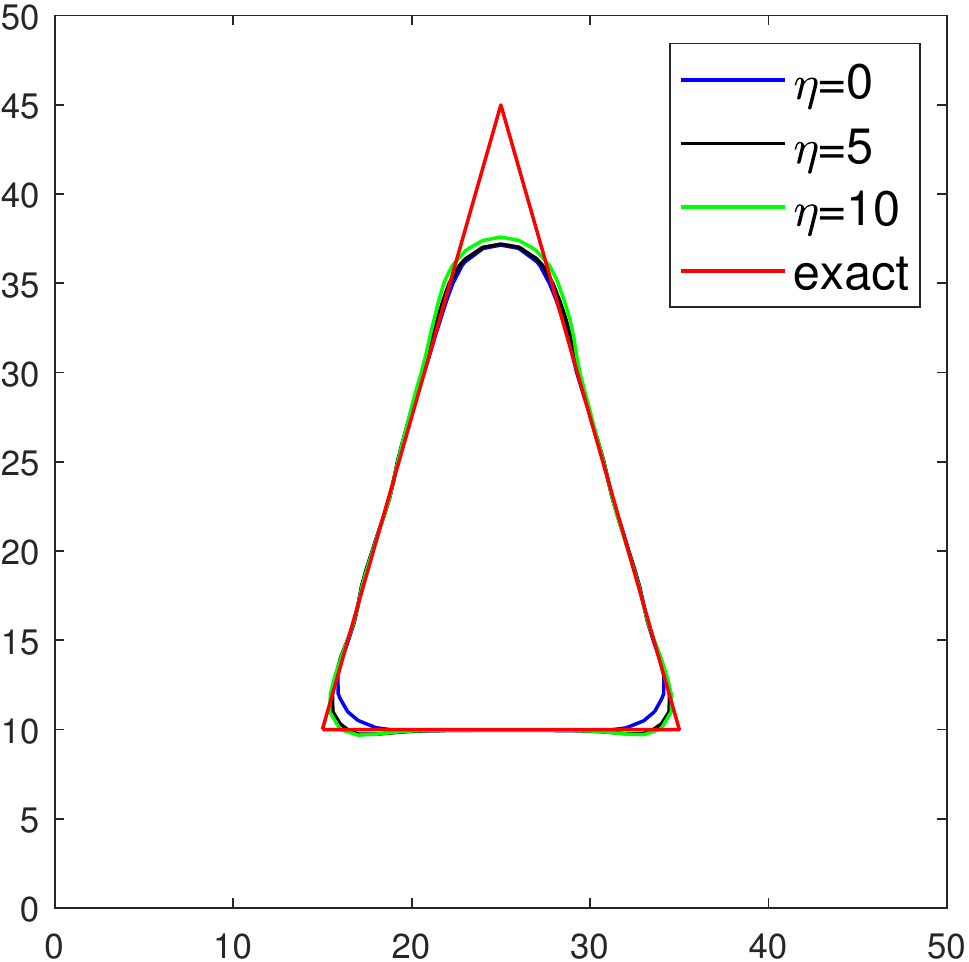}
   \end{tabular}
  \caption{Reconstruction of the pyramid by OSM with $s=2$: (a) Result with $\eta=0$. (b) Result with $\eta=10$. (c) Comparison of cross section along $y=25$.}\label{fig.pyramid}
\end{figure}

For the yoyo, the reconstructed surface with $\eta=0,5$ and the comparison of cross sections along $y=25$ are shown in Figure \ref{fig.part}. The advantage of the curvature term is obvious. With $\eta=0$, the solution attains the energy balance between surface area and distance to the data at some location away from the middle neck part. Since the curvature at that part is non-zero, given a positive $\eta$, the surface further evolves to capture the neck. For comparison, the reconstructed surface by the algorithm in \cite{EZL*12} is shown in Figure \ref{fig.part} (a). Similar to the result by OSM with $\eta=0$, \cite{EZL*12} fails to capture the neck part.
\begin{figure}
  \centering
      \begin{tabular}{cc}
  (a)&(b)  \\
  \includegraphics[width=0.4\textwidth]{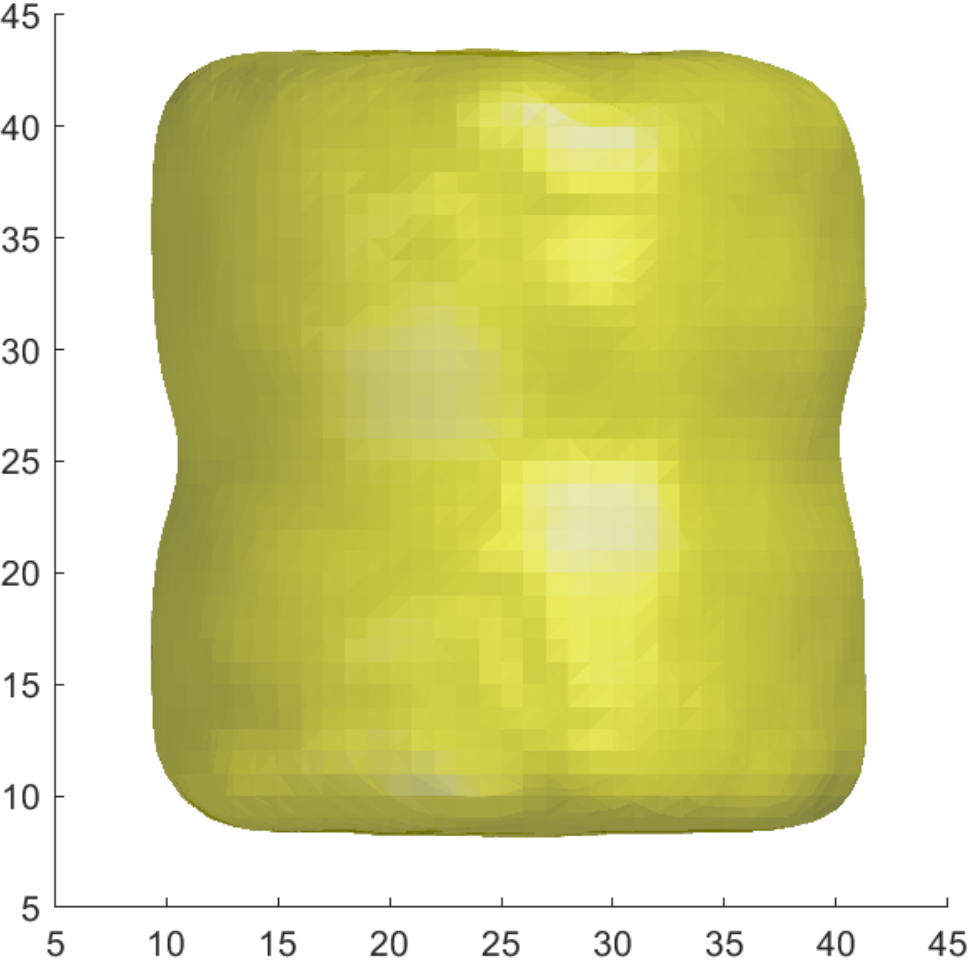}&
  \includegraphics[width=0.4\textwidth]{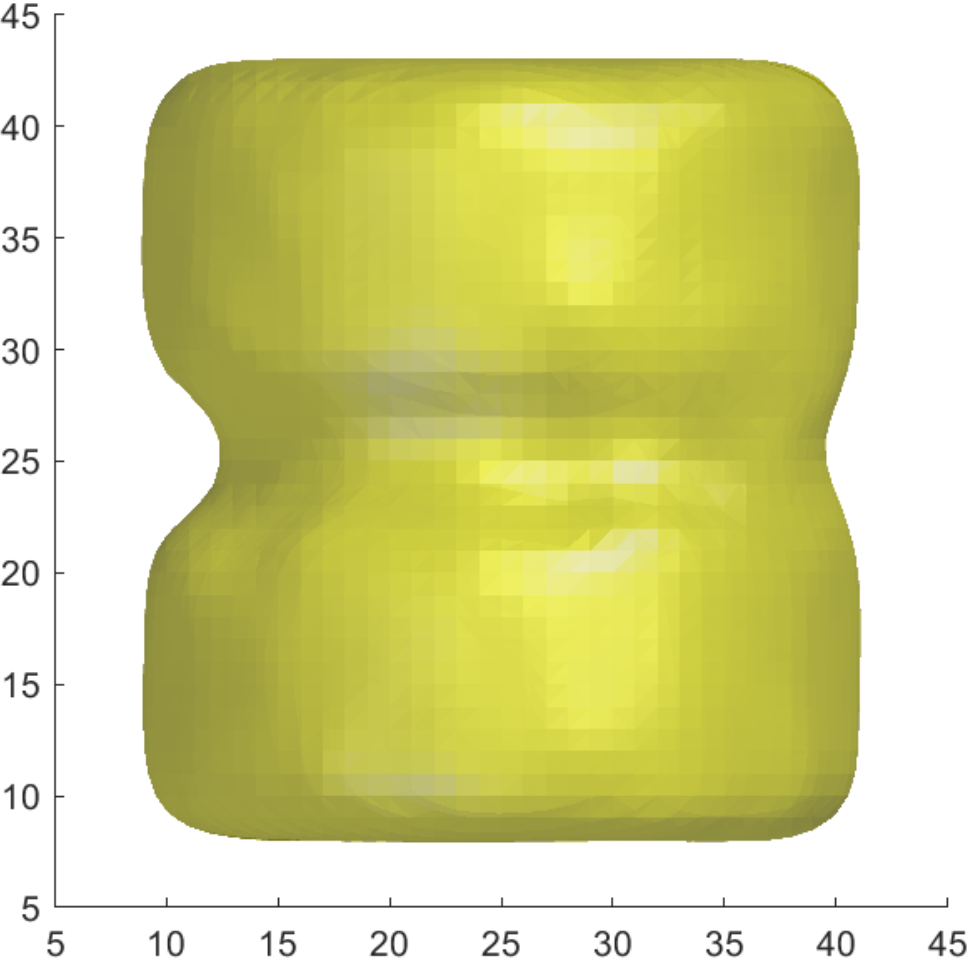}\\
  (c) & (d)\\
  \includegraphics[width=0.4\textwidth]{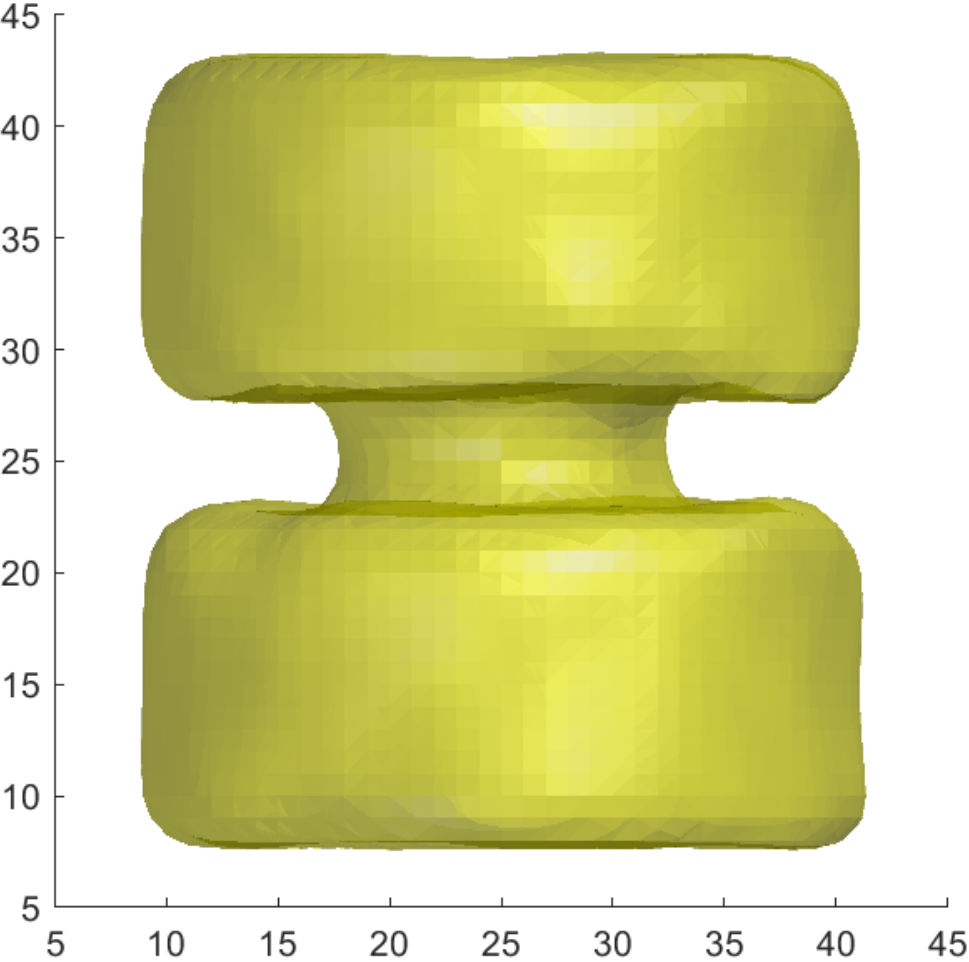}&
  \includegraphics[width=0.4\textwidth]{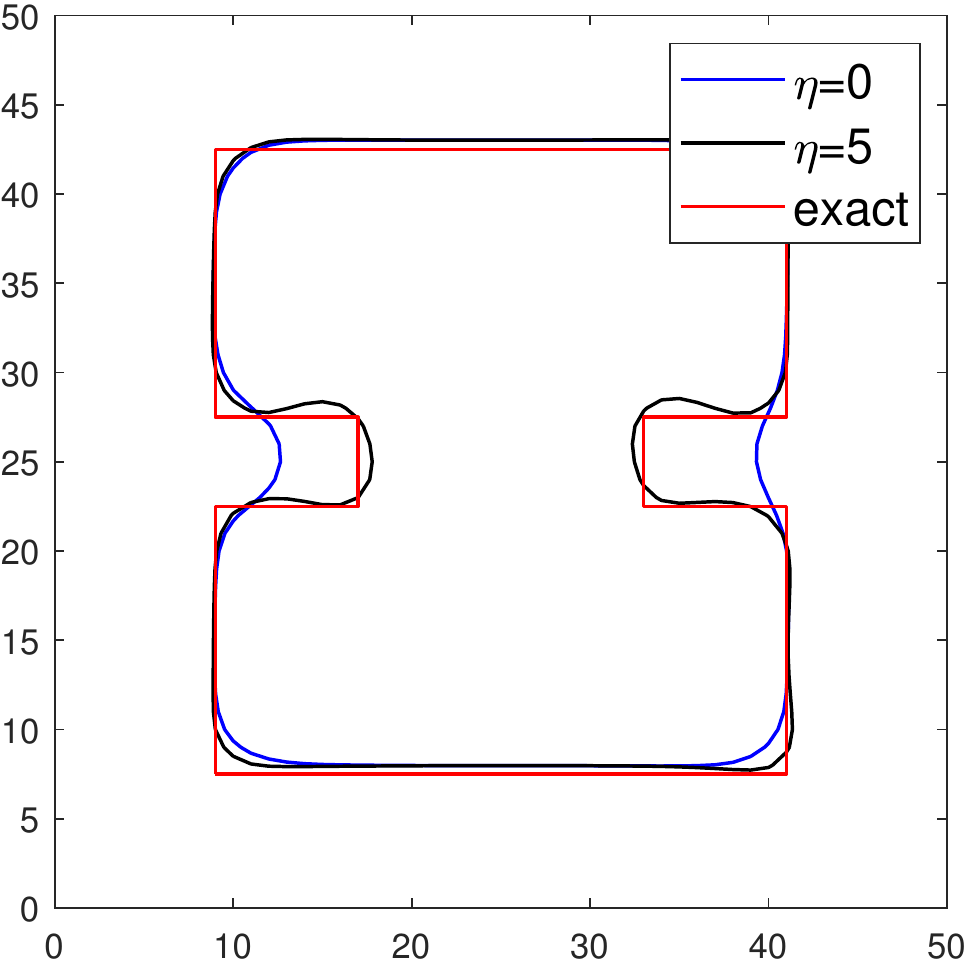}
  \end{tabular}
  \caption{Reconstruction of the yoyo by the algorithm from \cite{EZL*12} and OSM with $s=2$: (a) Result by the algorithm proposed in \cite{EZL*12} with $r_1=r_2=8, r_3=r_4=3$. (b)
  Result by OSM with $\eta=0$. (c) Result by OSM with $\eta=5$. (c) Comparison of cross sections of results by OSM along $y=25$.}\label{fig.part}
\end{figure}

The ice cream cone surface consists of two layers and its cross section looks like a boomerang.   For this example, if we use $\eta=0$, the solution shrinks to a point and then disappears. The reconstructed surfaces with $\eta=5,10$ and the comparison of cross sections along $y=25$ are shown in Figure \ref{fig.ice} (b)-(d). The effect of the value of $\eta$ on this ice cream cone is similar to that on the boomerang. Results with larger values of $\eta$ capture better the features of the underlying surface such as  corners. The reconstructed surface by the algorithm in \cite{EZL*12} is shown in Figure \ref{fig.ice} (a). \cite{EZL*12} recovers the bottom corner better but fails to reconstruct the upper concave part of the surface.
\begin{figure}
  \centering
      \begin{tabular}{cc}
  (a)&(b)  \\
  \includegraphics[width=0.4\textwidth]{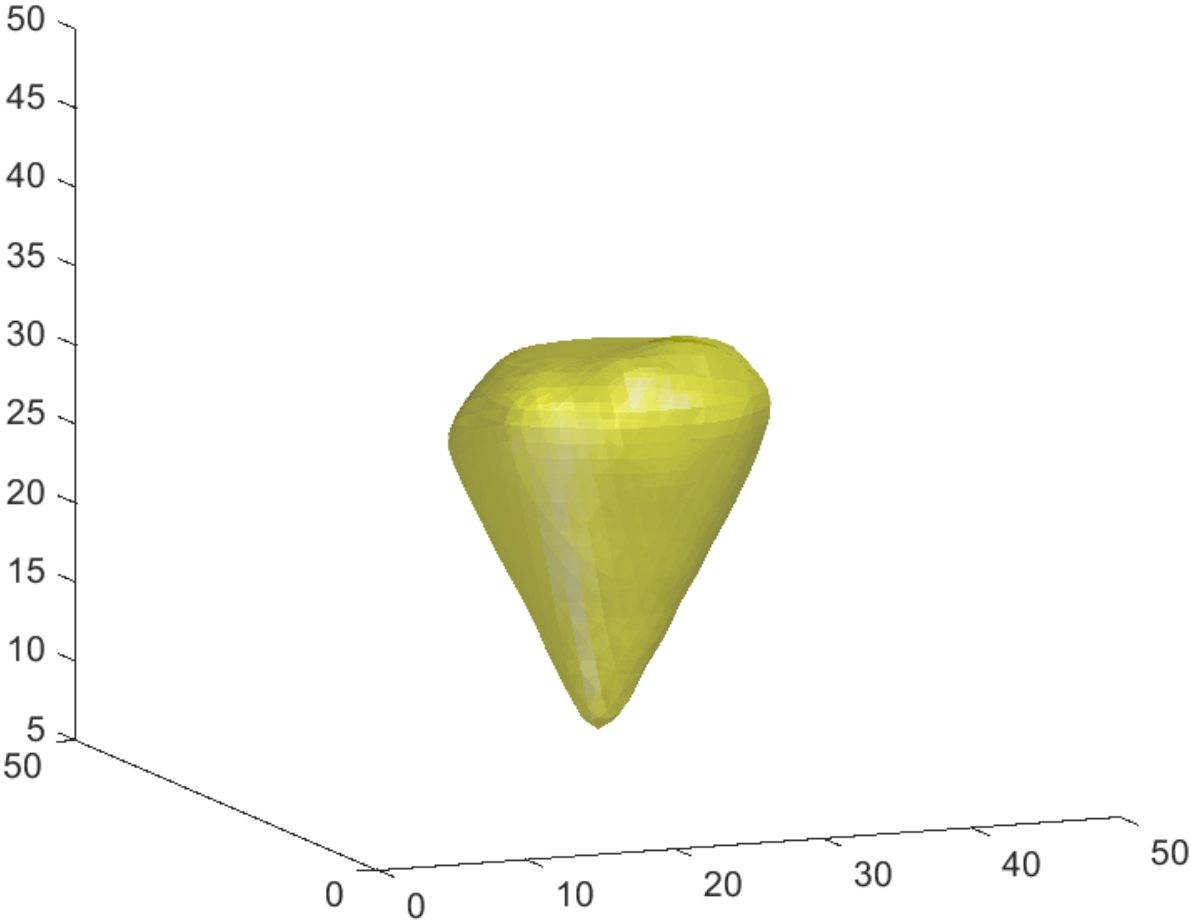}&
  \includegraphics[width=0.4\textwidth]{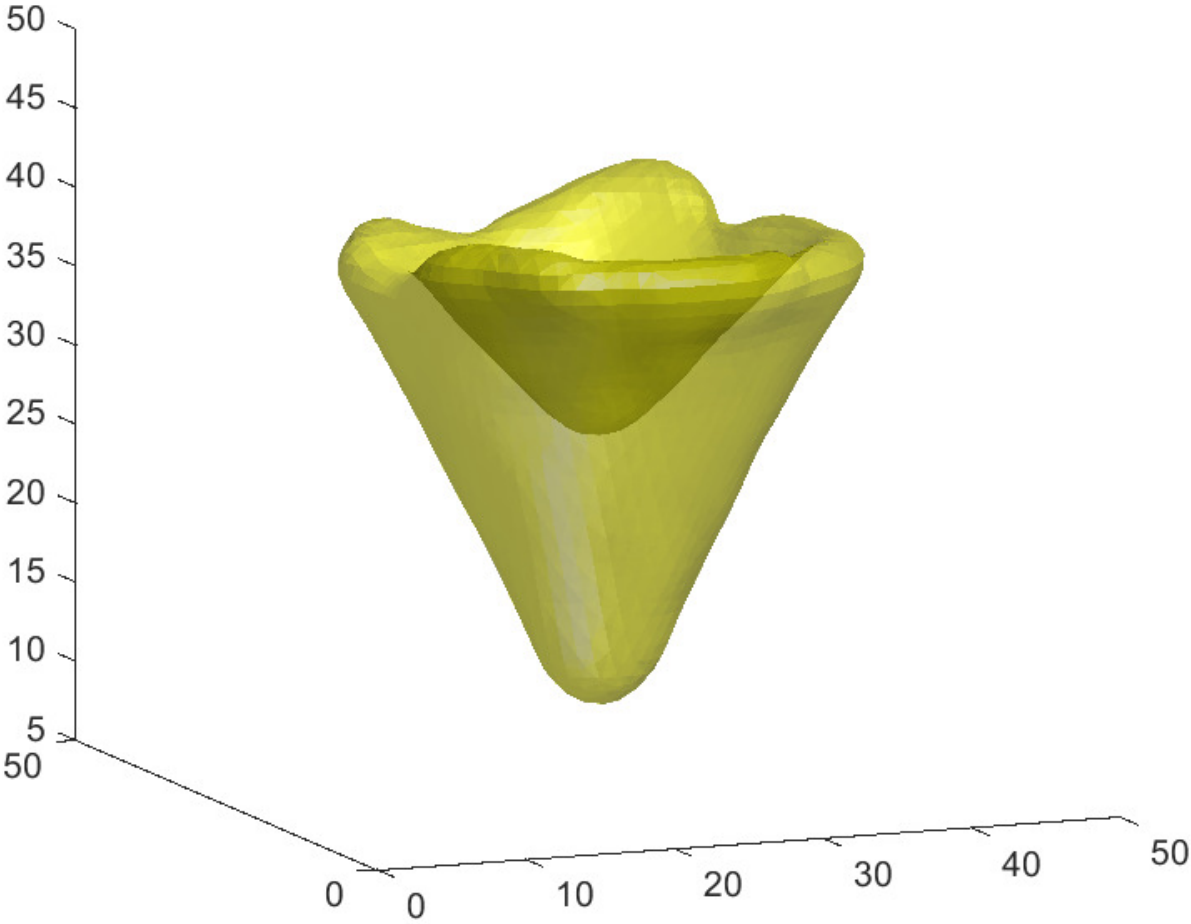}\\
  (c) & (d)\\
  \includegraphics[width=0.4\textwidth]{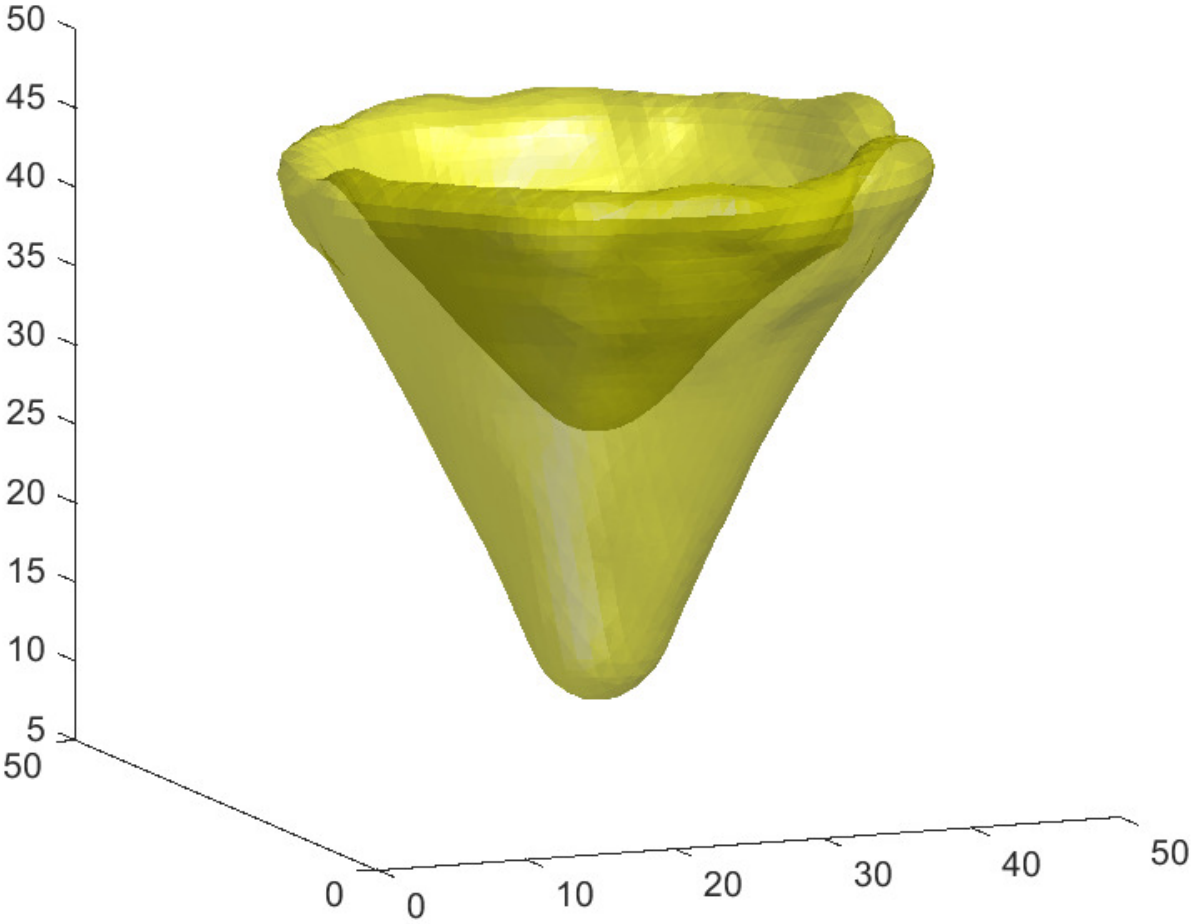}&
  \includegraphics[width=0.4\textwidth]{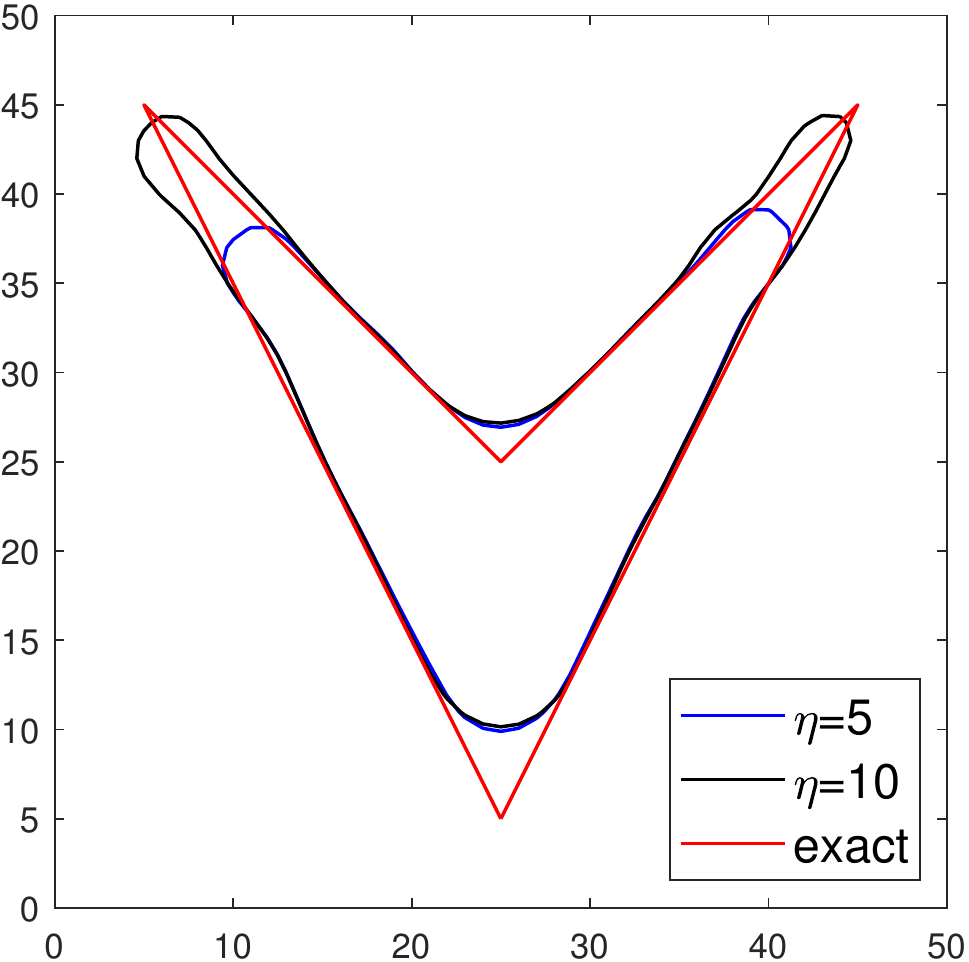}
  \end{tabular}
  \caption{Reconstruction of the ice cream cone by the algorithm from \cite{EZL*12} and OSM with $s=2$: (a) Result by the algorithm proposed in \cite{EZL*12} with $r_1=r_2=8, r_3=r_4=3$. (b)
  Result by OSM with $\eta=5$. (c) Result by OSM with $\eta=10$. (c) Comparison of cross sections of results by OSM along $y=25$.}\label{fig.ice}
\end{figure}

\section{Analytical Aspects of the Model} \label{sec:anal}

We consider the first variation of each term of the functional (\ref{eq.energy}).
The first variation of (\ref{eq.energy}) when $\eta=0$ is~\cite{zhao2000implicit}:
\begin{align}
\frac{1}{s}\big(\int_\Gamma d^s(\x)\,d\sigma\big)^{1/s-1}(sd^{s-1}\nabla d\cdot\mathbf{n}+d^s\kappa)\;,\label{eq.vard}
\end{align}
which shows the interaction between the data-dependent driving force, $d$, and the shape geometric feature, $\kappa$.  When $\Gamma$ is close to the point cloud, i.e., $d$ is small,  the shape of $\Gamma$ becomes flexible, i.e., $\kappa$ can be large. When $\Gamma$ is away from the point cloud, i.e., $d$ is large,  the shape of $\Gamma$ becomes rigid, and  $\kappa$ must be small.
For the regularization term, the effect of the mean-curvature $\kappa$ of $\Gamma$ is adjusted by the surface area. Notice that this term only focuses on the geometry of $\Gamma$, and the point cloud data
has no influence.
The first variation of the functional $\big(\int_\Gamma|\kappa(\mathbf{x})|^s\,d\sigma\big)^{1/s}$ can be derived as~\cite{droske2010higher}:
\begin{align}
\frac{1}{s}\big(\int_\Gamma\kappa(\mathbf{x})^s\,d\sigma\big)^{1/s-1}\times\begin{cases}
\text{div}_\Gamma(\delta(\kappa)\nabla_\Gamma\kappa)+\text{sign}(\kappa)|W|^2-\kappa|\kappa|&s= 1\\
\text{div}_\Gamma(|\kappa|^{s-2}s(s-1)\nabla_\Gamma\kappa)+s\kappa|\kappa|^{s-2}|W|^2-\kappa|\kappa|^s&s\geq 2.
\end{cases}	\label{eq.vark}
\end{align}
Here $\nabla_\Gamma$ is the tangent component of the gradient, and $\text{div}_{\Gamma}$ is its dual operator;  $W$ is the Weingarten map of $\Gamma$, and $|W|$ equals the  Gaussian curvature if $\Gamma$ is a 2D surface. Compared to (\ref{eq.vard}), the first variation related to the regularization term (\ref{eq.vark}) is more complicated. While the first variation (\ref{eq.vard}) connects the distance and curvature,  (\ref{eq.vark}) only depends on the geometric feature of the surface $\Gamma$.

When $s=1$ in the model (\ref{eq.energy}), we compare the cases with $\eta=0$ and $\eta\neq 0$.  We first note that $\kappa$ of the minimizer can not be constantly zero, since there is no compact minimal surface. When $\eta = 0$, i.e., without curvature constraint, a minimizer of (\ref{eq.energy}) satisfies the necessary condition: $\nabla d\cdot \mathbf{n}+d\kappa=\nabla\cdot(d\mathbf{n})=0$;  when $\eta\neq 0$, i.e., with the curvature constraint,  the optimality condition becomes  $\nabla\cdot(d\mathbf{n})+\eta[\text{div}_\Gamma(\delta(\kappa)\nabla_\Gamma\kappa)+ \text{sign}(\kappa) |W|^2-|\kappa|\kappa]=0$.   On the open subset of the minimizer where $\kappa>0$, this condition becomes $\nabla\cdot(d\mathbf{n})=\eta[\kappa^2-|W|^2]$, while on the region where $\kappa<0$, it is $\nabla\cdot(d\mathbf{n})=\eta[\kappa^2+|W|^2]$. Hence, the curvature regularization modifies the distance weighted area of the minimizing surface depending on the  local concavity/convexity and the Gaussian curvature. These modifications introduce more flexibility when fitting the point cloud, and our experiments show that they can help to improve reconstruction results.

When $s=2$, the first variation of the curvature regularization term $\left(\int_\Gamma\kappa^2 d\,\sigma\right)^{1/2}$ is
\begin{align}
\left(\int_\Gamma\kappa(\x)^2\,d\sigma\right)^{-1/2}\left(\Delta_\Gamma\kappa+\kappa G^2-\kappa^3/2\right)\;,\label{eq.vark2}
\end{align}
where $\Delta_\Gamma$ is the Laplace-Beltrami operator, and $G$ is the Gaussian curvature. Since (\ref{eq.vark2}) contains $\Delta_\Gamma\kappa$, we expect to see that our model with $s=2$ will be influenced by the locally averaged mean curvature, which leads to smoothing effects. Here we show this model's behavior in the following special case.
\begin{prop}\label{prop.s2}
Suppose the point cloud is sampled from a smooth closed surface $\Gamma$ with mean curvature $\kappa$ and Gaussian curvature $G$ satisfying:
\begin{align}
\kappa(G^2-\kappa^2/2)=0\;.\label{eq.cond1}
\end{align}
If the point cloud is sufficiently dense, i.e. the computed $d$ is very close to the exact distance function, then $\Gamma$ is a minimizer of (\ref{eq.energy}) only if it is a sphere of radius $\sqrt{2}$.
\end{prop}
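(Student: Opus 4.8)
The plan is to characterize $\Gamma$ through the first-order optimality condition for (\ref{eq.energy}) with $s=2$, and then to use the hypothesis (\ref{eq.cond1}) to collapse that condition into a single geometric equation whose closed solutions are spheres. First I would write the Euler--Lagrange equation of $E_2$ as the sum of the distance-term variation and the curvature-term variation. The distance-term variation comes from (\ref{eq.vard}) with $s=2$ and is proportional to $2d\,\nabla d\cdot\mathbf{n}+d^2\kappa$, while the curvature-term variation is (\ref{eq.vark2}), proportional to $\Delta_\Gamma\kappa+\kappa G^2-\kappa^3/2$. Since $\Gamma$ is assumed to be a minimizer, the total variation must vanish.

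The first key reduction uses the density hypothesis. When the point cloud is dense enough that $d$ agrees with the exact distance to $\Gamma$, we have $d\equiv 0$ on $\Gamma$, so every term in the distance-term variation carries a factor of $d$ and vanishes on $\Gamma$. The hard part here is the normalization: the prefactor $\big(\int_\Gamma d^2\,d\sigma\big)^{-1/2}$ in (\ref{eq.vard}) is singular precisely because $d\equiv0$ on $\Gamma$, producing a $0\cdot\infty$ indeterminacy. To bypass this I would work with the unnormalized energy $\int_\Gamma d^2\,d\sigma+\tilde\eta\int_\Gamma\kappa^2\,d\sigma$, which is legitimate by the Remark following (\ref{eq.energy}) (removing the $1/s$ power preserves the minimizer for a rescaled weight). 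For this functional the distance term contributes $\int_\Gamma(2d\,\nabla d\cdot\mathbf{n}+d^2\kappa)\,V\,d\sigma$ under a normal perturbation $V$, which is genuinely zero when $d\equiv0$ on $\Gamma$, with no singular prefactor. Hence optimality reduces to the pointwise equation $\Delta_\Gamma\kappa+\kappa G^2-\kappa^3/2=0$ on $\Gamma$.

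Next I would invoke the hypothesis (\ref{eq.cond1}), $\kappa(G^2-\kappa^2/2)=0$, i.e. $\kappa G^2-\kappa^3/2=0$. Subtracting this from the optimality equation leaves $\Delta_\Gamma\kappa=0$ on the closed surface $\Gamma$. Multiplying by $\kappa$ and integrating by parts (no boundary terms, since $\Gamma$ is closed) gives $\int_\Gamma|\nabla_\Gamma\kappa|^2\,d\sigma=0$, so $\kappa$ is constant. Because there is no compact minimal surface we have $\kappa\neq0$, and then (\ref{eq.cond1}) forces $G^2=\kappa^2/2$ to be constant as well. A connected closed surface in $\mathbb{R}^3$ whose mean and Gaussian curvatures are both constant has constant principal curvatures and must be a round sphere (equivalently, constant $\kappa$ already makes $\Gamma$ a closed CMC surface, so Alexandrov's theorem applies).

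Finally, substituting the sphere's curvature values into (\ref{eq.cond1}) fixes the radius: for a sphere of radius $R$ the principal curvatures are $1/R$, so $G=1/R^2$ and $\kappa^2/2=1/(2R^2)$, and imposing $G^2=\kappa^2/2$ yields $1/R^4=1/(2R^2)$, hence $R=\sqrt2$. I expect the genuine obstacle to be the careful treatment of the distance-term variation at $d\equiv0$ (the indeterminate normalization, resolved by passing to the unnormalized functional) together with the bookkeeping of curvature conventions, since the value $\sqrt2$ is sensitive to whether $\kappa$ denotes the averaged mean curvature and $G$ the Gaussian curvature $\kappa_1\kappa_2$; everything downstream is a standard compactness and harmonic-function argument.
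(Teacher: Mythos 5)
Your proof is correct and follows essentially the same route as the paper's: the distance-term variation vanishes because $d\equiv 0$ on $\Gamma$, condition (\ref{eq.cond1}) collapses the optimality condition to $\Delta_\Gamma\kappa=0$, harmonicity on a closed surface forces $\kappa$ to be a nonzero constant, hence $G$ is constant, $\Gamma$ must be a sphere, and matching $G^2=\kappa^2/2$ gives radius $\sqrt{2}$. The only difference is added care on two points the paper glosses over --- the $0\cdot\infty$ indeterminacy in the normalized prefactor (the paper simply asserts that (\ref{eq.vard}) ``degenerates to $0$''), which you resolve via the unnormalized energy, and the rigidity step from constant $\kappa$ and $G$ to the round sphere --- refinements of the same argument rather than a different one.
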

\begin{proof}
Since $\Gamma$ passes through the point cloud, (\ref{eq.vard}) degenerates to $0$. Moreover,  by (\ref{eq.cond1}) together with (\ref{eq.vark}), the necessary condition for $\Gamma$ being a minimizer is that $\Delta_\Gamma\kappa=0$. Because $\Gamma$ is closed, $\kappa$ is a non-zero constant. By (\ref{eq.cond1}), this implies that $G$ is also constant; hence, we know that  $\Gamma$ can only be a sphere. Finally, since $G=1/r^2$ and $\kappa=1/r$ with $r$ the radius of the sphere, we can solve for the radius of $\Gamma$, which is $\sqrt{2}$.
\end{proof}
The curvature regularization term $\big(\int_\Gamma\kappa^2 d\,\sigma\big)^{1/2}$  inflates the membrane supported by the point cloud.  Mylar balloon~\cite{mladenov2007mylar}, which resembles slightly flattened sphere, satisfies the condition (\ref{eq.cond1}).  Proposition~\ref{prop.s2}  claims that, if the point cloud lies on a Mylar balloon,  the minimizer of the functional (\ref{eq.energy})  deviates from the underlying surface. 

The following result shows a two dimensional example  where the object is a circle, denoted by $C_0$, which is centered at the origin with radius $r_0$.   It is fair to assume that a local minimizer of $E_s(\Gamma)$ is a circle, denoted by $C$, with the same center and radius close to $r_0$. Denote the radius of $C$ by $r$. Then we have the following proposition:
\begin{prop}
  Under the setting of the above example, for $s=1$, $r=r_0$ is a local minimizer of $E_1(C)$ for any $\eta$. For $s=2$, $r=r_0$ is a local minimizer of $E_2(C)$ if $\eta\leq2r_0$ and $r=(r_0+\sqrt{r_0^2+12\eta})/6$ is a local minimizer if $\eta>2r_0$.
\end{prop}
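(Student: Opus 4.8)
The plan is to collapse the infinite-dimensional minimization to a single-variable calculus problem, using that both the competitor $C$ and the target $C_0$ are concentric circles. First I would write $E_s(C)$ explicitly as a function of the scalar radius $r$. On $C$ the mean curvature is constant, $\kappa\equiv 1/r$; the arclength element is $d\sigma=r\,d\theta$, so the total length is $2\pi r$; and because the point cloud is dense on $C_0$ and $C$ is concentric, the distance is constant, $d(\x)\equiv|r-r_0|$. Substituting into (\ref{eq.energy}) yields
\[
E_s(C)=(2\pi r)^{1/s}\,|r-r_0|+\eta\,(2\pi)^{1/s}\,r^{1/s-1}=:(2\pi)^{1/s}F_s(r),
\]
so $r$ is a local minimizer of $E_s(C)$ iff it is one of the scalar function $F_s(r)=r^{1/s}|r-r_0|+\eta\,r^{1/s-1}$. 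The absolute value creates a kink at $r=r_0$, so throughout I would treat the left branch ($r<r_0$) and right branch ($r>r_0$) separately and match one-sided derivatives at $r_0$.

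For $s=1$ the curvature contribution $\eta\,r^{1/s-1}=\eta\,r^{0}=\eta$ is a constant, hence $F_1(r)=r|r-r_0|+\eta$ and the location of the minimizers does not depend on $\eta$. It then suffices to note that $r|r-r_0|$ vanishes at $r_0$ and is strictly positive nearby on both sides; equivalently, the one-sided derivatives of $F_1$ at $r_0$ are $-r_0<0$ from the left and $+r_0>0$ from the right, so $r=r_0$ is a strict local minimum for every $\eta$.

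For $s=2$ the curvature term $\eta\,r^{-1/2}$ depends on $r$ and competes with the data term. I would compute the one-sided derivatives of $F_2(r)=r^{1/2}|r-r_0|+\eta\,r^{-1/2}$ at $r_0$: the left derivative equals $-r_0^{1/2}-\tfrac12\eta\,r_0^{-3/2}$, negative for all $\eta\ge 0$, while the right derivative equals $r_0^{1/2}-\tfrac12\eta\,r_0^{-3/2}$. The sign of the latter is the crux: it stays nonnegative up to a threshold value of $\eta$ (obtained by setting this expression to zero), and in that regime $r=r_0$ remains a local minimizer; past the threshold the right derivative turns negative, $r_0$ ceases to be a minimizer, and the minimum migrates into the interior of the branch $r>r_0$. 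There I would set the derivative of $r^{1/2}(r-r_0)+\eta\,r^{-1/2}$ to zero; after clearing the common factor $r^{-3/2}$ this reduces to the quadratic $3r^2-r_0 r-\eta=0$, whose positive root is $r=(r_0+\sqrt{r_0^2+12\eta})/6$, the claimed minimizer.

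The main obstacle is the second-order and global check rather than the stationarity computation. I must confirm that the interior stationary point on the right branch is a genuine local minimum (via the second derivative or the sign change of $F_2'$), and that no competing stationary point exists on the left branch, where the derivative remains strictly negative so that $F_2$ simply decreases toward $r_0$. The threshold itself is pinned down most cleanly by the consistency requirement that the two regimes glue at the boundary: the threshold is exactly the value of $\eta$ for which the interior root $(r_0+\sqrt{r_0^2+12\eta})/6$ collapses to $r_0$, and solving this gluing condition both identifies the critical $\eta$ and cross-checks the boundary between the two cases. The genuinely delicate point throughout is the nonsmoothness at $r_0$: since $F_s$ is not differentiable there, the argument must proceed through one-sided derivatives, and a naive interior critical-point calculation would miss the $s=1$ minimizer entirely.
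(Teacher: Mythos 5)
Your proposal is correct and takes essentially the same route as the paper's proof: reduce to the scalar function $F_s(r)=r^{1/s}|r-r_0|+\eta\,r^{1/s-1}$, compare one-sided derivatives at the kink $r=r_0$ (the constant curvature term disposes of $s=1$ immediately), and on the branch $r>r_0$ clear denominators to get the quadratic $3r^2-r_0r-\eta=0$ whose positive root $(r_0+\sqrt{r_0^2+12\eta})/6$ is the interior minimizer, exactly as in the paper's subdifferential argument. One remark worth recording: if you actually evaluate your threshold (set $r_0^{1/2}-\tfrac{1}{2}\eta\,r_0^{-3/2}=0$, or equivalently impose your gluing condition $(r_0+\sqrt{r_0^2+12\eta})/6=r_0$), you obtain $\eta=2r_0^2$, which shows that the condition ``$\eta\leq 2r_0$'' in the proposition as printed (and the ``$12\eta>r_0$'' in the paper's proof) are typos for $\eta\leq 2r_0^2$ and $12\eta>r_0^2$ respectively; your computation, carried to completion, gives the internally consistent version.
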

\begin{proof}
  Note that for a circle with radius $r$ and the same center as $C_0$, $d=|r-r_0|$ and $\kappa=1/r$. $E_s(C)$ can be written as
\[
    E_s(C)=(2\pi)^{\frac{1}{s}} \left(|r-r_0|r^{\frac{1}{s}} +\eta r^{\frac{1}{s}-1}\right).
\]

  For $s=1$, $E_1(C)=(2\pi)^{\frac{1}{s}} \left(|r-r_0|r^{\frac{1}{s}} +\eta\right)$ of which $C_0$ is a local minimizer for any $\eta$.

  For $s=2$, $E_2(C)=(2\pi)^{\frac{1}{2}} \left(|r-r_0|r^{\frac{1}{2}} +\eta r^{-\frac{1}{2}}\right)$ whose subdifferential is
\[
  \partial E_2(C)=
    \begin{cases}
        (2\pi)^{\frac{1}{2}}\left(\frac{r_0}{2}r^{-\frac{1}{2}} -\frac{3}{2}r^{\frac{1}{2}} -\frac{\eta}{2}r^{-\frac{3}{2}}\right), & \mbox{ for } r<r_0,\\
        (2\pi)^{\frac{1}{2}}\left(\frac{3}{2}r^{\frac{1}{2}}-\frac{r_0}{2}r^{-\frac{1}{2}}-\frac{\eta}{2}r^{-\frac{3}{2}}\right), & \mbox{ for } r>r_0.
    \end{cases}
\]

  For $r<r_0$, it can be easily shown that if $12\eta\leq r_0^2$ and $(r_0+\sqrt{r_0^2-12\eta})/6<r<r_0$, $\partial E_2(C)<0$. If $12\eta>r_0$, $\partial E_2(C)<0$ for any $r<r_0$. In other words, if $r<r_0$ and $r$ is sufficiently close to $r_0$, $\partial E_2(C)<0$.

  For $r>r_0$, if $\eta\leq2r_0$, $\partial E_2(C)>0$ and thus $r=r_0$ is a local minimizer of $E_2(C)$. If $\eta>2r_0$, then $\partial E_2(C)<0$ for $r_0<r<(r_0+\sqrt{r_0^2+12\eta})/6$ and $\partial E_2(C)>0$ for $r>(r_0+\sqrt{r_0^2+12\eta})/6$. Thus $r=(r_0+\sqrt{r_0^2+12\eta})/6$ is a local minimizer of $E_2(C)$.
\end{proof}

These properties show that the minimizer of the model (\ref{eq.energy}) is not easy to be analyzed even in a simple case such as a circle, and the results heavily depend on the combination of $d$ and $\kappa$.   

\section{Conclusion}\label{sec:con}

In this paper, we explored the surface reconstruction models based on point cloud data with curvature constraints. The proposed model combines the distance from surface to the point cloud with a global curvature regularization. Introducing this high-order geometric information allows us to impose geometric features around the corners and to reconstruct concave  features of the point cloud better.
We find that the interactions between two terms of the functional are subtle.  For example, for $s=1$, since it allows sharp corners,  it  may be more relaxed around the corners and give shorter length reconstruction.  For the curvature term in model (\ref{eq.energy}), larger $s$ gives more weight to the part of the surface which has large curvature, i.e., corners. As a result, the corners of the reconstructed surface are smoothed and extruded out a little bit.
For a fast computation, instead of directly solving the complicated terms in its Euler-Lagrange equations, we use a new operator splitting strategy and minimize the energy by a semi-implicit scheme.  We also explore an augmented Lagrangian method, which has the advantage of having less parameters compared to other ADMM approaches.  Both methods produce reliable results in many cases, including those where the point cloud is noisy or sparse.
Comparison between OSM and ALM shows advances of OSM in flexibility, stability, and efficiency.   Comparing the results of OSM using $s=1$ and $s=2$, we find that OSM with $s=2$ provides better results when reconstructing features of the point clouds.
There are a number of extensions to be considered, including different curvature constraints and using additional information such as surface normal directions to facilitate the reconstruction.  Applications to segmentation and image inpainting can also be considered.

\section*{Acknowledgment}
The authors would like to thank Dr.Martin Huska at University of Bologna, Italy for the valuable discussions exploring different approaches of fast algorithm for high order functionals.

\bibliographystyle{abbrv}
\bibliography{cite_surfaceCurv}

\end{document}